\documentclass{article} 
\usepackage[table]{xcolor}
\usepackage{eso-pic} 
\usepackage{fancyhdr}
\usepackage{hyperref}
\usepackage{url}
\usepackage{microtype}
\usepackage{graphicx}
\usepackage{subcaption}
\usepackage{booktabs} 
\usepackage{amsmath}
\usepackage{amsthm}
\usepackage{amssymb}
\usepackage{algorithmic}
\usepackage{eso-pic}
\usepackage{forloop}
\usepackage{natbib}
\usepackage{comment}
\usepackage{mathtools}
\usepackage{thmtools, thm-restate}
\usepackage{multirow}
\usepackage{bigdelim}
\usepackage{makecell}
\usepackage[shortlabels]{enumitem}
\usepackage{cases}
\usepackage[accepted]{icml2020}
\newif\ifcomment
\commenttrue

\newcommand{\pseudoy}[1]{f_{\stdest}(\tilde{x}_{#1})}

\newcommand{\distribx}{P_\mathsf{x}}
\newcommand{\distriby}[1]{P_\mathsf{y}(\cdot \mid #1)}
\newcommand{\distribxy}{P_\mathsf{xy}}
\newcommand{\xstd}{X_\text{std}}
\newcommand{\ystd}{y_\text{std}}

\newcommand{\stderr}{L_{\text{std}}}
\newcommand{\roberr}{L_{\text{rob}}}

\newcommand{\xext}{X_\text{ext}}
\newcommand{\yext}{y_\text{ext}}
\newcommand{\minmat}{M}
\newcommand{\stdest}{\hat{\theta}_\text{std}}
\newcommand{\augest}{\hat{\theta}_\text{aug}}
\newcommand{\ext}{\text{ext}}
\newcommand{\rest}{\text{rest}}

\newcommand{\cifar}{\textsc{Cifar-10}}


\newcommand{\thetatrue}{\theta^{\star}}
\newcommand{\sigmapop}{\Sigma}
\newcommand{\sigmastd}{\Sigma_\text{std}}
\newcommand{\sigmaaug}{\Sigma_\text{aug}}
\newcommand{\sigmaext}{\Sigma_\text{ext}}

\newcommand{\pistd}{\Pi_{\text{std}}^\perp}
\newcommand{\piaug}{\Pi_{\text{aug}}^\perp}
\newcommand{\piparastd}{\Pi_{\text{std}}}
\newcommand{\piparaaug}{\Pi_{\text{aug}}}

\newcommand{\Null}{\text{Null}}
\newcommand{\Col}{\text{Col}}
\newcommand{\singleaug}{x_{\text{ext}}}
\newcommand{\deltastd}{\Delta_\text{std}}
\newcommand{\deltaaug}{\Delta_\text{aug}}

\newcommand{\thetarst}{\hat{\theta}_\text{rst}}

\newcommand{\thetainterp}{\theta_{\textup{int-std}}}
\DeclareMathOperator*{\argmin}{arg\,min}
\DeclareMathOperator*{\argmax}{arg\,max}


\newcommand{\defeq}{:=}
\newcommand{\xadv}{x_\text{adv}}
\newcommand{\linspan}{\mathop{\rm span}}
\newcommand{\opt}{^\star}
\newcommand{\norm}[1]{\left\|{#1}\right\|}

\newcommand{\supp}{\text{supp}}

\newcommand{\ellrob}{\ell_\text{rob}}
\newcommand{\Lsl}{\hat{L}_{\text{std-lab}}}
\newcommand{\Lsu}{\hat{L}_{\text{std-unlab}}}
\newcommand{\Lrl}{\hat{L}_{\text{rob-lab}}}
\newcommand{\Lru}{\hat{L}_{\text{rob-unlab}}}
\newcommand{\pLsu}{L_{\text{std-unlab}}}
\newcommand{\pLru}{L_{\text{rob-unlab}}}


\newcommand{\tline}{\sT_\text{line}}
\newcommand{\xline}{\sX_\text{line}}
\newcommand\uarsim{\stackrel{\mathclap{u.a.r}}{\sim}}
\newcommand{\pilocalglobal}{\Pi_{\text{lg}}}

\newcommand{\xone}{\mathbf{x_1}}
\newcommand{\xtwo}{\mathbf{x_2}}
\newcommand{\xthree}{\mathbf{x_3}}

\newlength{\widebarargwidth}
\newlength{\widebarargheight}
\newlength{\widebarargdepth}


\newcommand\sT{\ensuremath{\mathcal{T}}}

\newcommand\sX{\ensuremath{\mathcal{X}}}
\newcommand\sY{\ensuremath{\mathcal{Y}}}

\newcommand\bx{\ensuremath{\mathbf{x}}}



\DeclareMathOperator*{\diag}{diag} 

\newcommand\R{\ensuremath{\mathbb{R}}} 
\newcommand\eqdef{\ensuremath{\stackrel{\rm def}{=}}} 
\newcommand\refeqn[1]{(\ref{eqn:#1})}

\ifthenelse{\isundefined{\definition}}{}{}
\ifthenelse{\isundefined{\assumption}}{}{}
\ifthenelse{\isundefined{\hypothesis}}{}{}
\ifthenelse{\isundefined{\proposition}}{\newtheorem{proposition}{Proposition}}{}
\ifthenelse{\isundefined{\theorem}}{\newtheorem{theorem}{Theorem}}{}
\ifthenelse{\isundefined{\lemma}}{\newtheorem{lemma}{Lemma}}{}
\ifthenelse{\isundefined{\corollary}}{\newtheorem{corollary}{Corollary}}{}
\ifthenelse{\isundefined{\alg}}{}{}
\ifthenelse{\isundefined{\example}}{}{}
\newcommand{\E}{\ensuremath{\mathbb{E}}} 

\icmltitlerunning{Understanding and Mitigating the Tradeoff Between Robustness and Accuracy}

\begin{document}

\twocolumn[
  \icmltitle{Understanding and Mitigating the Tradeoff Between Robustness and Accuracy}

\icmlsetsymbol{equal}{\textbf{*}}

\begin{icmlauthorlist}
\icmlauthor{Aditi Raghunathan}{equal,stan}
\icmlauthor{Sang Michael Xie}{equal,stan}
\icmlauthor{Fanny Yang}{eth}
\icmlauthor{John C. Duchi}{stan}
\icmlauthor{Percy Liang}{stan}
\end{icmlauthorlist}

\icmlaffiliation{stan}{Stanford University}
\icmlaffiliation{eth}{ETH Zurich}
\icmlcorrespondingauthor{Aditi Raghunathan}{aditir@stanford.edu}
\icmlcorrespondingauthor{Sang Michael Xie}{xie@cs.stanford.edu}

\icmlkeywords{Machine Learning, ICML}

\vskip 0.3in
]

\printAffiliationsAndNotice{\icmlEqualContribution} 

\begin{abstract}
Adversarial training augments the training set with perturbations to improve the robust
error (over worst-case perturbations), but it often leads to an increase in the standard error (on unperturbed test inputs). Previous explanations for this tradeoff rely on the assumption that no predictor in the hypothesis class has low standard and robust error. In this work, we precisely characterize the effect of augmentation on the standard error in linear regression when the optimal linear predictor has zero standard and robust error. In particular, we show that the standard error could increase even when the augmented perturbations have noiseless observations from the optimal linear predictor. We then prove that the recently proposed robust self-training (RST) estimator improves robust error without sacrificing standard error for noiseless linear regression. Empirically, for neural networks, we find that RST with different adversarial training methods improves both standard and robust error for random and adversarial rotations and adversarial $\ell_\infty$ perturbations in CIFAR-10.
\end{abstract}

\section{Introduction}
\label{sec:intro}

Adversarial training methods \citep{goodfellow2015explaining,madry2017towards}
attempt to improve the robustness of neural networks against adversarial
examples \citep{szegedy2014intriguing}
by augmenting the training set (on-the-fly) with perturbed examples that
preserve the label but that fool the current model.
While such methods decrease the \emph{robust error}, the error on worst-case
perturbed inputs, they have been observed to cause an undesirable increase in
the \emph{standard error}, the error on unperturbed inputs~\citep{madry2018towards, zhang2019theoretically, tsipras2019robustness}.

\begin{figure}
  \begin{minipage}[c]{0.45\linewidth}
    \includegraphics[width=\linewidth]{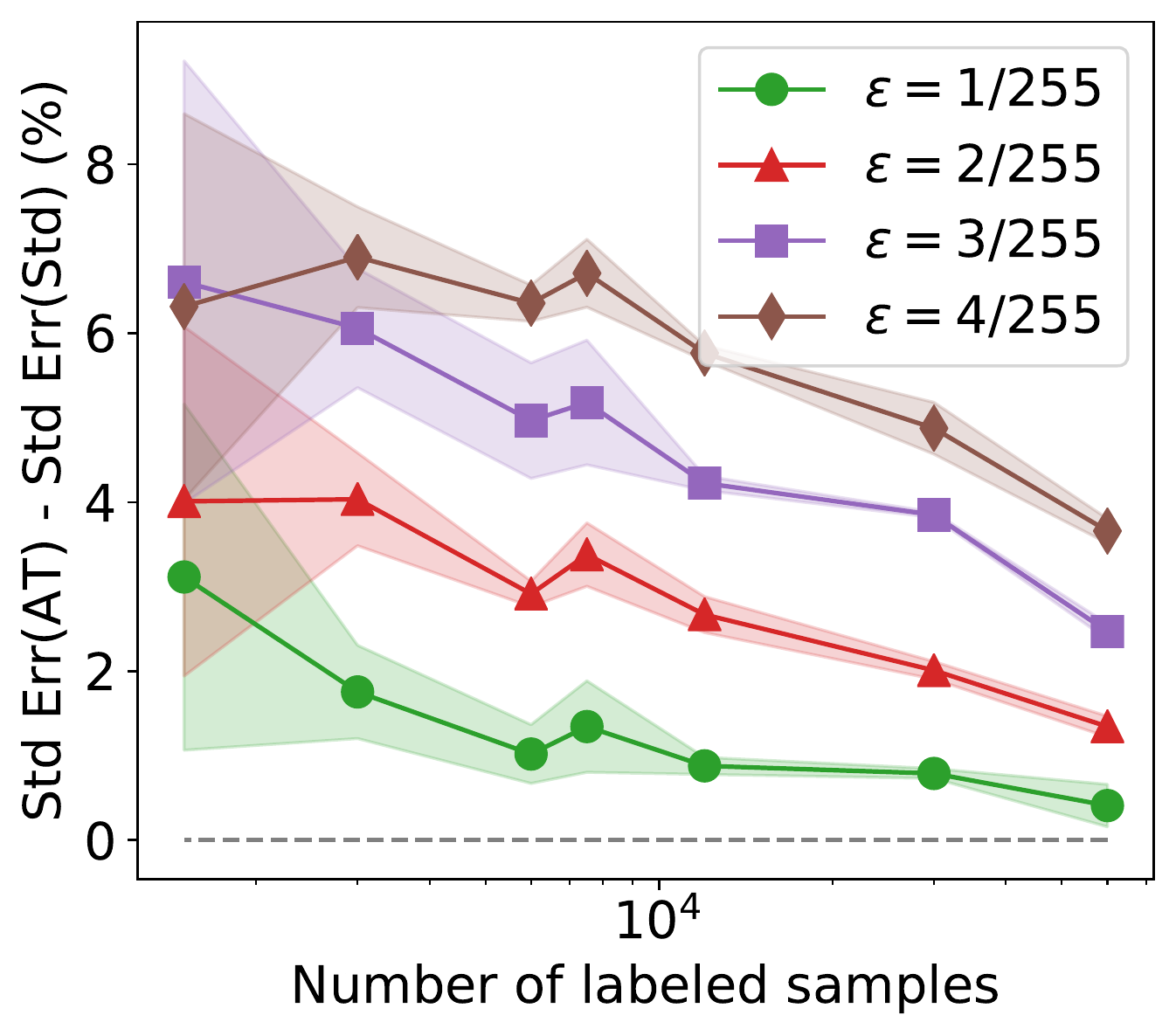}
  \end{minipage}\hfill
  \begin{minipage}[c]{0.54\linewidth}
    \caption{
      Gap between the standard error of adversarial trainning~\citep{madry2018towards} with $\ell_\infty$ perturbations, and standard training. The gap decreases with increase in training set size, suggesting that the tradeoff between standard and robust error should disappear with infinite data. 
    } \label{fig:sample-size}
  \end{minipage}
  \vspace{-10pt}
\end{figure}

Previous works attempt to explain the tradeoff between standard error
and robust error in two settings: when no accurate classifier is
\emph{consistent} with the perturbed
data~\citep{tsipras2019robustness, zhang2019theoretically,
  fawzi2018analysis}, and when the hypothesis class is not expressive
enough to contain the true classifier~\citep{nakkiran2019adversarial}.
In both cases, the tradeoff persists even with infinite data.
However, adversarial perturbations in practice are typically defined to be
imperceptible to humans (e.g. small $\ell_\infty$ perturbations in
vision). Hence by definition, there exists a classifier (the human)
that is both robust and accurate with no tradeoff in
the infinite data limit. Furthermore, since deep neural networks are
expressive enough to fit not only adversarial but also randomly
labeled data perfectly ~\citep{zhang2017understanding}, the explanation
of a restricted hypothesis class does not perfectly capture empirical
observations either. Empirically on~\cifar, we find that the gap between the standard error of adversarial training
and standard training decreases as we increase the labeled data size, thereby also suggesting the tradeoff could disappear with infinite data (See Figure~\ref{fig:sample-size}). 

\begin{figure*}[t]
  \centering
  \begin{subfigure}{.32\linewidth}
    \centering
    \includegraphics[scale=0.34]{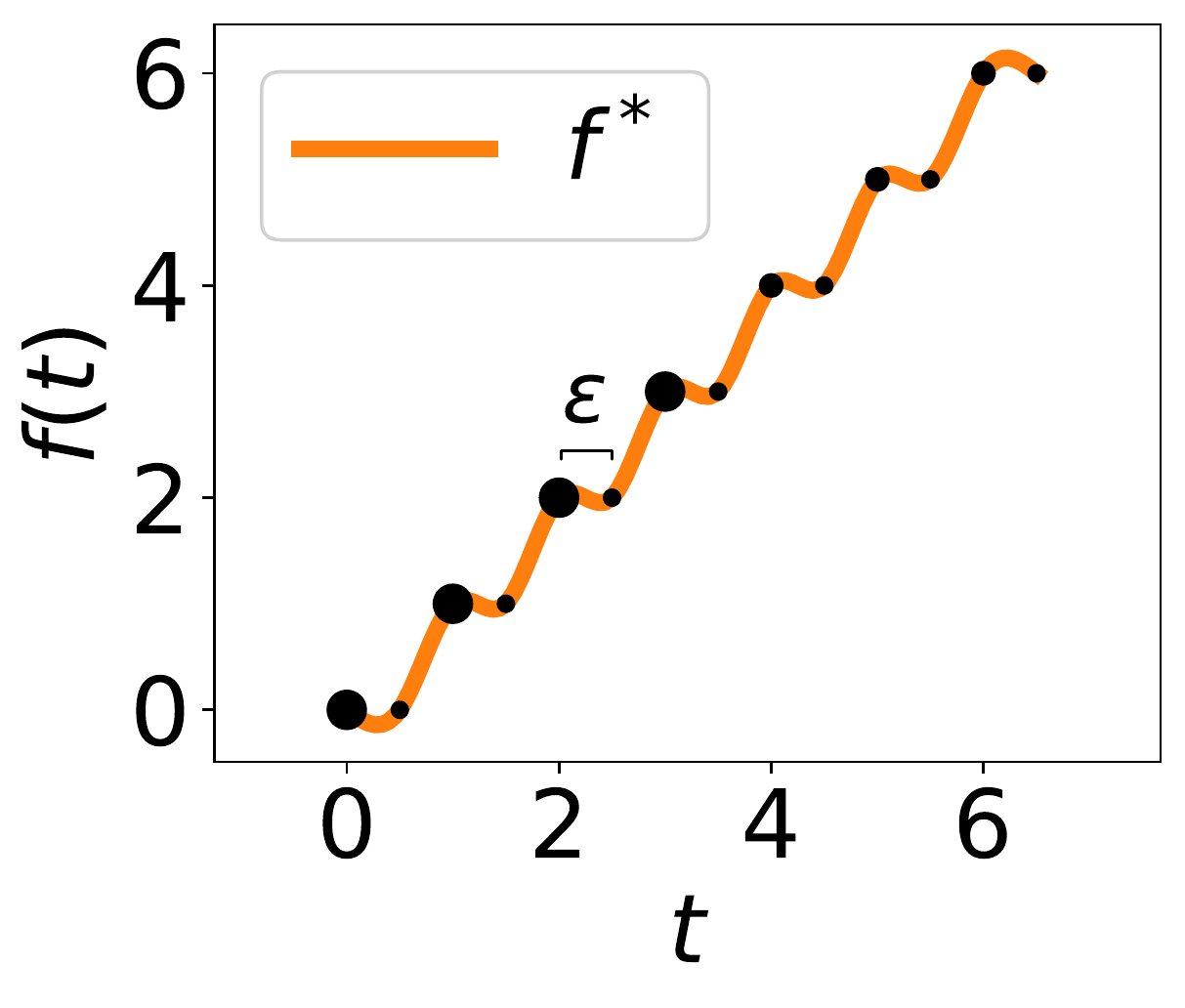}
  \end{subfigure}
  \begin{subfigure}{.32\linewidth}
    \centering
    \includegraphics[scale=0.4]{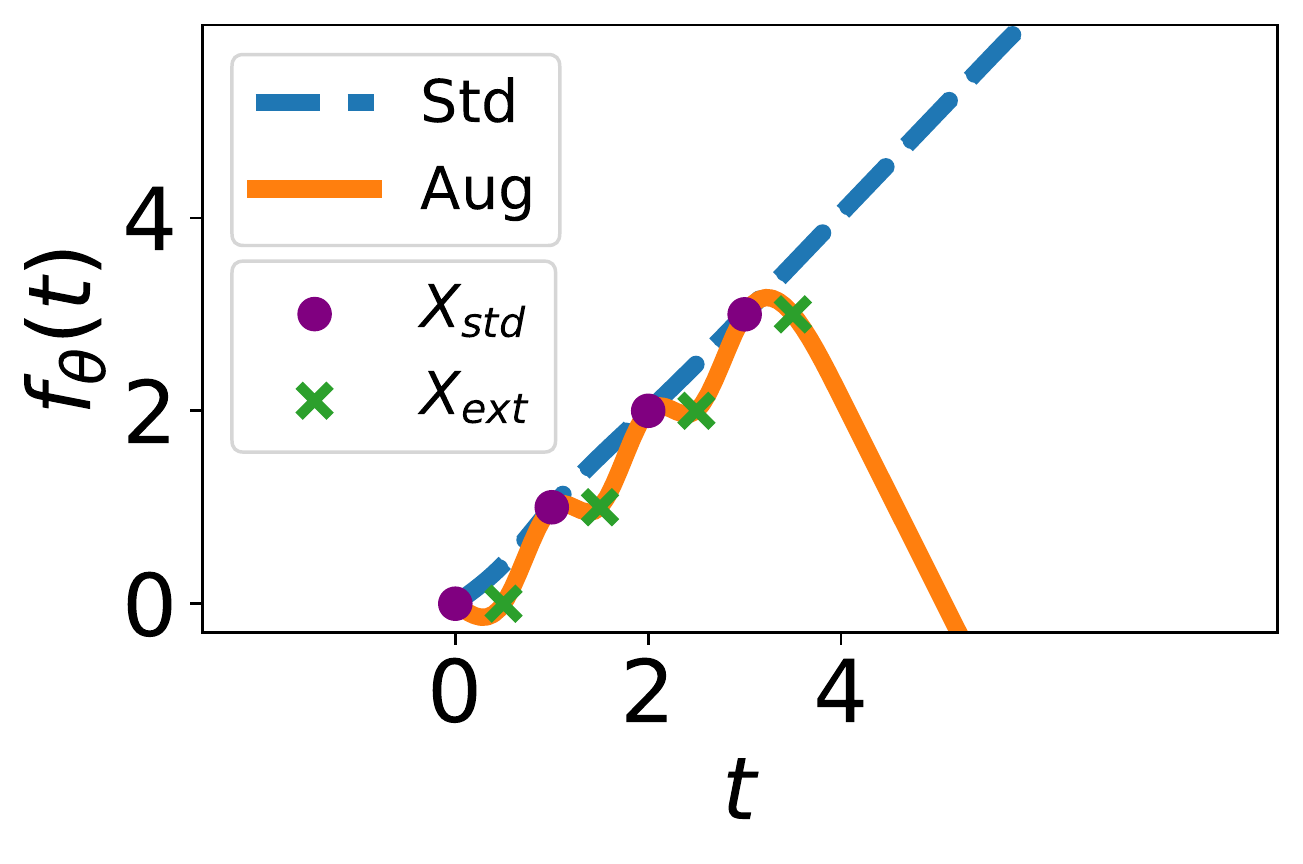}
  \end{subfigure}
  \begin{subfigure}{.32\linewidth}
    \centering
    \includegraphics[scale=0.4]{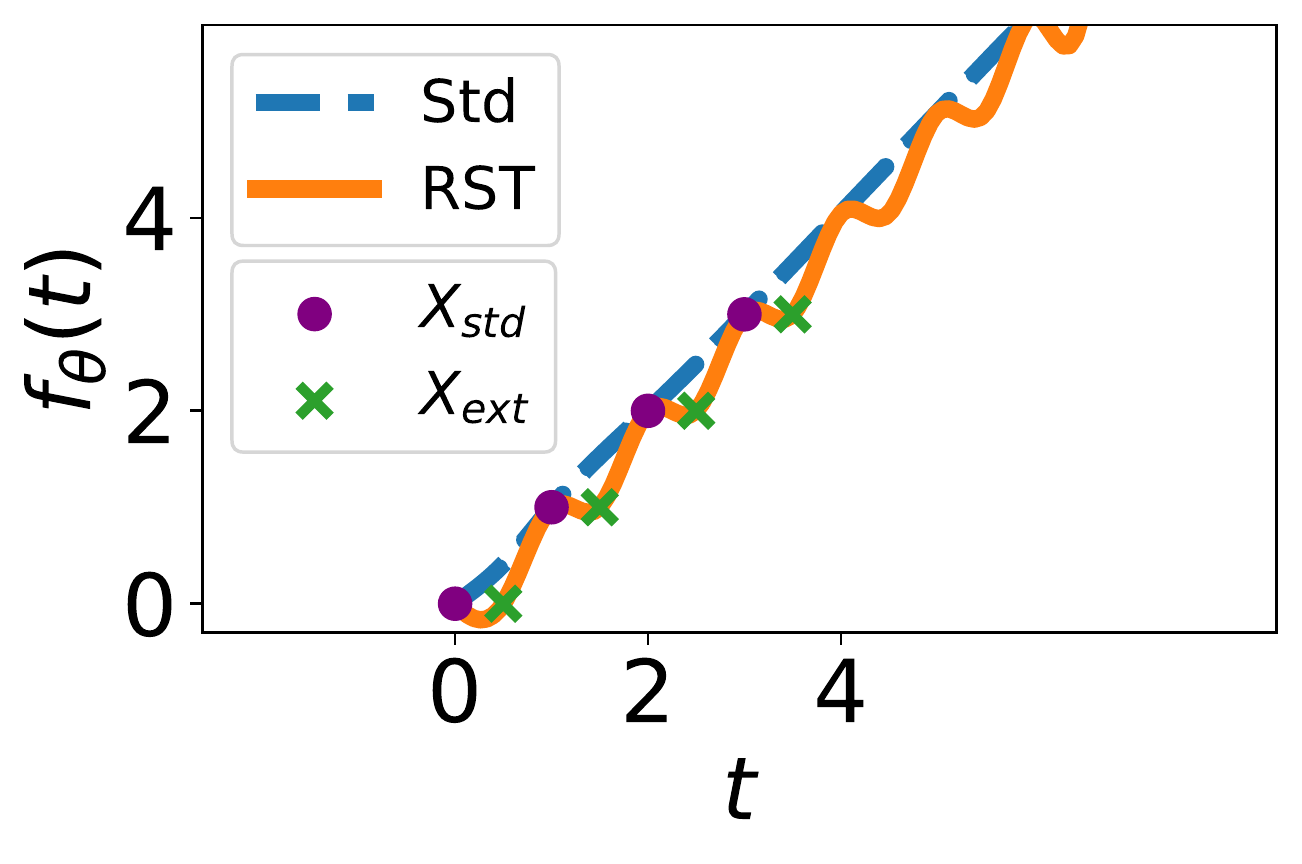}
  \end{subfigure}
  \caption{
    We consider function interpolation via cubic splines.
    \textbf{(Left)} The underlying distribution $\distribx$ denoted by sizes of the circles. The true function is a staircase. 
    \textbf{(Middle)} With a small number of standard training samples (purple circles), an augmented estimator that fits local perturbations (green crosses) has a large error. In constrast, the standard estimator that does not fit perturbations is a simple straight line and has small error. 
    \textbf{(Right)} Robust self-training (RST) regularizes the predictions of an augmented estimator towards the predictions of the standard estimator thereby obtaining both small error on test points and their perturbations. }
  \label{fig:spline}
\end{figure*}

In this work, we provide a different explanation for the tradeoff
between standard and robust error that takes \emph{generalization}
from finite data into account. We first consider a linear model where the true linear function
has zero standard and robust error. Adversarial training augments the
original training set with \emph{extra} data, consisting of samples
$(\singleaug, y)$ where the perturbations $\singleaug$ are \emph{consistent},
meaning that the conditional distribution stays constant
$\distriby{\singleaug} = \distriby{x}$.
We show that even in this simple setting, the \emph{augmented estimator}, i.e. the minimum norm
interpolant of the augmented data (standard + extra data),
could have a larger standard error than that of the \emph{standard estimator},
which is the minimum norm interpolant of the standard data alone.
We found this surprising given that adding consistent perturbations
enforces the predictor to satisfy invariances that the true model
exhibits. One might think adding this information would only restrict the hypothesis class and thus enable better generalization, not worse.

We show that this tradeoff stems from overparameterization. If the \emph{restricted} hypothesis class (by
enforcing invariances) is still overparameterized, the inductive bias of the estimation procedure (e.g., the norm being minimized) plays a key role in determining the generalization of a model.

Figure~\ref{fig:spline} shows an illustrative example of this phenomenon
with cubic smoothing splines.
The predictor obtained via standard training (dashed blue) is a line that captures the global
structure and obtains low error. Training on augmented data with
locally consistent perturbations of the training data (crosses)
restricts the hypothesis class by encouraging the predictor to fit the
local structure of the high density points. Within this set,
the cubic splines predictor (solid orange) minimizes the second
derivative on the augmented data, compromising the global structure
and performing badly on the tails (Figure~\ref{fig:spline}(b)).
More generally, as we characterize in Section~\ref{sec:min-norm}, the
tradeoff stems from the inductive bias of the minimum norm
interpolant, which minimizes a fixed norm independent of the data, while the standard error depends on the geometry of the
covariates.

Recent works~\citep{carmon2019unlabeled, najafi2019robustness, uesato2019are} introduced robust self-training (RST), a robust variant of self-training that overcomes the sample complexity barrier of learning a model with low robust error by leveraging extra unlabeled data.  In this paper, our theoretical understanding of the tradeoff between standard and robust error in linear regression motivates RST as a method to improve robust error without sacrificing standard error.  In Section~\ref{sec:linear-rst}, we prove that RST \emph{eliminates} the tradeoff
for linear regression---RST does not increase standard error compared to the standard estimator
while simultaneously achieving the best possible robust error, matching the standard error (see
Figure~\ref{fig:spline}(c) for the effect of RST on the spline problem). Intuitively, RST regularizes the predictions of the robust estimator towards that of the standard estimator on the unlabeled data thereby eliminating the tradeoff. 

As previous works only focus on the empirical evaluation of the gains in robustness via RST, we systematically
evaluate the effect of RST on \emph{both} the standard and robust error on~\cifar~when using unlabeled data from Tiny Images as sourced in~\citet{carmon2019unlabeled}.
We expand upon empirical results in two ways.  First, we study the effect of the labeled training set sizes and and find that the RST improves both robust and standard error over vanilla adversarial training across \emph{all} sample sizes. RST offers maximum gains at smaller sample sizes where vanilla adversarial training increases the standard error the most. Second, we consider an additional family of perturbations over random and adversarial rotation/translations and find that RST offers gains in both robust and standard error.

\section{Setup}
\label{sec:setting}
We consider the problem of learning a mapping from an input $x\in \sX \subseteq \R^d$ to a target $y \in \sY$.
For our theoretical analysis, we focus on regression where $\sY=\R$ while our empirical studies consider general $\sY$.
Let $\distribxy$ be the underlying distribution, $\distribx$ the marginal on the inputs and $\distriby{x}$ the conditional distribution of the targets given inputs.
Given $n$ training pairs $(x_i, y_i) \sim \distribxy$,
we use $\xstd$ to denote the measurement matrix $[x_1, x_2, \hdots x_n]^\top \in \R^{n\times d}$ and $\ystd$ to denote the target vector $[y_1, y_2, \hdots y_n]^\top \in \R^n$.
Our goal is to learn a predictor $f_\theta: \sX \to \sY$ that (i) has low standard error on inputs $x$ and (ii) low robust error with respect to a set of perturbations $T(x)$. Formally, the error metrics for a predictor $f_\theta$ and a loss function $\ell$ are the \emph{standard error}
\begin{align}
  \label{eqn:test-error}
  \stderr(\theta) &= \E_{\distribxy}[\ell(f_{\theta}(x), y)]
\end{align}
and the \emph{robust error}
\begin{align}
  \label{eqn:rob-test-error}
  \roberr(\theta) &= \E_{\distribxy}[\max_{\singleaug \in T(x)} \ell(f_\theta(\singleaug), y)], 
\end{align}
for \emph{consistent perturbations} $T(x)$ that satisfy 
\begin{align}
  \label{eqn:target-preserving}
  \distriby{\singleaug} = \distriby{x}, ~~~\forall \singleaug \in T(x).
\end{align}
Such transformations may consist of small rotations, horizontal flips, brightness or contrast changes~\citep{krizhevsky2012imagenet, yaeger1996effective}, or small $\ell_p$ perturbations in vision~\citep{szegedy2014intriguing, goodfellow2015explaining} or word synonym replacements in NLP~\citep{jia2017adversarial, alzantot2018adversarial}. 

\paragraph{Noiseless linear regression.}
\label{sec:linear-model}
In section~\ref{sec:min-norm}, we analyze noiseless linear regression on inputs $x$ with targets $y = x^\top \thetatrue$ with true parameter $\thetatrue \in \R^k$.\footnote{Our analysis extends naturally to arbitrary feature maps $\phi(x)$.} For linear regression, $\ell$ is the squared loss which leads to the standard error (Equation~\ref{eqn:test-error}) taking the form 
\begin{equation}
    \label{eqn:linear-std-error}
  \stderr(\theta) = \E_{\distribx}[(x^\top \theta - x^\top \theta^\star)^2] = (\theta-\theta^\star)^\top \sigmapop (\theta-\theta^\star),
\end{equation}
where $\sigmapop = \E_{\distribx}[xx^\top]$ is the population covariance.

\paragraph{Minimum norm estimators.} In this work, we focus on interpolating estimators in highly overparameterized models, motivated by modern machine learning models that achieve near zero training loss (on both standard and extra data). Interpolating estimators for linear regression have been studied in many recent works such as~\citep{ma2018power, belkin2018understand, hastie2019surprises, liang2018just, bartlett2019benign}. 
We present our results for interpolating estimators with minimum Euclidean norm, but our analysis directly applies to more
general Mahalanobis norms via suitable reparameterization (see
Appendix~\ref{sec:app-matrix-norms}).

We consider robust training approaches that augment the
\emph{standard} training data $\xstd,\ystd \in \R^{n \times d} \times\R$ with some \emph{extra} training data $\xext,\yext \in \R^{m \times
  d} \times \R$ where the rows of $\xext$ consist of vectors in the
set $\{\singleaug: \singleaug \in T(x), x\in \xstd\}$.\footnote{In practice, $\xext$ is typically generated via iterative
  optimization such as in adversarial training~\citep{madry2018towards}, or by random sampling as in data
  augmentation~\citep{krizhevsky2012imagenet,yaeger1996effective}.}
We call the standard data together with the extra data as \emph{augmented} data. We compare the following min-norm estimators: (i) the \emph{standard estimator} $\stdest$
interpolating $[\xstd, \ystd]$ and (ii) the
\emph{augmented estimator} $\augest$ interpolating $X = [\xstd; \xext], Y =
     [\ystd; \yext]$:
\begin{align}
  \stdest &= \arg \min \limits_{\theta} \Big \{ \| \theta \|_2 : \xstd \theta = \ystd \Big \} \nonumber \\
  \augest &= \arg \min \limits_{\theta} \Big \{ \| \theta \|_2 : \xstd \theta = \ystd, \xext \theta = \yext \Big \}. \label{eqn:est} 
\end{align}

\paragraph{Notation.} For any vector $z \in \R^n$, we use $z_i$ to denote the $i^\text{th}$ coordinate of $z$.

\section{Analysis in the linear regression setting}
\label{sec:min-norm}
In this section, we compare the standard errors of the standard
estimator and the augmented estimator in noiseless linear
regression. We begin with a simple toy example that describes the
intuition behind our results (Section~\ref{sec:simple-3D}) and
provide a more complete characterization in
Section~\ref{sec:general}. This section focuses only on the standard
error of both estimators; we revisit the robust error together with the standard error in
Section~\ref{sec:rst}.
\subsection{Simple illustrative problem}
\label{sec:simple-3D}
We consider a simple example in 3D where $\thetatrue \in \R^3$
is the true parameter.
Let $e_1 = [1, 0, 0]; e_2 = [0, 1, 0]; e_3 = [0, 0, 1]$ denote the standard basis vectors in $\R^3$. Suppose
we have one point in the standard training data $\xstd = [0, 0, 1]$.
By definition~\eqref{eqn:est}, $\stdest$ satisfies $\xstd\stdest = \ystd$ and hence $(\stdest)_3 = \thetatrue_3$.
However, $\stdest$ is unconstrained on the subspace spanned by $e_1, e_2$ (the nullspace $\Null(\xstd)$).
The min-norm objective chooses the solution with $(\stdest)_1 = (\stdest)_2 = 0$.
Figure~\ref{fig:simpleexistence} visualizes the projection of various quantities on $\Null(\xstd)$. For simplicity of presentation, we omit the projection operator in the figure. The projection of $\stdest$ onto $\Null(\xstd)$ is the blue dot at the origin, and the parameter error $\thetatrue - \stdest$ is the projection of $\thetatrue$ onto $\Null(\xstd)$.

\paragraph{Effect of augmentation on parameter error.}
Suppose we augment with an extra data point $\xext = [1, 1, 0] = e_1 + e_2$ which lies in $\Null(\xstd)$ (black dashed line in Figure~\ref{fig:simpleexistence}).
The augmented estimator $\augest$ still fits the standard data $\xstd$ and thus $(\augest)_3 = \thetatrue_3 = (\stdest)_3$. Due to fitting the extra data $\xext$, $\augest$ (orange vector in Figure~\ref{fig:simpleexistence}) must also satisfy an additional constraint $\xext\augest = \xext\thetatrue$. 
The crucial observation is that additional constraints along one direction ($e_1 + e_2$ in this case) could actually increase parameter error along other directions.
For example, let's consider the direction $e_2$ in Figure~\ref{fig:simpleexistence}. Note that fitting $\xext$ makes $\augest$ have a large component along $e_2$. Now if $\thetatrue_2$ is small (precisely, $\thetatrue_2 < \thetatrue_1/3$), $\augest$ has a larger parameter error along $e_2$ than $\stdest$, which was simply zero (Figure~\ref{fig:simpleexistence} (a)). Conversely, if the true component $\thetatrue_2$ is large enough (precisely, $\thetatrue_2 > \thetatrue_1/3$), the parameter error of $\augest$ along $e_2$ is smaller than that of $\stdest$.

\paragraph{Effect of parameter error on standard error.}
The contribution of different components of the parameter error to the standard error is scaled by the population covariance $\sigmapop$ (see Equation~\ref{eqn:linear-std-error}). For simplicity, let $\sigmapop = \diag([\lambda_1, \lambda_2, \lambda_3])$.
In our example, the parameter error along $e_3$ is zero since both estimators interpolate the
standard training point $\xstd = e_1=3$.
Then, the ratio between
$\lambda_1$ and $\lambda_2$ determines which component of the
parameter error contributes more to the standard error.

\paragraph{When is $\stderr(\augest) > \stderr(\stdest)$?}
Putting the two effects together, we see that when $\thetatrue_2$ is small as in Fig~\ref{fig:simpleexistence}(a), $\augest$ has larger parameter error than $\stdest$ in the direction $e_2$. If $\lambda_2 \gg
\lambda_1$, error in $e_2$ is weighted much more heavily in the standard error and consequently $\augest$ would have a larger standard error.
Precisely, we have
\begin{align*}
\stderr(\augest) > \stderr(\stdest) \iff \lambda_2 (\thetatrue_1 - 3\thetatrue_2)  > \lambda_1 (3 \thetatrue_1 - \thetatrue_2). 
\end{align*}
We present a formal characterization of this tradeoff in general in the next section. 
\begin{figure}[tbp]
  \begin{subfigure}{0.49\linewidth}
      \centering
      \includegraphics[scale=0.60]{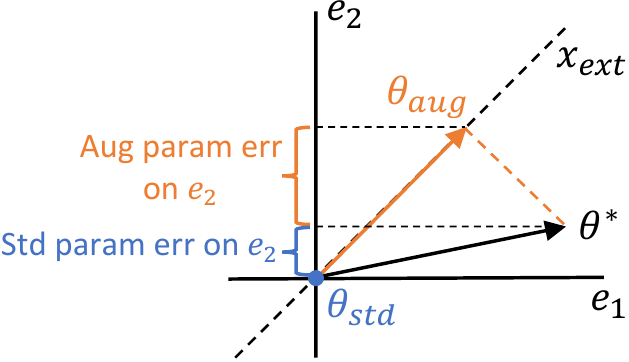}
      \caption{$\thetatrue_1\gg \thetatrue_2$}
  \end{subfigure}
  \begin{subfigure}{0.49\linewidth}
      \centering
      \includegraphics[scale=0.60]{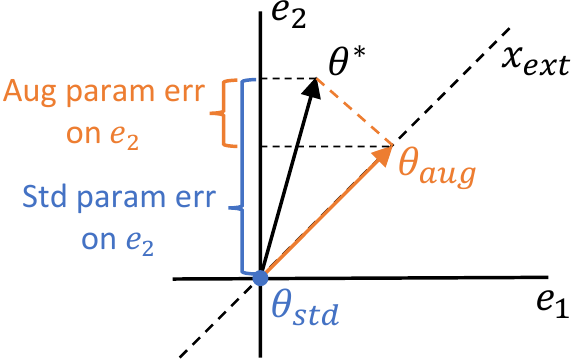}
      \caption{$\thetatrue_2\gg \thetatrue_1$}
  \end{subfigure}
  \begin{subfigure}{0.49\linewidth}
      \centering
    \includegraphics[scale=0.20]{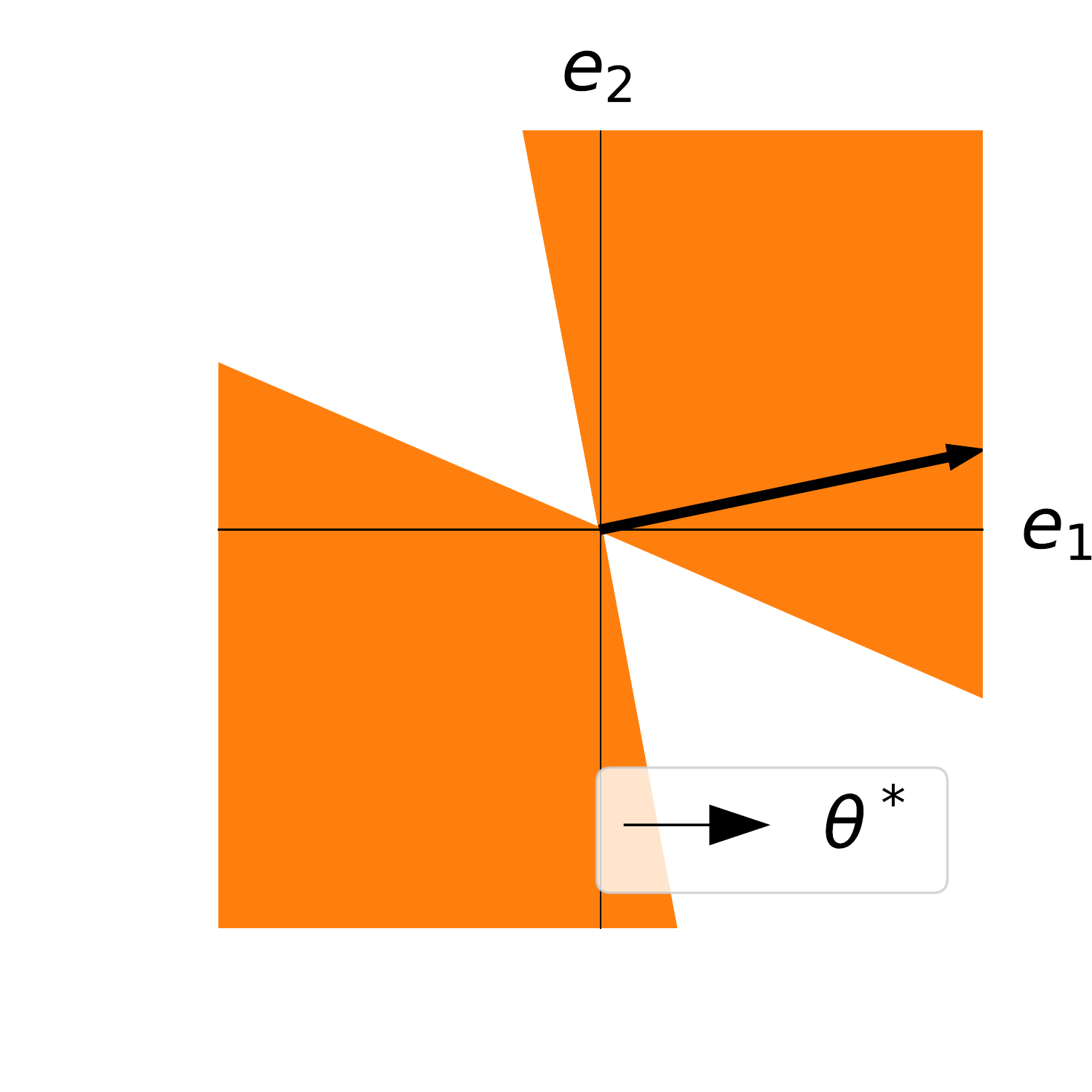}
    \caption{$\sigmapop=\diag([1, 4])$}
  \end{subfigure}
  \begin{subfigure}{0.49\linewidth}
      \centering
    \includegraphics[scale=0.20]{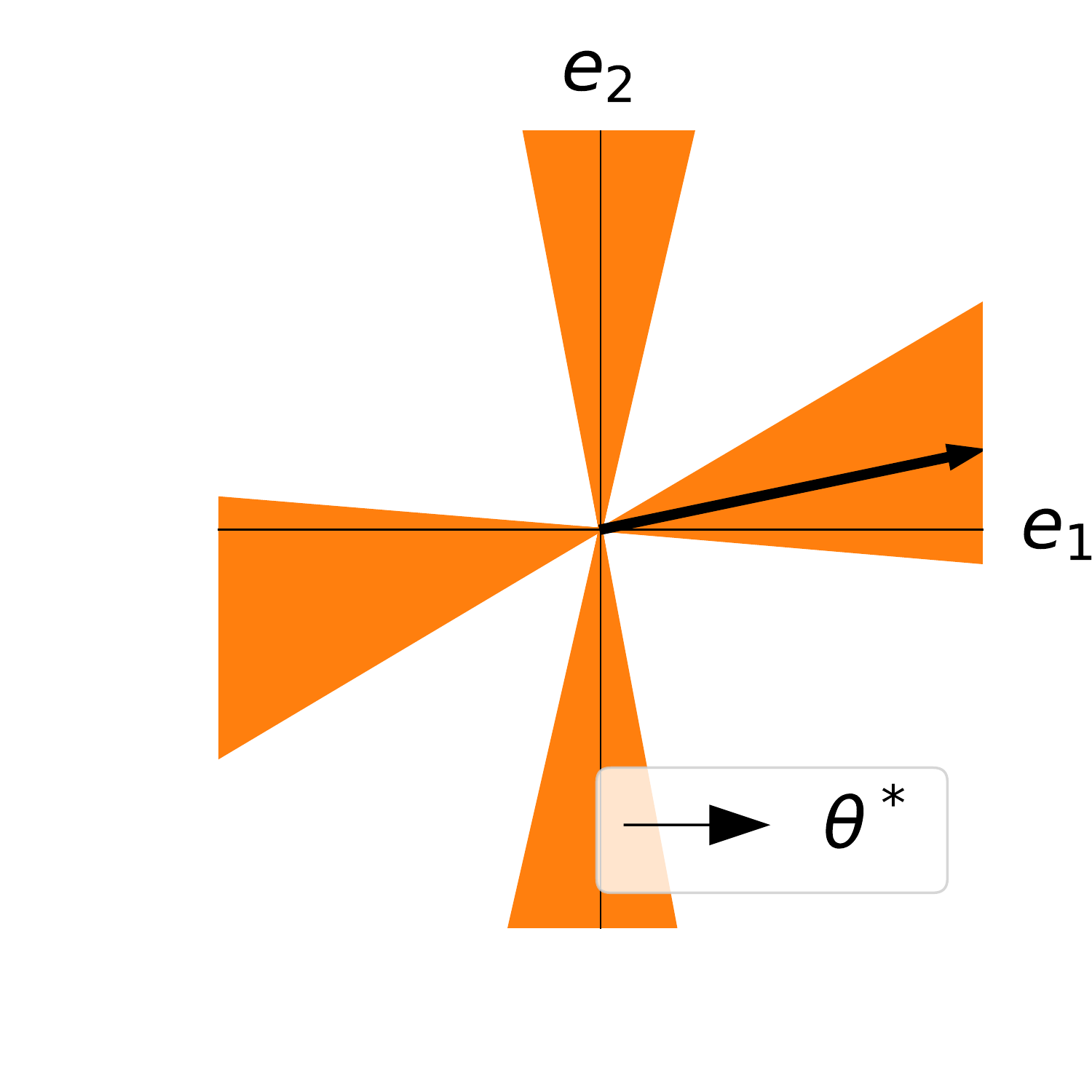}
    \caption{$\sigmapop=\diag([1, 25])$}
  \end{subfigure}
    \caption{
      Illustration of the 3-D example described in Sec.~\ref{sec:simple-3D}.
      \textbf{(a)-(b) Effect of augmentation on parameter error for different $\thetatrue$.} We show the projections of the standard estimator $\stdest$ (blue circle), augmented estimator $\augest$ (orange arrow), and true parameters $\thetatrue$ (black arrow) on $\Null(\xstd)$, spanned by $e_1$ and $e_2$. For simplicity of presentation, we omit the projection operator in the figure labels. Depending on $\thetatrue$, the parameter error of $\augest$ along $e_2$ could be larger or smaller than the parameter error of $\stdest$ along $e_2$. 
      \textbf{(c)--(d) Dependence of space of safe augmentations on $\sigmapop$.}
      Visualization of the space of extra data points $\singleaug$ (orange), that do not cause an increase in the standard
      error for the illustrated $\thetatrue$ (black vector), as result
      of Theorem~\ref{thm:main}.
  }
    \label{fig:simpleexistence}

\end{figure}

\subsection{General characterizations}
\label{sec:general}
In this section, we precisely characterize when the augmented estimator $\augest$ that fits extra training data points $\xext$ in addition to the standard points $\xstd$ has higher standard error than the standard estimator $\stdest$ that only fits $\xstd$. In particular, this enables us to understand when there is a ``tradeoff'' where the augmented estimator $\augest$ has lower robust error than $\stdest$ by virtue of fitting perturbations, but has higher standard error. In Section~\ref{sec:simple-3D}, we illustrated how the parameter error of $\augest$ could be larger than $\stdest$ in some directions, and if these directions are weighted heavily in the population covariance $\sigmapop$, the standard error of $\augest$ would be larger.

Formally, let us define the parameter errors $\deltastd \eqdef \stdest -
\thetatrue$ and $\deltaaug \eqdef \augest -\thetatrue$. Recall that the standard errors are
\begin{align}
  \label{eqn:errors}
\stderr(\stdest) = \deltastd^\top \sigmapop \deltastd, ~~\stderr(\augest) = \deltaaug^\top \sigmapop \deltaaug, 
\end{align}
where $\sigmapop$ is the population covariance of the underlying inputs drawn from $\distribx$.

To characterize the effect of the inductive bias of minimum norm interpolation on the standard errors, we define the following projection operators: $\pistd$, the projection matrix onto $\Null(\xstd)$ and $\piaug$, the projection matrix onto $\Null([\xext;\xstd])$ (see formal definition in Appendix~\ref{sec:app-bias}). Since $\augest$ and $\stdest$ are minimum norm interpolants, $\pistd \stdest = 0$ and $\piaug \augest = 0$. Further, in noiseless linear regression, $\stdest$ and $\augest$ have no error in the span of $\xstd$ and $[\xstd; \xext]$ respectively. Hence, 
\begin{align}
  \label{eqn:param-errors}
  \deltastd = \pistd \thetatrue, ~~ \deltaaug = \piaug \thetatrue.
\end{align}
Our main result relies on the key observation that for any vector $u$, $\pistd
u$ can be decomposed into a sum of two orthogonal components $v$ and $w$ such that
$\pistd u = v + w$ with $w= \piaug u$ and $v = \pistd \piparaaug u$. This is because $\Null([\xstd; \xext]) \subseteq \Null(\xstd)$ and thus $\pistd \piaug = \piaug$.
  Now setting $u=\thetatrue$ and using the error expressions in Equation~\ref{eqn:errors}
and Equation~\ref{eqn:param-errors} gives a precise characterization of the difference in the standard
errors of $\stdest$ and $\augest$.
\begin{theorem}
  \label{thm:main}
The difference in the standard errors of the standard estimator $\stdest$ and augmented estimator $\augest$ can be written as follows. 
\begin{align}
    \label{eqn:exactwv}
  \stderr(\stdest) -\stderr(\augest) &= v^\top \sigmapop v + 2 w^\top \sigmapop v, 
\end{align}
where $v = \pistd \piparaaug \thetatrue$ and $w = \piaug \thetatrue$.
\end{theorem}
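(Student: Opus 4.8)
The plan is to prove the identity by a direct algebraic expansion of the two quadratic forms in Equation~\ref{eqn:errors}, using only the parameter-error expressions of Equation~\ref{eqn:param-errors} together with the orthogonal decomposition of $\pistd\thetatrue$ recorded just before the statement. I would take as given that $\deltastd = \pistd\thetatrue$ and $\deltaaug = \piaug\thetatrue$, so that the whole argument reduces to comparing $\deltastd^\top\sigmapop\deltastd$ with $\deltaaug^\top\sigmapop\deltaaug$ once both parameter errors are written in terms of the common pieces $v$ and $w$.

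The first step is to justify the decomposition $\deltastd = v + w$. Since every vector annihilated by both $\xstd$ and $\xext$ is in particular annihilated by $\xstd$, we have $\Null([\xstd;\xext])\subseteq\Null(\xstd)$, and therefore the composition of the two orthogonal projections collapses to $\pistd\piaug = \piaug$. Writing $\piparaaug = I - \piaug$ and applying $\pistd$ then gives
\[
  v = \pistd\piparaaug\thetatrue = \pistd\thetatrue - \pistd\piaug\thetatrue = \pistd\thetatrue - \piaug\thetatrue = \deltastd - w,
\]
so that $\deltastd = v + w$ with $w = \piaug\thetatrue = \deltaaug$. This identifies the augmented parameter error as exactly the component $w$, and the standard parameter error as $w$ plus the extra piece $v$ that the augmented estimator has additionally removed.

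The second step is the expansion itself. Substituting $\deltastd = v+w$ and $\deltaaug = w$ into Equation~\ref{eqn:errors} yields
\[
  \stderr(\stdest) - \stderr(\augest) = (v+w)^\top\sigmapop(v+w) - w^\top\sigmapop w.
\]
Expanding the first quadratic form and using the symmetry of $\sigmapop$ to merge the two cross terms $v^\top\sigmapop w$ and $w^\top\sigmapop v$, the $w^\top\sigmapop w$ contributions cancel and one is left with $v^\top\sigmapop v + 2 w^\top\sigmapop v$, which is exactly Equation~\ref{eqn:exactwv}.

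I do not expect a genuine analytic obstacle: the content lies entirely in the projection bookkeeping of the first step, all of which is supplied by the nesting $\Null([\xstd;\xext])\subseteq\Null(\xstd)$ and the symmetry and idempotence of the projection matrices. The one conceptual point worth flagging is that $v$ and $w$ are orthogonal in the Euclidean sense (indeed $w^\top v = 0$ follows from $\pistd\piaug = \piaug$ and symmetry of the projections), yet the cross term $2w^\top\sigmapop v$ need \emph{not} vanish, because $\sigmapop$ reweights the coordinates; this surviving, sign-indefinite cross term is precisely what permits $\stderr(\augest)$ to exceed $\stderr(\stdest)$ and drives the tradeoff analyzed in the subsequent sections.
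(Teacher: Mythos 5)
Your proof is correct and follows essentially the same route as the paper's: both write the error difference as a difference of quadratic forms via $\deltastd = \pistd\thetatrue$, $\deltaaug = \piaug\thetatrue$, decompose $\pistd\thetatrue = v + w$ using $\pistd\piaug = \piaug$ (which the paper also derives from $\Null([\xstd;\xext])\subseteq\Null(\xstd)$), and expand. Your added remarks---the explicit verification of the decomposition and the observation that $w^\top v = 0$ while $w^\top\sigmapop v$ need not vanish---match the paper's surrounding discussion and introduce no gap.
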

The proof of Theorem~\ref{thm:main} is in
Appendix~\ref{sec:app-existltwo}. The increase in standard error of the augmented estimator can be understood in terms of the vectors $w$ and $v$ defined in Theorem~\ref{thm:main}. The first term $v^\top \sigmapop v$ is always
positive, and corresponds to the decrease in the standard error of the augmented
estimator $\augest$ by virtue of fitting extra training points in some directions. However, the second term $2 w^\top \sigmapop v$ can be negative and intuitively measures the cost of a possible increase in the parameter error along other directions (similar to the increase along $e_2$ in the simple setting of Figure~\ref{fig:simpleexistence}(a)). When the cost outweighs the benefit, the standard error of $\augest$ is larger. Note that both the cost and benefit is determined by $\sigmapop$ which governs how the parameter error affects the standard error. 

We can use the above expression (Theorem~\ref{thm:main}) for the difference in standard errors of $\augest$ and $\stdest$ to characterize different ``safe'' conditions under which augmentation with extra data does not increase the standard error. See Appendix~\ref{sec:characterization} for a proof. 
\begin{corollary}
  \label{thm:cor}
  The following conditions are sufficient for $\stderr(\augest) \leq \stderr(\stdest)$, i.e.
  the standard error does not increase when fitting augmented data.
\begin{enumerate}
\item The population covariance $\sigmapop$ is identity.
\item The augmented data $[\xstd; \xext]$ spans the entire space, or equivalently $\piaug = 0$.
\item The extra data $\singleaug \in \R^d$ is a single point such that $\singleaug$ is an eigenvector of $\sigmapop$.
\end{enumerate}
\end{corollary}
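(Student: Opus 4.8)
The plan is to obtain all three sufficient conditions as consequences of the exact identity in \refthm{main},
\begin{align*}
\stderr(\stdest) - \stderr(\augest) = v^\top\sigmapop v + 2\,w^\top\sigmapop v,
\end{align*}
by showing that the right-hand side is nonnegative in each case. Two structural facts carry the argument: $v$ and $w$ are Euclidean-orthogonal (as noted in the construction preceding the theorem, where $w^\top v = 0$), and $\sigmapop \succeq 0$ since it is a covariance. Hence the ``benefit'' term $v^\top\sigmapop v$ is always nonnegative, and the entire content of the corollary is to exhibit situations in which the ``cost'' term $2\,w^\top\sigmapop v$ cannot overwhelm it.

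Conditions~1 and~2 fall out immediately. Under Condition~1, $\sigmapop = I$, so $w^\top\sigmapop v = w^\top v = 0$ by orthogonality and the difference equals $\norm{v}_2^2 \ge 0$. Under Condition~2, $\piaug = 0$ forces $w = \piaug\thetatrue = 0$ and $\piparaaug = I$, so that $v = \pistd\thetatrue = \deltastd$ and the difference is $\deltastd^\top\sigmapop\deltastd \ge 0$; in fact $\stderr(\augest) = 0$, since the augmented design pins down $\thetatrue$ on all of $\R^d$.

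Condition~3 is where the work lies. First I would reduce the extra point to the part that actually constrains the estimator: setting $p \defeq \pistd\singleaug$, the row-space component of $\singleaug$ is already fixed by $\xstd\theta = \ystd$, so the extra interpolation constraint is equivalent to $p^\top\theta = p^\top\thetatrue$ and removes exactly the direction $p$ from $\Null(\xstd)$. This yields the rank-one update $\piaug = \pistd - pp^\top/\norm{p}_2^2$, from which $v = \deltastd - \deltaaug$ is parallel to $p$ while $w = \deltaaug$ is orthogonal to $p$. I would then use the eigenvector hypothesis on $\singleaug$ to annihilate the cross term: because the gained direction $p$ is an eigendirection of $\sigmapop$ with eigenvalue $\lambda$, we have $\sigmapop v = \lambda v$, so $w^\top\sigmapop v = \lambda\, w^\top v = 0$ and the difference collapses to $v^\top\sigmapop v = \lambda\norm{v}_2^2 \ge 0$.

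The crux — and the step around which Condition~3's hypothesis is really needed — is precisely this vanishing of $2\,w^\top\sigmapop v$. The cross term couples the single gained direction $p$ to the possibly-worsened directions making up $w$ through $\sigmapop$; what decouples them is that $p$ aligns with an eigenvector of $\sigmapop$, so that the required $\sigmapop$-orthogonality of $v$ and $w$ follows from the Euclidean orthogonality $w \perp p$ we already have. From this vantage point, Conditions~1 and~2 are simply the two other routes to the same decoupling: isotropy of $\sigmapop$ makes every direction an eigendirection, while a full-rank augmented design makes $w$ vanish outright.
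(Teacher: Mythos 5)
Your proofs of Conditions~1 and~2 are correct and coincide with the paper's own argument (Appendix~\ref{sec:app-corollary1}): Euclidean orthogonality of $v$ and $w$ kills the cross term when $\sigmapop = I$, and $\piaug = 0$ forces $w = 0$, leaving only the nonnegative benefit term (your added remark that $\stderr(\augest)=0$ in that case is also right). For Condition~3 your overall route is again the paper's: the reduction to $p \defeq \pistd\singleaug$, the rank-one update $\piaug = \pistd - pp^\top/\norm{p}_2^2$, and the facts $v \parallel p$, $w \perp p$ are exactly the decomposition in the paper's proof of Corollary~\ref{cor:characterization}(a) in Appendix~\ref{sec:characterization}. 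However, the step you call the crux contains a genuine gap: from the hypothesis that $\singleaug$ is an eigenvector of $\sigmapop$ you conclude that the gained direction $p = \pistd\singleaug$ is an eigendirection of $\sigmapop$. This implication is false in general, because $\pistd$ does not commute with $\sigmapop$; projecting an eigenvector out of the row space of $\xstd$ typically destroys the eigenvector property, and with it the identity $\sigmapop v = \lambda v$ on which your cancellation $w^\top \sigmapop v = \lambda w^\top v = 0$ rests.

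Concretely, take $d=3$, $\xstd = [1,0,0]$, and let $\sigmapop$ have orthonormal eigenvectors $u_1 = \tfrac{1}{\sqrt3}(1,1,1)$, $u_2 = \tfrac{1}{\sqrt2}(1,-1,0)$, $u_3 = \tfrac{1}{\sqrt6}(1,1,-2)$ with eigenvalues $2, 1, 100$. Augment with the single point $\singleaug = u_1$, an eigenvector of $\sigmapop$, and take $\thetatrue = (0,1,0)$. Then $\stdest = (0,0,0)$ and $\augest = (0,\tfrac12,\tfrac12)$, giving $\stderr(\stdest) = \tfrac{107}{6} \approx 17.8$ but $\stderr(\augest) = \tfrac{301}{8} \approx 37.6$: the standard error strictly increases even though $\singleaug$ is an eigenvector. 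The reason is that $p \propto (0,1,1)$ is \emph{not} an eigenvector of $\sigmapop$, and the large eigenvalue on $u_3$ produces precisely the harmful cross term your argument assumed away. What your argument actually proves is the statement with the hypothesis placed on $\pistd\singleaug$: if $\pistd\singleaug$ is an eigenvector, then $\sigmapop v = \lambda v$, the cross term vanishes, and the difference is $\lambda \norm{v}_2^2 \geq 0$. That is exactly the hypothesis under which the paper's appendix proof operates (Corollary~\ref{cor:characterization}(b)); the main-text wording of Condition~3 is loose in the same way, and the two hypotheses agree only when $\singleaug \in \Null(\xstd)$, as in the paper's illustrative 3D example. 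So your proof becomes correct (and then matches the paper's) once the eigenvector hypothesis is restated for $\pistd\singleaug$, or once you additionally assume $\singleaug \in \Null(\xstd)$; as written, the transfer step fails.
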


\paragraph{Matching inductive bias.}
We would like to draw special attention to the first condition. When
$\sigmapop = I$, notice that the norm that governs the standard error
(Equation~\ref{eqn:errors}) matches the norm that is minimized by the
interpolants (Equation~\ref{eqn:est}). Intuitively, the estimators
have the ``right'' inductive bias; under this condition, the augmented
estimator $\augest$ does not have higher standard error. In other
words, the observed increase in the standard error of $\augest$ can be
attributed to the ``wrong'' inductive bias. In Section~\ref{sec:rst},
we will use this understanding to propose a method of robust training
which does not increase standard error over standard training.

\paragraph{Safe extra points.} We use Theorem~\ref{thm:main} to plot the safe extra points $\singleaug \in \R^d$ that do not lead to an increase in standard error for any $\thetatrue$ in the simple 3D setting described in Section~\ref{sec:simple-3D} for two different $\sigmapop$ (Figure~\ref{fig:simpleexistence} (c), (d)). 
The safe points lie in cones which contain the eigenvectors of $\sigmapop$ (as expected from Corollary~\ref{thm:cor}). The width and alignment of the cones depends on the alignment between $\thetatrue$ and the eigenvectors of $\sigmapop$. As the eigenvalues of $\sigmapop$ become less skewed, the space of safe points expands, eventually covering the entire space when $\sigmapop = I$ (see Corollary~\ref{thm:cor}). 

\paragraph{Local versus global structure.}
We now tie our analysis back to the cubic splines interpolation problem from Figure~\ref{fig:spline}. The inputs can be appropriately rotated and scaled such that the cubic spline interpolant is the minimum Euclidean norm interpolant (as in Equation~\ref{eqn:est}). Under this transformation, the different eigenvectors of the nullspace of the training data $\Null(\xstd)$ represent the ``local'' high frequency components with small eigenvalues or ``global'' low frequency components with large eigenvalues (see Figure~\ref{fig:K_eigenvectors}). An augmentation that encourages the fitting local components in $\Null(\xstd)$ could potentially increase the error along other global components (like the increase in error along $e_2$ in Figure~\ref{fig:simpleexistence}(a)). Such an increase, coupled with the fact that global components have larger eigenvalue in $\sigmapop$, results in the standard error of $\augest$ being larger than that of $\stdest$. See Figure~\ref{fig:K_eigenvectors_2} and Appendix~\ref{sec:splines-local-global} for more details. This is similar to the recent observation that adversarial training with $\ell_\infty$ perturbations encourages neural networks to fit the high frequency components of the signal while compromising on the low-frequency components~\citep{yin2019fourier}. 

\paragraph{Model complexity.} Finally, we relate the magnitude of increase in standard error of the augmented estimator to the complexity of the true model.
\begin{proposition}
  \label{prop:simple-complex}
  For a given $\xstd, \xext, \sigmapop$, 
\begin{align*}
\stderr(\augest) - \stderr(\stdest) > c \implies \|\thetatrue\|^2_2 - \| \stdest \|^2_2 > \gamma c
\end{align*} 
for some scalar $\gamma>0$ that depends on $\xstd, \xext, \sigmapop$.
\end{proposition}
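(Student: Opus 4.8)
The plan is to connect the two differences directly through the orthogonal decomposition already set up just before \refthm{main}, without re-deriving anything about the estimators. The starting point is that in the noiseless setting the standard estimator is exactly the projection of $\thetatrue$ onto the row space of $\xstd$, i.e. $\stdest = \piparastd \thetatrue$, so that $\thetatrue = \stdest + \pistd \thetatrue$ is an orthogonal decomposition (this is the content of $\deltastd = \pistd \thetatrue$ and $\pistd \stdest = 0$ used in the excerpt). By the Pythagorean theorem this gives $\|\thetatrue\|_2^2 - \|\stdest\|_2^2 = \|\pistd \thetatrue\|_2^2$, which is precisely the quantity on the right-hand side of the claimed implication. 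Hence the proposition reduces to lower-bounding $\|\pistd\thetatrue\|_2^2$ by the standard-error gap.

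Next I would substitute the decomposition $\pistd \thetatrue = v + w$ with $v = \pistd \piparaaug \thetatrue$ and $w = \piaug \thetatrue$ orthogonal, so that $\|\pistd \thetatrue\|_2^2 = \|v\|_2^2 + \|w\|_2^2$. On the error side, \refthm{main} gives $\stderr(\augest) - \stderr(\stdest) = -\,v^\top \sigmapop v - 2\, w^\top \sigmapop v$. Since $\sigmapop \succeq 0$, the term $v^\top \sigmapop v$ is nonnegative and may be dropped in an upper bound, leaving $\stderr(\augest) - \stderr(\stdest) \le -2\, w^\top \sigmapop v \le 2\,|w^\top \sigmapop v|$.

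Then I would control the cross term by Cauchy–Schwarz in the $\sigmapop$-inner product followed by AM–GM: $2\,|w^\top \sigmapop v| = 2\,|\langle \sigmapop^{1/2} w, \sigmapop^{1/2} v\rangle| \le 2\,\lambda_{\max}(\sigmapop)\,\|w\|_2\,\|v\|_2 \le \lambda_{\max}(\sigmapop)\,(\|v\|_2^2 + \|w\|_2^2)$, where $\lambda_{\max}(\sigmapop)$ is the largest eigenvalue of $\sigmapop$. Chaining these inequalities yields $\stderr(\augest) - \stderr(\stdest) \le \lambda_{\max}(\sigmapop)\,(\|\thetatrue\|_2^2 - \|\stdest\|_2^2)$, so the hypothesis $\stderr(\augest) - \stderr(\stdest) > c$ immediately gives the claim with $\gamma = 1/\lambda_{\max}(\sigmapop) > 0$, which depends only on $\sigmapop$ (a permissible subset of $\xstd, \xext, \sigmapop$).

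I do not expect a genuine analytic obstacle: the content is entirely in recognizing that $\|\thetatrue\|_2^2 - \|\stdest\|_2^2$ equals $\|\pistd\thetatrue\|_2^2$ and that this matches the orthogonal split $\|v\|_2^2 + \|w\|_2^2$ feeding \refthm{main}. The one place that needs care is the sign bookkeeping, since the gap $\stderr(\augest) - \stderr(\stdest)$ is the negative of the expression in \refthm{main}; I must verify that the discarded term $v^\top \sigmapop v$ is nonnegative so that dropping it genuinely preserves the upper bound. A secondary, purely cosmetic point is that $\gamma$ need not be sharp—any positive constant governed by $\sigmapop$ suffices—so I would not try to improve the constant beyond $1/\lambda_{\max}(\sigmapop)$.
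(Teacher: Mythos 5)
Your proof is correct, and it takes a genuinely different---and considerably more elementary---route than the paper. The paper derives Proposition~\ref{prop:simple-complex} from Lemma~\ref{lem:constructexist}: it expands $\thetatrue = \stdest + \sum_i \zeta_i v_i + \sum_j \xi_j w_j$ in bases $V, W$ of $\Col(\pistd\piparaaug)$ and $\Null(\sigmaaug)$, turns the prescribed error gap into a bilinear equation in $(\zeta,\xi)$, lower-bounds $\norms{\xi}^2$ via the minimizing choice of $\xi$ along the direction $W^\top \sigmapop V \zeta$, and finally optimizes the resulting bound $(\beta_1+1)c_1 + \beta_2 c^2/c_1$ over the free parameter $c_1 = \norms{\augest}^2 - \norms{\stdest}^2$. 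Your route bypasses all of that machinery: the identity $\stdest = \piparastd\thetatrue$ (valid precisely because the model is noiseless) gives the Pythagorean relation $\norms{\thetatrue}_2^2 - \norms{\stdest}_2^2 = \norms{\pistd\thetatrue}_2^2 = \norms{v}_2^2 + \norms{w}_2^2$, and then Theorem~\ref{thm:main}, nonnegativity of $v^\top \sigmapop v$, Cauchy--Schwarz, and AM--GM yield $\stderr(\augest) - \stderr(\stdest) \le \lambda_{\max}(\sigmapop)\,(\norms{\thetatrue}_2^2 - \norms{\stdest}_2^2)$ in a few lines; every step checks out, including the two places where you need $v \perp w$, which holds since $\piparaaug\piaug = 0$ and $\pistd\piaug = \piaug$. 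What each approach buys: yours is shorter, uses nothing beyond Theorem~\ref{thm:main}, and in fact proves a clean uniform inequality valid for every $\thetatrue$; the paper's heavier argument produces a sharper constant---its $\gamma = 2\sqrt{(\beta_1+1)\beta_2} \ge \sqrt{2}/\lambda_{\max}(W^\top\sigmapop V)$ dominates your $1/\lambda_{\max}(\sigmapop)$ because $\lambda_{\max}(W^\top\sigmapop V) \le \lambda_{\max}(\sigmapop)$, often strictly, as it only sees $\sigmapop$ restricted to the relevant subspaces---and Lemma~\ref{lem:constructexist} does double duty as the existence construction behind Corollary~\ref{cor:characterization}(b). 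Two minor caveats for your write-up: your $\gamma$ is only defined when $\sigmapop \neq 0$ (a degenerate case in which the hypothesis cannot hold for $c>0$ anyway), and dependence of $\gamma$ on $\sigmapop$ alone is indeed permissible under the proposition's statement, as you note.
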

In other words, for a large increase in standard error upon augmentation, the true parameter $\thetatrue$ needs to be sufficiently more complex (in the $\ell_2$ norm) than the standard estimator $\stdest$. For example, the construction of the cubic splines interpolation problem relies on the
underlying function (staircase) being more complex with additional
local structure than the standard estimator---a linear function that
fits most points and can be learned with few
samples. Proposition~\ref{prop:simple-complex} states that this
requirement holds more generally. The proof of
Proposition~\ref{prop:simple-complex} appears in
Appendix~\ref{sec:app-lowerbound}.
A similar intuition can be used to construct an example where augmentation can increase standard error for minimum $\ell_1$-norm interpolants when $\theta^\star$ is dense (Appendix~\ref{app:l1_problem}).

\begin{figure}[t]
  \centering
      \begin{minipage}[c]{0.24\linewidth}
        \includegraphics[scale=0.14]{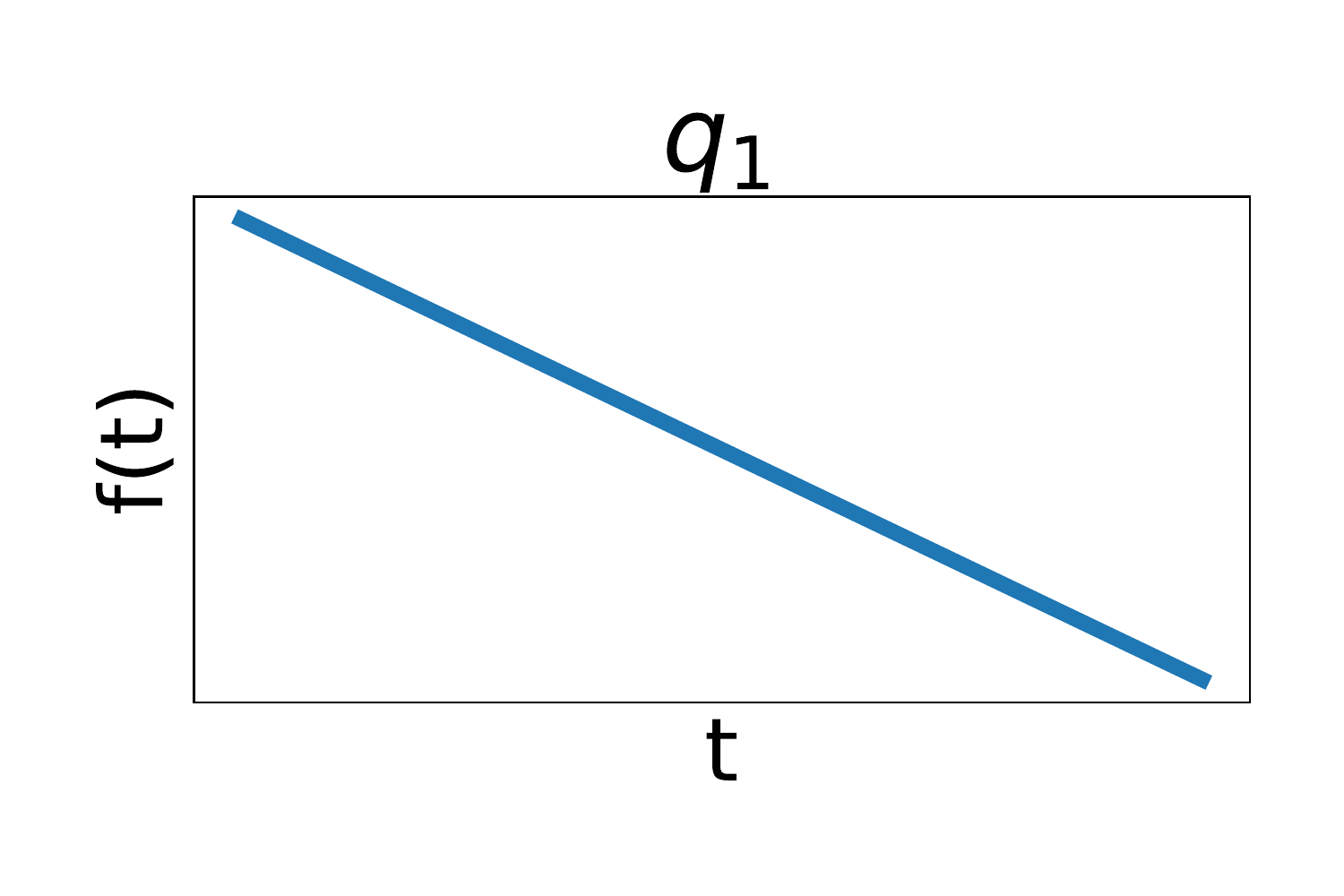}
      \end{minipage}
      \begin{minipage}[c]{0.24\linewidth}
        \includegraphics[scale=0.14]{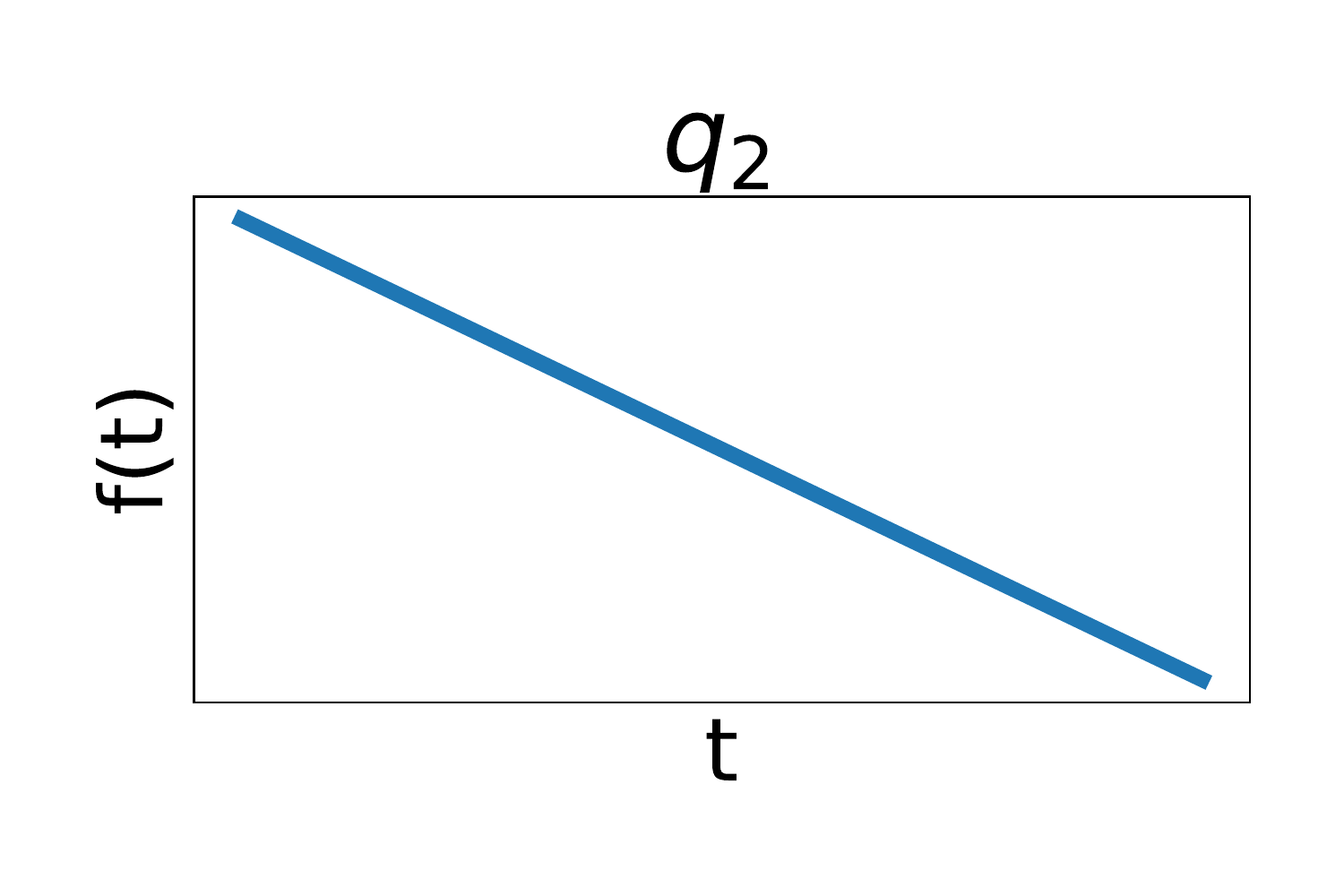}
      \end{minipage}
      \begin{minipage}[c]{0.24\linewidth}
        \includegraphics[scale=0.14]{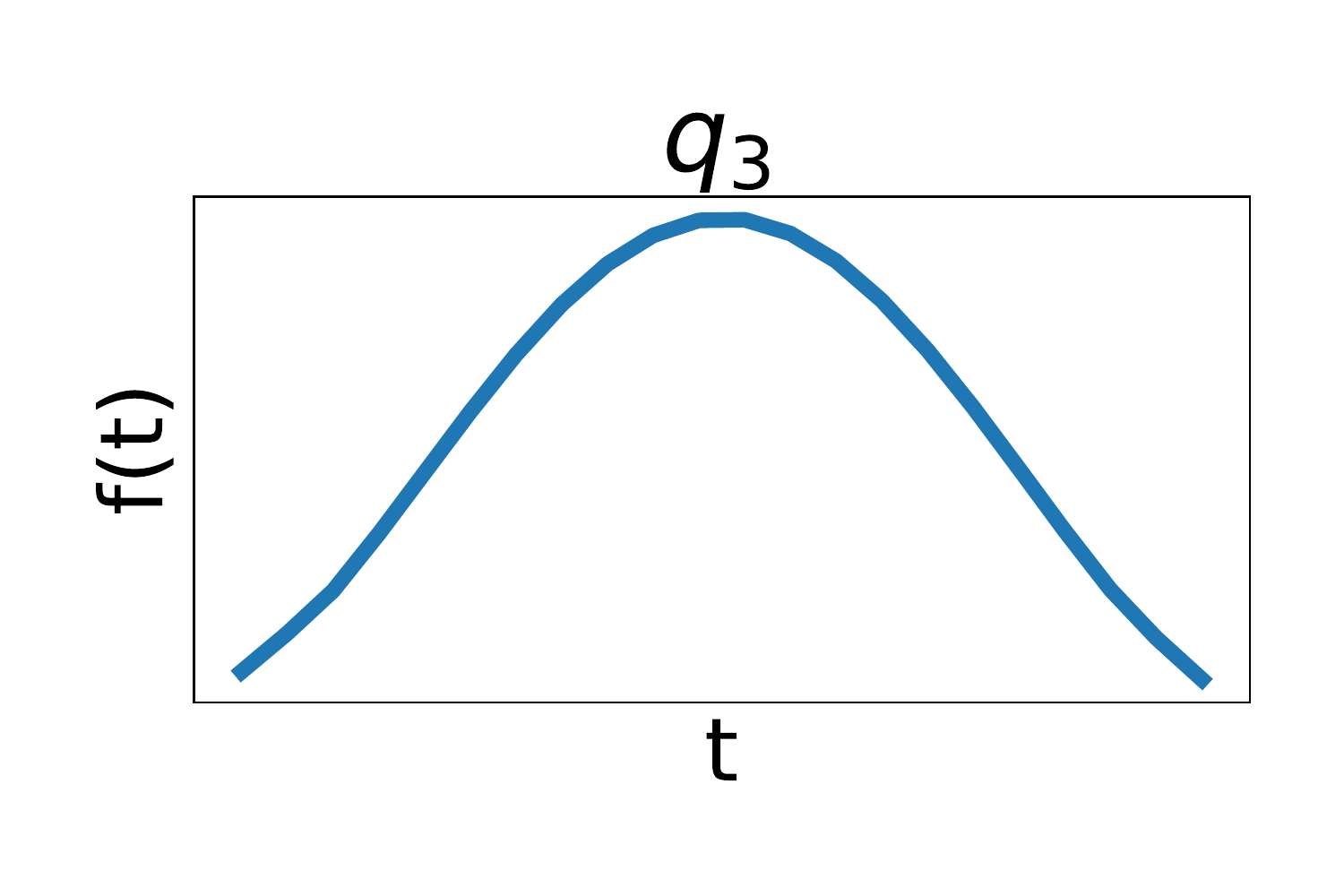}
      \end{minipage}
      \begin{minipage}[c]{0.24\linewidth}
        \includegraphics[scale=0.14]{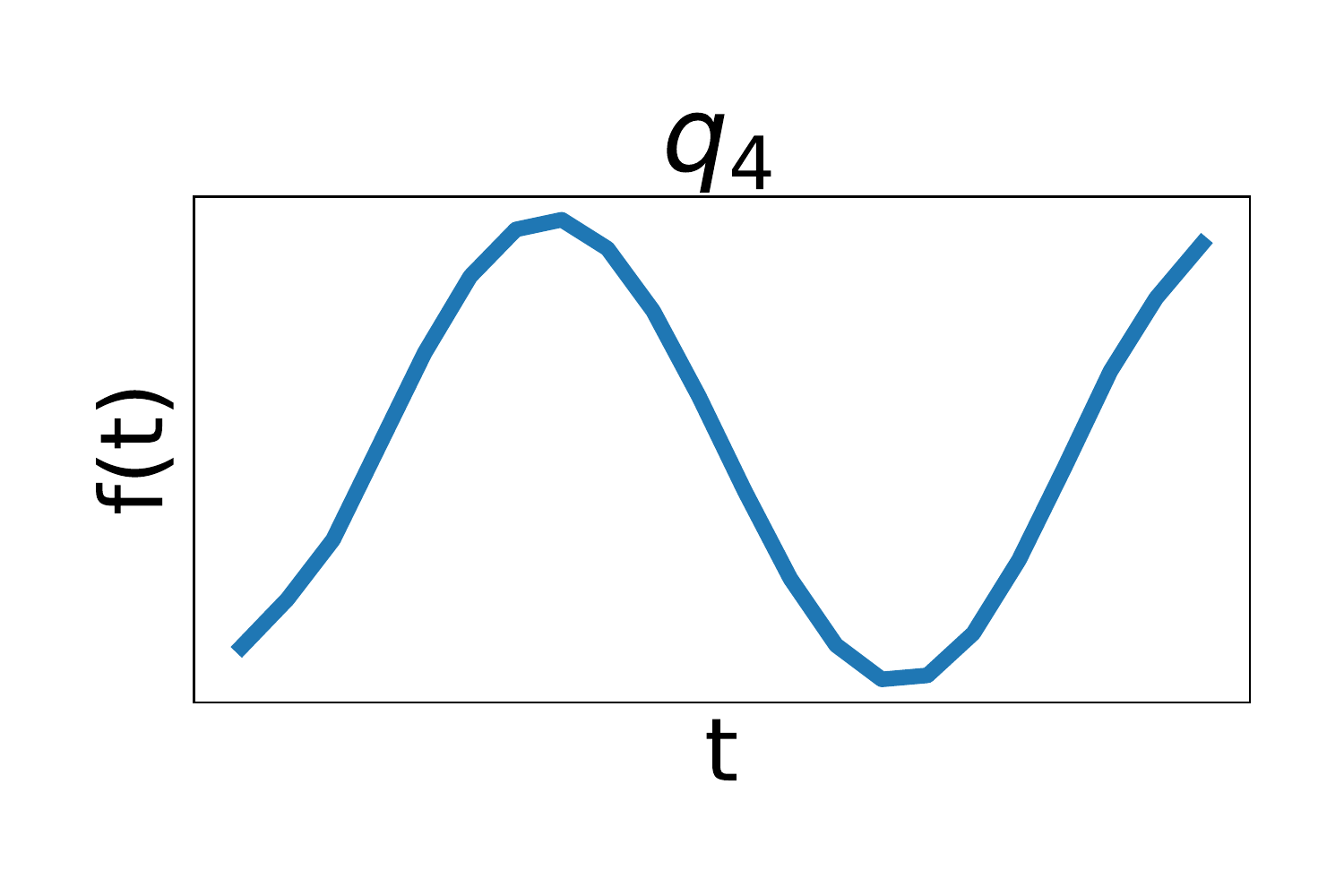}
      \end{minipage}
      \caption{Top 4 eigenvectors of $\sigmapop$ in the splines problem (from Figure~\ref{fig:spline}), representing wave functions in the input space. The ``global'' eigenfunctions, varying less over the domain, correspond to larger eigenvalues, making errors in global dimensions costly in terms of test error.  }
      \label{fig:K_eigenvectors}
\end{figure}

\section{Robust self-training}
\label{sec:rst}
We now use insights from Section~\ref{sec:min-norm}
to construct estimators with low robust error without increasing the
standard error. While Section~\ref{sec:min-norm} characterized the effect of
adding extra data $\xext$ in general, in this section
we consider robust training which augments the dataset with extra data $\xext$ that are consistent perturbations of the standard training data $\xstd$. 

Since the standard estimator has small standard error, a natural strategy
to mitigate the tradeoff is to regularize the augmented estimator
to be closer to the standard estimator. The choice of distance between the estimators we regularize is very important.
Recall from Section~\ref{sec:simple-3D} that the population covariance $\sigmapop$ determines how the parameter error affects the standard error. This suggests using a regularizer that incorporates information about $\sigmapop$. 

We first revisit the recently proposed robust self-training (RST)~\citep{carmon2019unlabeled,
  najafi2019robustness, uesato2019are} that incorporates additional unlabeled data via pseudo-labels from a standard estimator. Previous work only focused on the effectiveness of RST in improving the robust error. In Section~\ref{sec:linear-rst}, we prove that in linear regression, RST eliminates the tradeoff between standard and robust error (Theorem~\ref{thm:linear-x}). The proof hinges on the connection between RST and the idea of regularizing towards the standard estimator discussed above. In particular, we show that the RST objective can be rewritten as minimizing a suitable $\sigmapop$-induced distance to the standard estimator. 

In Section~\ref{sec:empirical-rst}, we expand upon
previous empirical RST results for \cifar~across various training set sizes and perturbations (rotations/translations in addition to $\ell_\infty$).
We observe that across all settings, RST substantially improves the standard error while also
improving the robust error over the vanilla supervised robust training counterparts. 

\subsection{General formulation of RST}
We first describe the general two-step robust self-training (RST) procedure~\citep{carmon2019unlabeled, uesato2019are} for a parameteric model $f_\theta$:
\begin{enumerate}
  \item Perform standard training on labeled data $\{(x_i, y_i)\}_{i=1}^n$ to obtain
      $\stdest = \argmin_{\theta} \sum\limits_{i=1}^n \ell \big(f_\theta(x_i), y_i)$.
\item Perform robust training on both the labeled data and unlabeled inputs $\{\tilde{x}_i\}_{i=1}^m$ with \emph{pseudo-labels} $\tilde{y}_{i} = f_{\stdest}(\tilde{x}_i)$ generated from the standard estimator $\stdest$.
\end{enumerate}
The second stage typically involves a combination of the standard loss $\ell$ and a robust loss $\ellrob$. The robust loss encourages invariance of the model over perturbations $T(x)$, and is generally defined as
\begin{equation}
  \label{eqn:deflrob}
  \ellrob(f_\theta(x_i),y_i) = \max_{x_{\text{adv}} \in T(x_i)} \ell(f_\theta(x_{\text{adv}}), y_i).
\end{equation}
It is convenient to summarize the robust self-training estimator $\thetarst$ as the minimizer of a weighted combination of four separate losses as follows. 
We define the losses on the labeled dataset $\{(x_i, y_i)\}_{i=1}^n$ as
\begin{align*}
    \Lsl(\theta) &= \frac{1}{n}\sum_{i=1}^n \ell(f_\theta(x_i), y_i), \\
    \Lrl(\theta) &= \frac{1}{n}\sum_{i=1}^n \ellrob(f_\theta(x_i), y_i).
\end{align*}
The losses on the unlabeled samples $\{\tilde{x}_i\}_{i=1}^m$ which are psuedo-labeled by the standard estimator are
\begin{align*}
    \Lsu(\theta; \stdest) &= \frac{1}{m}\sum_{i=1}^m\ell(f_\theta(\tilde{x}_i), \pseudoy{i}), \\
    \Lru(\theta; \stdest) &= \frac{1}{m}\sum_{i=1}^m \ellrob(f_\theta(\tilde{x}_i), \pseudoy{i}).
\end{align*}
Putting it all together, we have
\begin{align}
    \thetarst \coloneqq \argmin_{\theta}\Big(&\alpha\Lsl(\theta) + \beta \Lrl(\theta)  \label{eqn:general-x}\\ &+ \gamma \Lsu(\theta; \stdest) + \lambda \Lru(\theta; \stdest)\Big) \nonumber,
\end{align}
for fixed scalars $\alpha, \beta, \gamma, \lambda \geq 0$.

\subsection{Robust self-training for linear regression}
\label{sec:linear-rst}
\begin{figure}
  \center
  \includegraphics[scale=0.17]{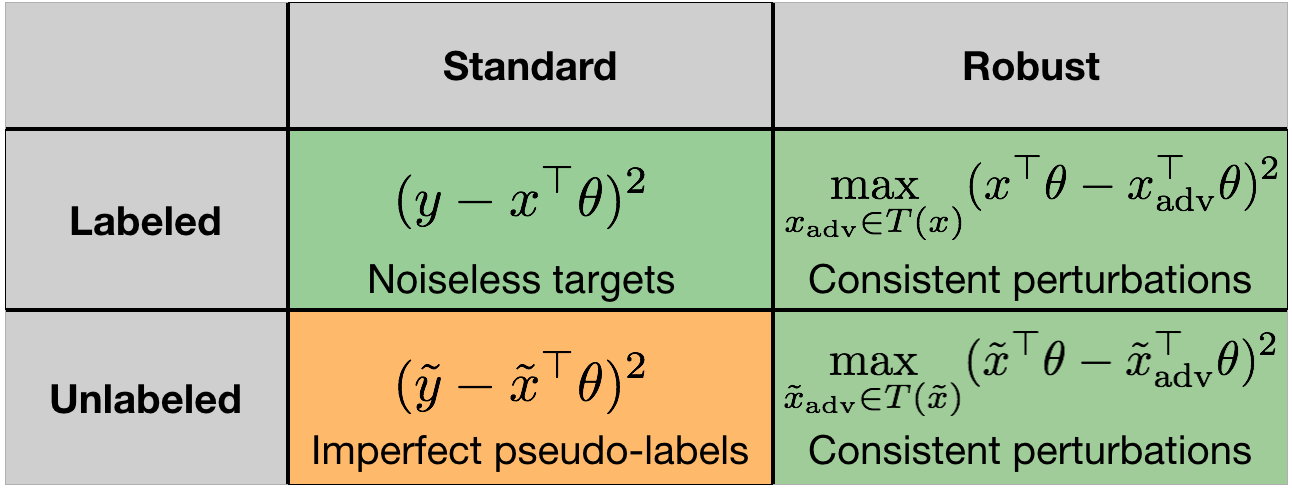}
  \caption{Illustration shows the four components of the RST loss (Equation~\eqref{eqn:general-x}) in the special case of linear regression (Eq.~\eqref{eqn:linear-x}). Green cells contain hard constraints where the optimal $\theta^\star$ obtains zero loss. The orange cell contains the soft constraint that is minimized while satisfying hard constraints to obtain the final linear RST estimator. }
  \label{fig:matrix}
\end{figure}

We now return to the noiseless linear regression as
described in Section~\ref{sec:setting} and specialize the general RST estimator described in Equation \refeqn{general-x} to this setting. We prove that RST eliminates the decrease in standard error in this setting while achieving low robust error by showing that RST appropriately regularizes the augmented estimator towards the standard estimator.  

Our theoretical results hold for RST procedures where the pseudo-labels can be generated from any interpolating estimator $\thetainterp$ satisfying $\xstd \thetainterp = \ystd$. This includes but is not restricted to the mininum-norm standard estimator $\stdest$ defined in \eqref{eqn:est}. We use the squared loss as the loss function $\ell$. 
For consistent perturbations $T(\cdot)$, we analyze the following RST estimator for linear regression
\begin{align}
  \label{eqn:linear-x}
  \thetarst = \argmin_{\theta}\{&\pLsu(\theta; \thetainterp): \pLru(\theta)=0 , \nonumber\\
  &\Lsl(\theta) = 0, \Lrl(\theta) = 0\}.
\end{align}
Figure~\ref{fig:matrix} shows the four losses of RST in this special case of linear regression.

Obtaining this specialized estimator from the general RST estimator in Equation \refeqn{general-x} involves the following steps. First, for convenience of analysis, we assume access to the population covariance $\sigmapop$ via infinite unlabeled data and thus replace the finite sample losses on the unlabeled data $\Lsu(\theta), \Lru(\theta)$ by their population losses $\pLsu(\theta), \pLru(\theta)$. Second, the general RST objective minimizes some weighted combination of four losses. When specializing to the case of noiseless linear regression, since $\hat{L}_{\text{std, lab}}(\thetatrue) = 0$, rather than minimizing $\alpha \Lsl(\thetatrue)$, we set the coefficients on the losses such that the estimator satisfies a hard constraint $\Lsl(\thetatrue)=0$. This constraint which enforces interpolation on the labeled dataset $y_i = x_i^\top \theta~\forall i = 1, \hdots n$ allows us to rewrite the robust loss (Equation~\ref{eqn:deflrob}) on the labeled examples equivalently as a self-consistency loss defined independent of labels. 
\begin{align*}
  \Lrl(\theta)&= \frac{1}{n}\sum_{i=1}^n \max_{x_{\text{adv}} \in T(x)} (x_i^\top \theta - x_{\text{adv}}^\top \theta)^2.
\end{align*}
Since $\thetatrue$ is invariant on perturbations $T(x)$ by definition, we have $\Lrl(\thetatrue) = 0$ and thus we introduce a constraint $\Lrl(\theta)=0$ in the estimator. 

For the losses on the unlabeled data, since the pseudo-labels are not perfect, we minimize $\pLsu$ in the objective instead of enforcing a hard constraint on $\pLsu$. However, similarly to the robust loss on labeled data, we can reformulate the robust loss on unlabeled samples $\pLru$ as a self-consistency loss that does not use pseudo-labels. By definition, $\pLru(\thetatrue) = 0$ and thus we enforce $\pLru(\theta) = 0$ in the specialized estimator. 

We now study the standard and robust error of the linear regression RST estimator defined above in Equation \refeqn{linear-x}. 
\begin{restatable}[]{theorem}{linearx}
  \label{thm:linear-x}
  Assume the noiseless linear model $y = x^\top \thetatrue$.
  Let $\thetainterp$ be an arbitrary interpolant of the standard data, i.e.\ $\xstd \thetainterp = \ystd$.
  Then
  \begin{align*}
    \stderr\big(\thetarst) \leq \stderr(\thetainterp).
  \end{align*}
  Simultaneously, $\roberr(\thetarst) = \stderr(\thetarst)$. 
\end{restatable}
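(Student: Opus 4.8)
The plan is to read the linear RST program~\eqref{eqn:linear-x} as a single $\sigmapop$-weighted projection and to exploit that the true parameter is itself feasible. First I would put each piece in closed form for the noiseless model. The labeled standard constraint $\Lsl(\theta)=0$ is exactly interpolation $\xstd\theta=\ystd=\xstd\thetatrue$, i.e.\ $\theta-\thetatrue\in\Null(\xstd)$. Given interpolation, the labeled robust constraint $\Lrl(\theta)=0$ becomes the self-consistency requirement $\xadv^\top\theta=x_i^\top\theta$ for every labeled $x_i$ and every $\xadv\in T(x_i)$, and $\pLru(\theta)=0$ forces $\xadv^\top\theta=x^\top\theta$ for $\distribx$-almost every $x$ and every $\xadv\in T(x)$. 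Since perturbations enter only through the differences $\xadv-x$, all of these are \emph{linear} (homogeneous) constraints on $\theta$, so the feasible set is an affine subspace $\sF$; write $\sF=\thetatrue+\sF_0$ for its direction space $\sF_0$. Finally the objective is $\pLsu(\theta;\thetainterp)=\E_{\distribx}[(x^\top\theta-x^\top\thetainterp)^2]=(\theta-\thetainterp)^\top\sigmapop(\theta-\thetainterp)=\norm{\theta-\thetainterp}_\sigmapop^2$, the squared $\sigmapop$-seminorm distance to $\thetainterp$. Hence $\thetarst$ is the $\sigmapop$-projection of $\thetainterp$ onto $\sF$.

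Next I would verify $\thetatrue\in\sF$: it interpolates the labeled data because $y_i=x_i^\top\thetatrue$, and consistency~\eqref{eqn:target-preserving} in the noiseless model means $\xadv^\top\thetatrue=x^\top\thetatrue$ whenever $\xadv\in T(x)$, so $\Lrl(\thetatrue)=0$ and $\pLru(\thetatrue)=0$. Thus $\thetatrue$ is feasible and $\thetarst-\thetatrue\in\sF_0$. For the standard-error bound I would invoke the first-order optimality condition for the projection: since $\thetarst$ minimizes the convex objective $\norm{\theta-\thetainterp}_\sigmapop^2$ over the affine set $\sF$, stationarity along every feasible direction gives $\inner{\thetainterp-\thetarst}{z}_\sigmapop=0$ for all $z\in\sF_0$, where $\inner{a}{b}_\sigmapop=a^\top\sigmapop b$. (I deliberately use this variational condition, not a closed-form projector, so the argument survives a singular $\sigmapop$.) Taking $z=\thetarst-\thetatrue\in\sF_0$ annihilates the cross term in
\begin{align*}
\norm{\thetainterp-\thetatrue}_\sigmapop^2 = \norm{\thetainterp-\thetarst}_\sigmapop^2 + 2\inner{\thetainterp-\thetarst}{\thetarst-\thetatrue}_\sigmapop + \norm{\thetarst-\thetatrue}_\sigmapop^2,
\end{align*}
yielding the Pythagorean identity $\norm{\thetainterp-\thetatrue}_\sigmapop^2=\norm{\thetainterp-\thetarst}_\sigmapop^2+\norm{\thetarst-\thetatrue}_\sigmapop^2$. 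Dropping the nonnegative first term and recalling $\stderr(\theta)=\norm{\theta-\thetatrue}_\sigmapop^2$ from~\eqref{eqn:linear-std-error} gives $\stderr(\thetarst)\le\stderr(\thetainterp)$.

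For the identity $\roberr(\thetarst)=\stderr(\thetarst)$ I would use the population invariance forced by $\pLru(\thetarst)=0$: because the integrand is a nonnegative maximum of squares, its population expectation vanishes only if, for $\distribx$-almost every $x$ and every $\xadv\in T(x)$, we have $\xadv^\top\thetarst=x^\top\thetarst$. The inner maximization in~\eqref{eqn:rob-test-error} then collapses, $\max_{\xadv\in T(x)}(\xadv^\top\thetarst-y)^2=(x^\top\thetarst-x^\top\thetatrue)^2$, and taking the expectation over $\distribxy$ gives $\roberr(\thetarst)=\E_{\distribx}[(x^\top\thetarst-x^\top\thetatrue)^2]=\stderr(\thetarst)$.

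I expect the obstacles to be technical rather than conceptual. The first is handling a possibly rank-deficient $\sigmapop$: the projection onto $\sF$ need not be unique, so I rely on the variational optimality condition and the seminorm Pythagorean identity, both of which hold for \emph{any} minimizer $\thetarst$ regardless of uniqueness. The second is measure-theoretic: I must argue carefully that $\pLru(\thetarst)=0$ upgrades to pointwise invariance $\distribx$-almost surely, so that the robust error genuinely reduces to the standard error under the population measure rather than merely on the finite unlabeled sample.
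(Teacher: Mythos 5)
Your proof is correct, and both it and the paper's proof rest on the same geometric fact: $\thetarst$ is the $\sigmapop$-projection of $\thetainterp$ onto a feasible set that contains $\thetatrue$, hence lies $\sigmapop$-closer to $\thetatrue$ than $\thetainterp$ does. The difference is in execution. The paper does not work with the population program~\eqref{eqn:linear-x} directly: it first reduces it to a finite constrained least-squares problem~\eqref{eqn:linear-x-3} by constructing a matrix $\xext$ of perturbation differences whose rows span the invariance directions (this requires the separate iterative basis-construction argument in Appendix~\ref{sec:app-construct-xext-rst}), then introduces orthonormal bases $U$ and $V$ splitting $\Null(\xstd)$ according to the augmented data, expresses $\thetatrue$, $\thetainterp$, $\thetarst$ in these coordinates (feasibility of $\thetatrue$ appears as a coordinate block shared between $\thetatrue$ and $\thetarst$), and closes via the normal equations for the free block plus a Cauchy--Schwarz step. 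You bypass both the reduction and the coordinates: all constraints in~\eqref{eqn:linear-x} are families of linear equalities, so the feasible set is affine; $\thetatrue$ is feasible by consistency~\eqref{eqn:target-preserving} and noiselessness; and the first-order optimality condition for minimizing the convex quadratic $\norm{\theta-\thetainterp}_{\sigmapop}^2$ over an affine set annihilates the cross term, giving the Pythagorean identity from which the bound drops out. What your route buys: it proves the statement for the estimator exactly as defined in~\eqref{eqn:linear-x}, with no need for the finite-basis equivalence, and it is manifestly valid for singular $\sigmapop$ and non-unique minimizers, since only the seminorm and the variational condition are used. What the paper's route buys: the explicit $\xext$ construction doubles as an implementable algorithm and ties the theorem back to the general $\xext$-augmentation analysis of Section~\ref{sec:general}. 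The robust-error halves of the two proofs are essentially identical (the population self-consistency constraint forces $\distribx$-a.e.\ pointwise invariance, collapsing the inner maximum). The one step you should make explicit is the one you already flagged: the exceptional null set in $\pLru(\theta)=0$ depends on $\theta$, so to conclude that $\sF$ is affine and to use the direction $z=\thetarst-\thetatrue$ you should intersect the exceptional sets of the finitely many feasible points involved, which is still a null set.
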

See Appendix~\ref{sec:app-rst} for a full proof.

The crux of the proof is that the optimization objective of RST is an inductive bias that regularizes the estimator to be close to the standard estimator, weighing directions by their contribution to the standard error via $\sigmapop$.
To see this, we rewrite
\begin{align*}
    \pLsu(\theta; \thetainterp) &= \E_{\distribx} [(\tilde{x}^\top \thetainterp - \tilde{x}^\top \theta)^2] \\
    &= (\thetainterp - \theta)^\top \sigmapop (\thetainterp - \theta).
\end{align*}
By incorporating an appropriate $\sigmapop$-induced regularizer while satisfying constraints on the robust losses, RST ensures that the standard error of the estimator never exceeds the standard error of $\stdest$. The robust error of any estimator is lower bounded by its standard error, and this gap can be arbitrarily large for the standard estimator. However, the robust error of the RST estimator matches the lower bound of its standard error which in turn is bounded by the standard error of the standard estimator and hence is small. To provide some graphical intuition for the result, see Figure~\ref{fig:spline} that visualizes the RST estimator on the cubic splines interpolation problem that exemplifies the increase in standard error upon augmentation. RST captures the global
structure and obtains low standard error by matching $\stdest$ (straight line) on unlabeled inputs. Simultaneously, RST enforces invariance on local transformations on both labeled and unlabeled inputs, and obtains low robust error by capturing the local structure across the domain.

\paragraph{Implementation of linear RST.} The constraint on the standard loss on labeled data simply corresponds to interpolation on the standard labeled data. The constraints on the robust self-consistency losses involve a maximization over a set of transformations. In the case of linear regression, such constraints can be equivalently represented by a set of at most $d$ linear constraints, where $d$ is the dimension of the covariates. Further, with this finite set of constraints, we only require access to the covariance $\sigmapop$ in order to constrain the population robust loss. Appendix~\ref{sec:app-rst} gives a practical iterative algorithm that computes the RST estimator for linear regression reminiscent of adversarial training in the semi-supervised setting.
\begin{table*}
\parbox{.49\textwidth}{
\resizebox{0.8\linewidth}{!}{%
  \begin{tabular} {>{\raggedright}p{4cm} | p{1.5cm} | p{1.5cm}}
    \toprule
    Method & Robust Test Acc. & Standard Test Acc.\\
    \midrule
    Standard Training & 0.8\% & 95.2\% \rdelim\}{3}{3mm}[\large $\substack{\Huge \text{Vanilla}\\ \text{Supervised}}$]\\
    PG-AT~\citep{madry2018towards} & 45.8\% & 87.3\% \\
    TRADES~\citep{zhang2019theoretically} & 55.4\% & 84.0\% \\
    
    \cellcolor{black!15} Standard Self-Training & 0.3\% & 96.4\% \rdelim\}{5}{3mm}[\large $\substack{\text{Semisupervised} \\ \text{with same} \\\text{unlabeled data}}$]\\
    \cellcolor{black!15} Robust Consistency Training\linebreak~\citep{carmon2019unlabeled}&56.5\% & 83.2\% \\
    \cellcolor{black!15} \textbf{RST + PG-AT (this paper)} & \textbf{58.5\%} & \textbf{91.8\%} \\
    \cellcolor{black!15} \textbf{RST + TRADES (this paper)}\linebreak~\citep{carmon2019unlabeled} & \textbf{63.1\%} & \textbf{89.7\%} \\
    Interpolated AT\linebreak~\citep{lamb2019interpolated}\footnote{Used a slightly
      smaller WRN-20-10 model} & 45.1\% & 93.6\%  \rdelim\}{4}{3mm}[\large $\substack{\text{Modified} \\ \text{supervised}}$]\\
    Neural Arch. Search\linebreak~\citep{cubuk2017intriguing} & 50.1\% & 93.2\% \\
    \bottomrule
  \end{tabular}
}
}
\hfill
\parbox{.49\textwidth} {
  \resizebox{0.8\linewidth}{!}{%
    \begin{tabular} {>{\raggedright}p{4cm} | p{1.5cm} | p{1.5cm}}
      \toprule
    Method & Robust Test Acc. & Standard Test Acc. \\
    \midrule
      Standard Training & 0.2\% & 94.6\% \rdelim\}{3}{3mm}[\large $\substack{\text{Vanilla}\\ \text{Supervised}}$]\\
      Worst-of-10 & 73.9\% & 95.0\% \\
      Random & 67.7\% & 95.1\% \\
  \cellcolor{black!15} \textbf{RST + Worst-of-10 (this paper)} & \textbf{75.1\%} & \textbf{95.8\%} \rdelim\}{2}{3mm}[\large $\substack{\text{Semisupervised}}$]\\
  \cellcolor{black!15} \textbf{RST + Random (this paper)} & \textbf{70.9\%} & \textbf{95.8\%} \\
      Worst-of-10\linebreak~\citep{engstrom2019exploring}\footnote{Used a smaller ResNet model} & 69.2\% & 91.3\% \rdelim\}{3}{3mm}[\large $\substack{\text{Existing baselines} \\ \text{(smaller model)}}$]\\
      Random~\citep{yang2019invariance}\footnote{Used a smaller ResNet-32 model} & 58.3\% & 91.8\% \\
    \bottomrule
  \end{tabular}
}
}
\caption{
    Performance of robust self-training (RST) applied to different perturbations and adversarial training algorithms.
    \textbf{(Left)} \cifar~ standard and robust test accuracy against $\ell_\infty$ perturbations of size $\epsilon=8/255$. All methods use $\epsilon=8/255$ while training and use the WRN-28-10 model. Robust accuracies are against a PG based attack with 20 steps.
    \textbf{(Right)} \cifar~ standard and robust test accuracy against a grid attack of rotations up to 30 degrees and translations up to $\sim10\%$ of the image size, following~\citep{engstrom2019exploring}. All adversarial and random methods use the same parameters during training and use the WRN-40-2 model. 
    For both tables, shaded rows make use of 500K unlabeled images from 80M Tiny Images sourced in~\citep{carmon2019unlabeled}. RST improves \emph{both} the standard and robust accuracy over the vanilla counterparts for different algorithms (AT and TRADES) and different perturbations ($\ell_\infty$ and rotation/translations). 
}
\label{table:adv-results}
\end{table*}

\begin{figure}[tbp]
  \centering
  \begin{subfigure}{0.45\linewidth} 
    \includegraphics[scale=0.25]{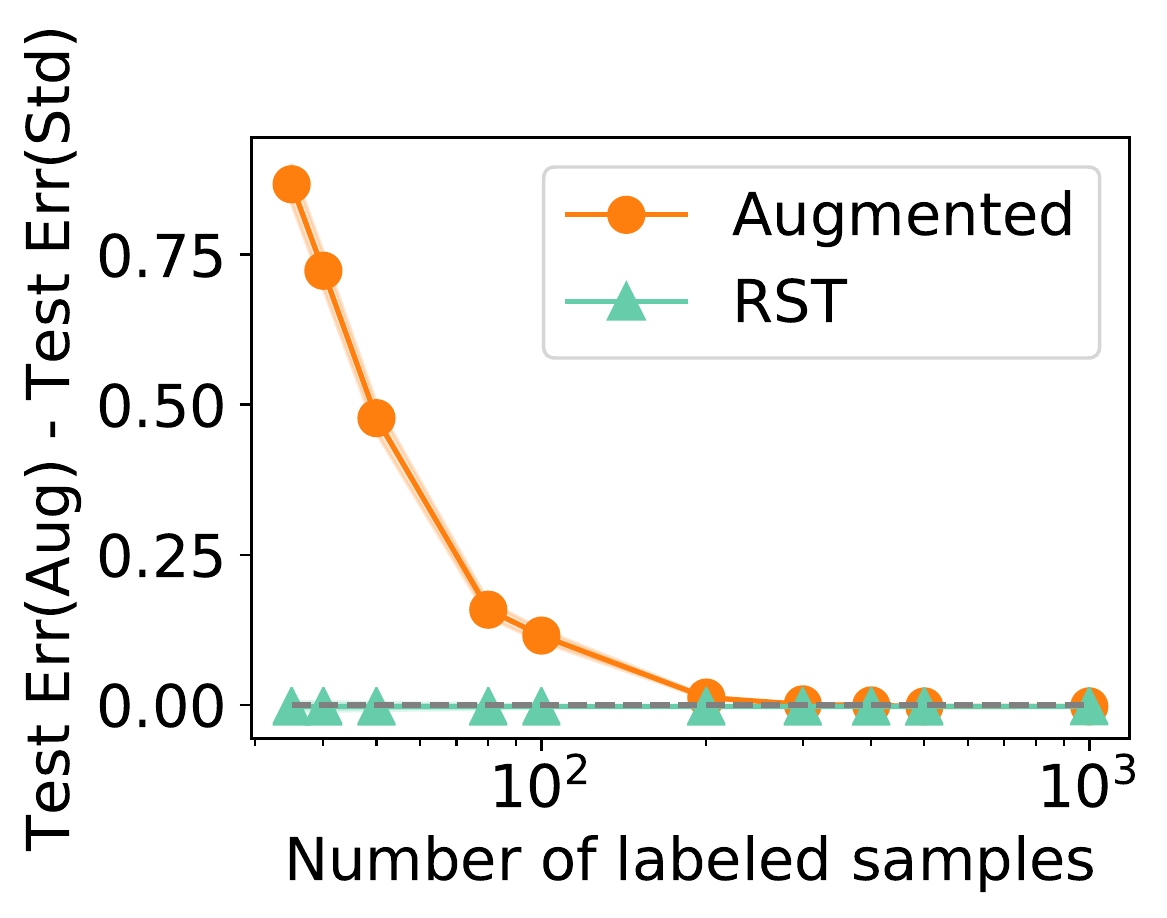}
    \caption{Spline Staircase}
  \end{subfigure}
  \begin{subfigure}{0.45\linewidth}
    \includegraphics[scale=0.22]{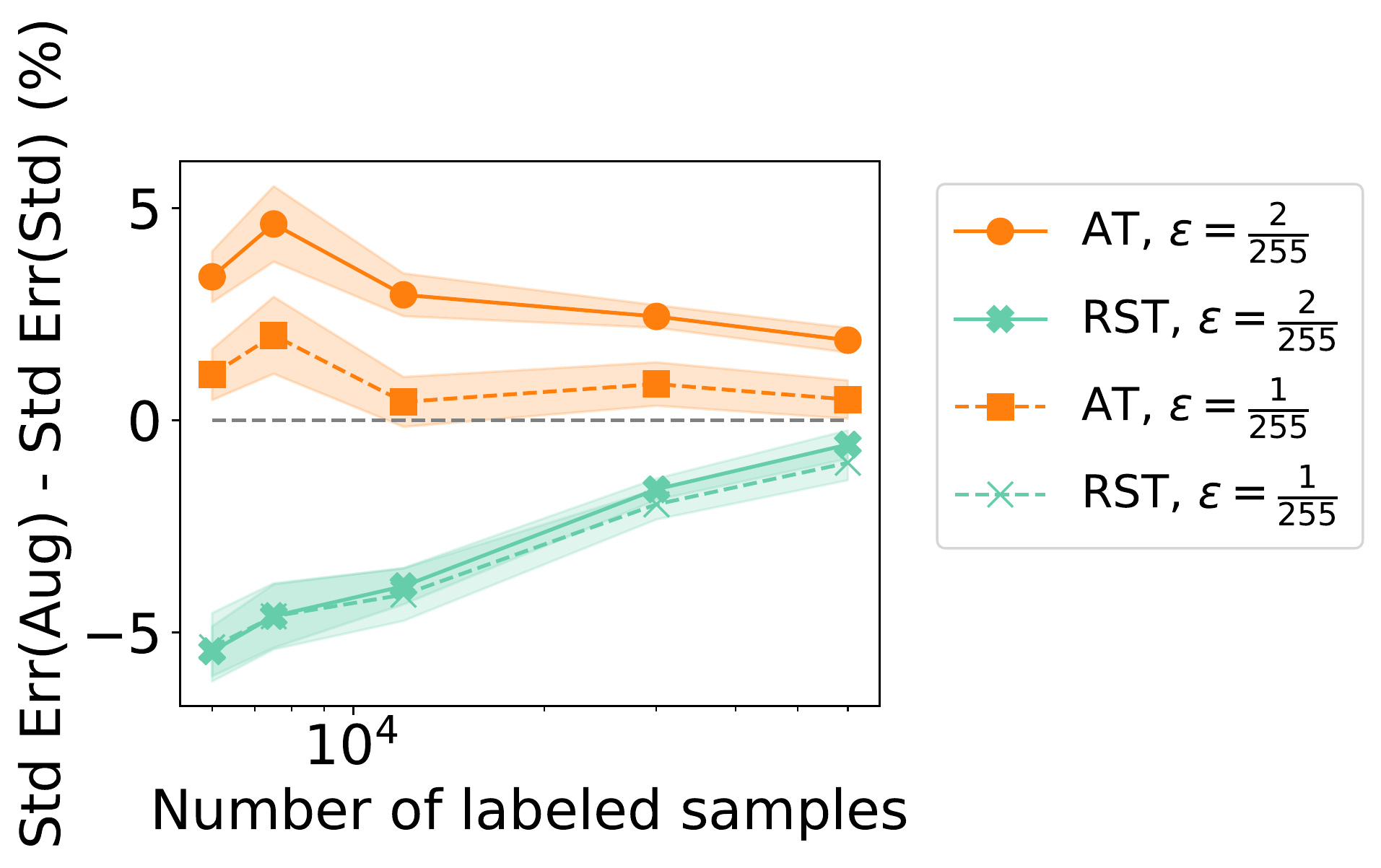}
    \caption{\cifar~(AT)}
  \end{subfigure}
  \caption{Effect of data augmentation on test error as we vary the number of training samples.
  \textbf{(a)-(b)} We plot the difference in errors of the augmented estimator and standard estimator. In both the spline staircase simulations and data augmentation with adversarial $\ell_\infty$ perturbations via adversarial training (AT) on~\cifar, the increase in test error decreases as the training sample size increases. In \textbf{(b)}, robust self-training (RST+AT) not only mitigates the increase in test error from AT but even improves test error beyond that of the standard estimator.}
  \label{fig:sample-size-rst}
\end{figure}

\subsection{Empirical evaluation of RST}
\label{sec:empirical-rst}
\citet{carmon2019unlabeled} empirically evaluate RST with a focus on studying gains in the robust error. In this work, we focus on \emph{both} the standard and robust error and expand upon results from previous work. \citet{carmon2019unlabeled} used TRADES~\citep{zhang2019theoretically} as the robust loss in the general RST formulation~\refeqn{general-x}; we additionally evaluate RST with Projected Gradient Adversarial Training (AT)~\citep{madry2018towards} as the robust loss. ~\citet{carmon2019unlabeled} considered $\ell_\infty$ and $\ell_2$ perturbations. We study rotations and translations in addition to $\ell_\infty$ perturbations, and also study the effect of labeled training set size on standard and robust error. Table~\ref{table:adv-results} presents the main results. More experiment details appear in Appendix~\ref{sec:app-xreg-robustness}. 

Both RST+AT and RST+TRADES have lower robust and standard error than their supervised counterparts AT and TRADES across all perturbation types. This mirrors the theoretical analysis of RST in linear regression (Theorem~\ref{thm:linear-x}) where the RST estimator has small robust error while provably not sacrificing standard error, and never obtaining larger standard error than the standard estimator.  

\paragraph{Effect of labeled sample size.}
Recall that our work motivates studying the tradeoff between robust and standard error while taking \emph{generalization} from finite data into account. We showed that the gap in the standard error of a standard estimator and that of a robust estimator is large for small training set sizes and decreases as the labeled dataset is larger (Figure~\ref{fig:sample-size}). We now study the effect of RST as we vary the training set size in Figure~\ref{fig:sample-size-rst}. We find that RST+AT has \emph{lower} standard error than standard training across all sample sizes for small $\epsilon$, while simultaneously achieving lower robust error than AT (see Appendix~\ref{app:subsample-cifar}). In the small data regime where vanilla adversarial training hurts the standard error the most, we find that RST+AT gives about 3x more absolute improvement than in the large data regime. We note that this set of experiments are complementary to the experiments in~\citep{schmidt2018adversarially} which study the effect of the training set size only on robust error. 

\paragraph{Effect on transformations that do not hurt standard error.}
We also test the effect of RST on perturbations where robust training slightly improves standard error rather than hurting it. 
Since RST regularizes towards the standard estimator, one might suspect that the improvements from robust training disappear with RST.
In particular, we consider spatial transformations $T(x)$ that consist of simultaneous rotations and translations.
We use two common forms of robust training for spatial perturbations, where we approximately maximize over $T(x)$ with either adversarial (worst-of-10) or random augmentations~\citep{yang2019invariance, engstrom2019exploring}. Table~\ref{table:adv-results} (right) presents the results.
In the regime where vanilla robust training does not hurt standard error, RST in fact further improves the standard error by almost 1\% and the robust error by 2-3\% over the standard and robust estimators for both forms of robust training. Thus in settings where vanilla robust training improves standard error, RST seems to further amplify the gains while in settings where vanilla robust training hurts standard error, RST mitigates the harmful effect.

\paragraph{Comparison to other semi-supervised approaches.}
The RST estimator minimizes both a robust loss and a standard loss on the unlabeled data with pseudo-labels (bottom row, Figure~\ref{fig:matrix}). Both of these losses are necessary to simultaneously the standard and robust error over vanilla supervised robust training. Standard self-training, which only uses standard loss on unlabeled data, has very high robust error ($\approx 100\%$). Similarly, Robust Consistency Training, an extension of Virtual Adversarial Training~\citep{miyato2018virtual} that only minimizes a robust self-consistency loss on unlabeled data, marginally improves the robust error but actually \emph{hurts} standard error (Table~\ref{table:adv-results}). 

\section{Related Work}
\paragraph{Existence of a tradeoff.} Several works have attempted to explain the tradeoff between standard and robust error by studying simple models. These explanations are based on an \emph{inherent tradeoff} that persists even in the infinite data limit. In ~\citet{tsipras2019robustness, zhang2019theoretically, fawzi2018analysis}, standard and robust error are fundamentally at odds, meaning no classifier is both accurate and robust. In \citet{nakkiran2019adversarial}, the tradeoff is due to the hypothesis class not being expressive enough to contain an accurate and robust classifier even if it exists. In contrast, we explain the tradeoff in a more realistic setting with label-preserving consistent perturbations (like imperceptible $\ell_\infty$ perturbations or small rotations) in a well-specified setting (to mirror expressive neural networks) where there is no tradeoff with infinite data. In particular, our work takes into account generalization from finite data to explain the tradeoff.

In concurrent and independent work,~\citet{min2020curious} also study the effect of dataset size on the tradeoff. They prove that in a ``strong adversary'' regime, there is a tradeoff even with infinite data, as the perturbations are large enough to change the ground truth target. They also identify a ``weak adversary'' regime (smaller perturbations) where the gap in standard error between robust and standard estimators first increases and then decreases, with no tradeoff in the infinite data limit. Similar to our work, this provides an example of a tradeoff due to generalization from finite data. However, their experimental validation of the tradeoff trends is restricted to simulated settings and they do not study how to mitigate the tradeoff.

\paragraph{Mitigating the tradeoff.} To the best of our knowledge, ours is the first work that theoretically studies how to mitigate the tradeoff between standard and robust error. While robust self-training (RST) was proposed in recent works~\citep{carmon2019unlabeled, najafi2019robustness, uesato2019are} as a way to improve \emph{robust error}, we prove that RST eliminates the tradeoff between standard and robust error in noiseless linear regression and systematically study the effect on RST on the tradeoff with several different perturbations and adversarial training algorithms on~\cifar.

Interpolated Adversarial Training (IAT)~\citep{lamb2019interpolated} and Neural Architecture Search (NAS)~\citep{cubuk2017intriguing} were proposed to mitigate the tradeoff bbetween standard and robust error empirically. IAS considers a different training algorithm based on Mixup, NAS~\citep{cubuk2017intriguing} uses RL to search for more robust architectures. In Table~\ref{table:adv-results}, we also report the standard and robust errors of these methods. RST, IAT and NAS are incomparable as they find different tradeoffs between standard and robust error. Recently,~\citet{xie2020adversarial} showed that adversarial training with appropriate batch normalization (AdvProp) with small perturbations can actually \emph{improve} standard error. However, since they only aim to improve and evaluate the standard error, it is unclear if the robust error improves. We believe that since RST provides a complementary statistical perspective on the tradeoff, it can be combined with methods like IAT, NAS or AdvProp to see further gains in standard and robust errors. We leave this to future work.

\section{Conclusion}
We study the commonly observed increase in standard error upon adversarial training due to generalization from finite data in a well-specified setting with consistent perturbations. Surprisingly, we show that methods that augment the training data with consistent perturbations, such as adversarial training, can increase the standard error even in the simple setting of noiseless linear regression where the true linear function has zero standard and robust error. Our analysis reveals that the mismatch between the inductive bias of models and the underlying distribution of the inputs causes the standard error to increase even when the augmented data is perfectly labeled. This insight motivates a method that provably eliminates the tradeoff in linear regression by incorporating an appropriate regularizer that utilizes the distribution of the inputs. While not immediately apparent, we show that this is a special case of the recently proposed robust self-training (RST) procedure that uses additional unlabeled data to estimate the distribution of the inputs. Previous works view RST as a method to improve the robust error by increasing the sample size. Our work provides some theoretical justification for why RST improves \emph{both} the standard and robust error, thereby mitigating the tradeoff between accuracy and robustness. How to best utilize unlabeled data, and whether sufficient unlabeled data can completely eliminate the tradeoff remain open questions.

\newpage
\subsection*{Acknowledgements}

We are grateful to Tengyu Ma, Yair Carmon, Ananya Kumar, Pang Wei Koh, Fereshte Khani, Shiori Sagawa and Karan Goel for valuable discussions and comments. This work was funded by an Open Philanthropy Project Award and NSF Frontier Award as part of the Center for Trustworthy Machine Learning (CTML). AR was supported by Google Fellowship and Open Philanthropy AI Fellowship. SMX was supported by an NDSEG Fellowship. FY was supported by the Institute for Theoretical Studies ETH Zurich and the Dr. Max Rossler and the Walter Haefner Foundation. FY and JCD were supported by the Office of Naval Research Young Investigator Awards.

\bibliography{refdb/all}
\bibliographystyle{icml2020}

\appendix
\onecolumn
\section{Transformations to handle arbitrary matrix norms}
\label{sec:app-matrix-norms}
Consider a more general minimum norm estimator of the following form.
Given inputs $X$ and corresponding targets $y$ as training data, we study the interpolation estimator,
\begin{align}
  \label{eqn:min-norm}
  \hat{\theta} &= \arg \min \limits_{\theta} \Big \{ \theta^\top \minmat \theta :  X \theta = y \Big \}, 
\end{align}
where $\minmat$ is a positive definite (PD) matrix that incorporates prior knowledge about the true model.
For simplicity, we present our results in terms of the $\ell_2$ norm (ridgeless regression) as defined in Equation~\ref{eqn:min-norm}. However, all our results hold for arbitrary $\minmat$--norms via appropriate rotations. Given an arbitrary PD matrix $\minmat$, the rotated covariates $x \leftarrow \minmat^{-1/2} x$ and rotated parameters $\theta \leftarrow \minmat^{1/2} \theta$ maintain $y = X \theta$ and the $\minmat$-norm of parameters simplifies to $\|\theta\|_2$.

\section{Standard error of minimum norm interpolants}
\label{sec:app-bias}

\subsection{Projection operators}
\label{sec:proj-matrices}
The projection operators $\pistd$ and $\piaug$ are formally defined as follows.
\begin{align}
  \sigmastd = \xstd^\top\xstd,&~~\pistd = I - \sigmastd^+ \sigmastd \\
  \sigmaaug = \xstd^\top\xstd + \xext^\top\xext,&~~\piaug = I - \sigmaaug^+ \sigmaaug.
\end{align}

\subsection{Invariant transformations may have arbitrary nullspace components}
\label{sec:arbitrary-nullspace}

We show that the transformations which satisfy the invariance condition $(\tilde{x}-x)^\top\theta^\star=0$ where $\tilde{x}\in T(x)$ is a transformation of $x$ may have arbitrary nullspace components for general transfomation mappings $T$.
Let $\piparastd$ and $\pistd$ be the column space and nullspace projections for the original data $\xstd$.
The invariance condition is equivalent to
\begin{align}
    (\tilde{x}-x)^\top\theta^\star&=(\piparastd(\tilde{x} - x) + \pistd(\tilde{x}-x))^\top \theta^\star = 0
\end{align}
which implies that as long as $\pistd\theta^\star \neq 0$, then for any choice of nullspace component $\pistd(\tilde{x}) \in \Null(\xstd^\top \xstd)$, there is a choice of $\piparastd\tilde{x}$ which satisfies the condition.
Thus, we consider augmented points $\xext$ with arbitrary components in the nullspace of $\xstd$.

\subsection{Proof of Theorem~\ref{thm:main}}
\label{sec:app-existltwo}

Inequality~\eqref{eqn:exactwv} follows from
\begin{align}
    \stderr(\augest) - \stderr(\stdest) &= (\thetatrue - \augest )^\top \sigmapop (\thetatrue-\augest) - (\thetatrue - \stdest)^\top \sigmapop (\thetatrue-\stdest) \nonumber \\
    &= (\piaug \thetatrue )^\top \sigmapop \piaug \thetatrue - (\pistd \thetatrue)^\top \sigmapop \pistd \thetatrue \nonumber \\
    &= w^\top \sigmapop w - (w+v)^\top \sigmapop (w+v) \nonumber\\
    &= -2w^\top \sigmapop v -   v^\top \sigmapop v \label{eq:exactre}
\end{align}
by decomposition of $\pistd \thetatrue = v+ w$ where $v = \pistd \piparaaug \thetatrue$ and $w = \pistd \piaug \thetatrue$.
Note that the error difference does scale with $\|\thetatrue\|^2$, although the sign of the difference does not.

\subsection{Proof of Corollary~\ref{thm:cor}}
\label{sec:app-corollary1}
Corollary~\ref{thm:cor} presents three sufficient conditions under which the standard error of the augmented estimator $\stderr(\augest)$ is never larger than the standard error of the standard estimator $\stderr(\stdest)$.
\begin{enumerate}
\item When the population covariance $\sigmapop = I$, from Theorem~\ref{thm:main}, we see that
  \begin{align}
    \stderr(\stdest)  - \stderr(\augest) = v^\top v + 2 w^\top v = v^\top v \geq 0, 
  \end{align}
  since $v = \pistd \piparaaug \thetatrue$ and $w = \piaug \thetatrue$ are orthogonal.
\item When $\piaug = 0$, the vector $w$ in Theorem~\ref{thm:main} is $0$, and hence we get
  \begin{align}
    \stderr(\stdest)  - \stderr(\augest) = v^\top v \geq 0. 
  \end{align}
\item We prove the eigenvector condition in Section~\ref{sec:characterization} which studies the effect of augmenting with a single extra point in general. 
\end{enumerate}

\subsection{Proof of Proposition~\ref{prop:simple-complex}}
\label{sec:app-lowerbound}

The proof of Proposition~\ref{prop:simple-complex} is based on the following two lemmas that are also useful for characterization purposes in Corollary~\ref{cor:characterization}.

\begin{lemma}
\label{lem:simplela}
If a PSD matrix $\sigmapop$ has non-equal eigenvalues, one can find two unit vectors $w,v$ for which the following holds
\begin{align}
\label{eq:simplela}
    w^\top v=0 \qquad \text{and} \qquad w^\top \sigmapop v \neq 0
\end{align}
Hence, there exists a combination of original and augmentation dataset $\xstd, \xext$ such that condition~\eqref{eq:simplela} holds for two directions $v \in \Col(\pistd \piparaaug)$ and $w \in  \Col(\pistd \piaug)=\Col(\piaug)$.
\end{lemma}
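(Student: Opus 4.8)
The plan is to split the claim into its two halves: first a pure linear-algebra existence statement about $\sigmapop$, and then a realization of the resulting directions as column-space vectors of appropriately chosen projection operators. For the first half I would use the spectral decomposition of $\sigmapop$. Since $\sigmapop$ is PSD with at least two distinct eigenvalues, pick orthonormal eigenvectors $u_1, u_2$ with eigenvalues $\lambda_1 \neq \lambda_2$ and set $v = (u_1 + u_2)/\sqrt{2}$ and $w = (u_1 - u_2)/\sqrt{2}$. Both are unit vectors, and $w^\top v = \tfrac12(\|u_1\|^2 - \|u_2\|^2) = 0$ by orthonormality, so the first condition in \eqref{eq:simplela} holds. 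For the second, compute $\sigmapop v = (\lambda_1 u_1 + \lambda_2 u_2)/\sqrt2$, so that $w^\top \sigmapop v = \tfrac12(\lambda_1 - \lambda_2) \neq 0$. This fixes the pair $(v,w)$.

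For the ``Hence'' half, I would first reduce the two column spaces to concrete subspaces. Because every null vector of the augmented matrix is a null vector of $\xstd$, we have $\Null([\xstd;\xext]) \subseteq \Null(\xstd)$ and hence $\pistd \piaug = \piaug$; this immediately gives $\Col(\pistd\piaug) = \Col(\piaug) = \Null([\xstd;\xext])$. Moreover $\piparaaug = I - \piaug$, so $\pistd\piparaaug = \pistd - \piaug$, which is the orthogonal projection onto the part of $\Null(\xstd)$ orthogonal to $\Null([\xstd;\xext])$. Thus realizing $v \in \Col(\pistd\piparaaug)$ and $w \in \Col(\piaug)$ amounts to choosing data so that $w$ lies in the augmented nullspace and $v$ lies in $\Null(\xstd)$ but orthogonal to the augmented nullspace.

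To construct such data, I would take $\xstd$ to be any matrix whose rows span $\linspan\{v,w\}^\perp$, so that $\Null(\xstd) = \linspan\{v, w\}$, and augment with the single extra point $\singleaug = v$, i.e.\ $\xext = v^\top$. Then $\Null([\xstd;\xext]) = \linspan\{v,w\} \cap v^\perp = \linspan\{w\}$, using $w \perp v$; hence $\Col(\piaug) = \linspan\{w\}$ contains $w$, while $\Col(\pistd - \piaug)$ equals the orthogonal complement of $\linspan\{w\}$ inside $\linspan\{v,w\}$, namely $\linspan\{v\}$, which contains $v$. Both membership requirements are met, and $(v,w)$ already satisfy \eqref{eq:simplela}.

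The routine parts are the two direct computations; the only step that warrants care is the reduction $\pistd\piparaaug = \pistd - \piaug$, which rests on the nullspace containment $\Null([\xstd;\xext]) \subseteq \Null(\xstd)$ and the resulting identity $\pistd\piaug = \piaug$. I expect the mild subtlety to be verifying that the constructed data genuinely produces the intended nullspaces---in particular that intersecting $\linspan\{v,w\}$ with $v^\perp$ collapses exactly to $\linspan\{w\}$---rather than any deep difficulty.
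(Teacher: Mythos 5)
Your proof is correct and takes essentially the same route as the paper: the paper also produces $v,w$ by mixing two eigenvectors with distinct eigenvalues (your choice $v=(u_1+u_2)/\sqrt{2}$, $w=(u_1-u_2)/\sqrt{2}$ is the symmetric special case of its general coefficients $\beta,\alpha$), and it also realizes the ``Hence'' part by constructing data whose standard nullspace contains both $v$ and $w$ while the extra data activates only the $v$ direction. Your explicit construction ($\xstd$ with rows spanning $\linspan\{v,w\}^\perp$ and $\xext = v^\top$) is a concrete instantiation of the paper's simultaneous-diagonalization argument, with the identity $\pistd\piparaaug = \pistd - \piaug$ verified the same way via $\Null([\xstd;\xext]) \subseteq \Null(\xstd)$.
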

Note that neither $w$ nor $v$ can be eigenvectors of $\sigmapop$ in order for both conditions in equation~\eqref{eq:simplela} to hold. Given a population covariance, fixed original and augmentation data for which condition~\eqref{eq:simplela} holds, we can now explicitly construct $\thetatrue$ for which augmentation increases standard error.

\begin{lemma}
\label{lem:constructexist}
Assume $\sigmapop, \xstd, \xext$ are fixed. Then condition~\eqref{eq:simplela} holds for two directions $v \in \Col(\pistd \piparaaug)$ and $w \in  \Col(\pistd \piaug)$ iff there exists a $\thetatrue$ such that $\stderr(\augest) - \stderr(\stdest) \geq c$ for some $c>0$.
Furthermore, the $\ell_2$ norm of $\thetatrue$ needs to satisfy the following lower bounds with $c_1 := \|\augest\|^2 - \|\stdest\|^2$
\begin{align}
    \|\thetatrue\|^2 - \|\augest\|^2 &\geq \beta_1 c_1 +\beta_2 \frac{c^2}{c_1} \nonumber\\
    \|\thetatrue\|^2 - \|\stdest\|^2 &\geq (\beta_1 + 1)c_1 + \beta_2 \frac{c^2}{c_1} \label{eq:normdiff}
\end{align}
where $\beta_i$ are constants that depend on $\xstd, \xext, \sigmapop$.
\end{lemma}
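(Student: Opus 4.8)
The plan is to reduce every quantity in the statement to the two orthogonal vectors $v = \pistd \piparaaug \thetatrue$ and $w = \piaug \thetatrue$ from \refthm{main}, and then read off both the ``iff'' and the norm bounds from the single scalar identity
\[
\stderr(\augest)-\stderr(\stdest) = -\,v^\top \sigmapop v - 2\,w^\top \sigmapop v,
\]
which is exactly \eqref{eq:exactre}. First I would record the norm bookkeeping. Since $\stdest = \piparastd \thetatrue$ and $\augest = \piparaaug \thetatrue$ are the min-norm interpolants, one checks $\augest = \stdest + v$ and $\thetatrue = \augest + w$, with $\stdest \perp v$ (row space vs.\ nullspace of $\xstd$) and $\augest \perp w$ (column space of $\piparaaug$ vs.\ $\piaug$). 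Pythagoras then gives the clean reductions $c_1 = \|\augest\|^2 - \|\stdest\|^2 = \|v\|^2$, $\|\thetatrue\|^2 - \|\augest\|^2 = \|w\|^2$, and $\|\thetatrue\|^2 - \|\stdest\|^2 = \|w\|^2 + c_1$. In particular the second displayed bound is just the first plus $c_1$, so it suffices to lower bound $\|w\|^2$.

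For the ``iff'' I would argue both directions from the identity above, using that $\Col(\pistd\piparaaug)$ and $\Col(\piaug)$ are orthogonal subspaces (so $w^\top v = 0$ holds automatically, since $\pistd\piparaaug = \pistd - \piaug$ is the orthogonal projection onto the complement of $\Col(\piaug)$ inside $\Null(\xstd)$). For the reverse direction, $\stderr(\augest)-\stderr(\stdest) \ge c > 0$ forces $-2 w^\top \sigmapop v \ge c + v^\top \sigmapop v > 0$; hence $w^\top \sigmapop v \neq 0$ and $v,w \neq 0$, and normalizing $v,w$ produces unit vectors satisfying \eqref{eq:simplela}. For the forward direction, given unit $v,w$ in the two subspaces with $w^\top \sigmapop v \neq 0$, I would realize them as projections of a single parameter by setting $\thetatrue = a v + b w$; because the subspaces are orthogonal and sum to $\Null(\xstd)$, this gives $\pistd\piparaaug\thetatrue = a v$ and $\piaug\thetatrue = b w$, so the error gap is $-2 a b\,(w^\top \sigmapop v) - a^2 (v^\top \sigmapop v)$, and choosing the sign of $b$ to make the cross term positive and $|b|$ large makes it exceed any target $c$.

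For the norm bounds I would work from the constraint $-2 w^\top \sigmapop v \ge c + v^\top \sigmapop v$. Cauchy--Schwarz gives $-2 w^\top \sigmapop v \le 2\|w\|\,\|v\|\,|\bar w^\top \sigmapop \bar v|$ for the unit directions $\bar v, \bar w$; I would then define $M$ as the maximum of $|\bar w^\top \sigmapop \bar v|$ over unit $\bar v \in \Col(\pistd\piparaaug)$, $\bar w \in \Col(\piaug)$, and $\lambda$ as the smallest eigenvalue of $\sigmapop$ restricted to $\Col(\pistd\piparaaug)$ --- both depending only on $\xstd,\xext,\sigmapop$. Combining with $v^\top \sigmapop v \ge \lambda \|v\|^2$ yields $\|w\| \ge \frac{c}{2M\|v\|} + \frac{\lambda\|v\|}{2M}$; squaring, dropping the nonnegative cross term, and substituting $\|v\|^2 = c_1$ gives $\|w\|^2 \ge \frac{\lambda^2}{4M^2} c_1 + \frac{1}{4M^2}\frac{c^2}{c_1}$, i.e.\ $\beta_1 = \lambda^2/(4M^2)$ and $\beta_2 = 1/(4M^2)$, after which adding $c_1$ recovers the second inequality.

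The main obstacle I anticipate is making $\beta_1,\beta_2$ genuinely functions of $\xstd,\xext,\sigmapop$ alone rather than of the particular $\thetatrue$; this is handled by replacing the $\thetatrue$-dependent quantities $|\bar w^\top \sigmapop \bar v|$ and $v^\top \sigmapop v/\|v\|^2$ by the extremal values $M$ and $\lambda$ over the fixed subspaces $\Col(\pistd\piparaaug)$ and $\Col(\piaug)$. A secondary subtlety is that $\sigmapop$ need only be PSD, so $\lambda$ may be $0$ (forcing $\beta_1 = 0$); the inequalities remain valid as lower bounds, and the forward construction must additionally guarantee $w^\top \sigmapop v \neq 0$, which is precisely what \eqref{eq:simplela} supplies.
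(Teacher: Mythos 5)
Your proposal is correct and follows essentially the same route as the paper's proof: both reduce everything to the orthogonal decomposition of $\pistd\thetatrue$ into $v \in \Col(\pistd\piparaaug)$ and $w \in \Col(\piaug)$ together with identity~\eqref{eq:exactre}, obtain necessity of $w^\top \sigmapop v \neq 0$ from positivity of $v^\top \sigmapop v$, construct $\thetatrue$ inside $\linspan\{v,w\}$ for sufficiency, and derive the norm bounds by a Cauchy--Schwarz argument yielding the identical constants $\beta_1 = \lambda_{\min}^2(V^\top \sigmapop V)/(4\lambda_{\max}^2(W^\top \sigmapop V))$ and $\beta_2 = 1/(4\lambda_{\max}^2(W^\top \sigmapop V))$. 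Your Pythagoras bookkeeping ($c_1 = \|v\|^2$, $\|\thetatrue\|^2 - \|\augest\|^2 = \|w\|^2$) and the large-$|b|$ construction are cleaner renditions of the paper's explicit basis expansion and exact scaling of $\xi$, not a different proof.
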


Proposition~\ref{prop:simple-complex} follows directly from the second statement of Lemma~\ref{lem:constructexist}
by minimizing the bound~\eqref{eq:normdiff} with respect to $c_1$ which is a free parameter to be
chosen during construction of $\thetatrue$ (see proof of Lemma~\eqref{lem:constructexist}.
The minimum is attained for $c_1 = 2 \sqrt{(\beta_1 +1) (\beta_2 c^2)}$.
We hence conclude that $\thetatrue$ needs to be sufficiently more complex than a good standard solution, i.e. $\|\thetatrue\|^2_2 - \| \stdest\|^2_2 > \gamma c$ where $\gamma > 0$ is a constant that depends on the $\xstd, \xext$.

\subsection{Proof of technical lemmas}
In this section we prove the technical lemmas that are used to prove Theorem~\ref{thm:main}.
\subsubsection{Proof of Lemma~\ref{lem:constructexist}}

Any vector $\pistd\theta\in \Null(\sigmastd)$ can be decomposed into orthogonal components $\pistd\theta = \pistd\piaug\theta + \pistd \piparaaug \theta$.
Using the minimum-norm property, we can then always decompose the (rotated) augmented estimator $\augest \in \Col (\piaug)=\Col(\pistd\piaug)$ and true parameter $\thetatrue$ by
\begin{align*}
    \augest &= \stdest + \sum_{v_i \in \ext} \zeta_i v_i\\
    \thetatrue &= \augest + \sum_{w_j\in \rest}  \xi_j w_j,
\end{align*}
where we define ``$\ext$'' as the set of basis vectors which span
$\Col(\pistd \piparaaug)$ and respectively ``$\rest$''
for $\Null(\sigmaaug)$.  Requiring the standard error increase to be some
constant $c>0$ can be rewritten using identity~\eqref{eq:exactre} as
follows
\begin{align}
    \stderr(\augest)- \stderr(\stdest) &= c  \nonumber\\
        \iff (\sum_{v_i \in \ext} \zeta_i v_i)^\top \sigmapop (\sum_{v_i \in \ext} \zeta_i v_i) + c &= - 2(\sum_{w_j \in \rest}  \xi_j w_j) \sigmapop (\sum_{v_i \in \ext} \zeta_i v_i) \nonumber\\
        \iff (\sum_{v_i \in \ext} \zeta_i v_i)^\top \sigmapop (\sum_{v_i \in \ext} \zeta_i v_i) + c&= -2 \sum_{w_j \in \rest, v_i \in \ext} \xi_j \zeta_i w_j^\top \sigmapop v_i  \label{eq:exacttheta}
\end{align}
The left hand side of equation~\eqref{eq:exacttheta} is always
positive, hence it is necessary for this equality to hold with any
$c>0$, that there exists at least one pair $i,j$ such that $w_j^\top
\sigmapop v_i \neq 0$ and one direction of the iff statement is
proved.

For the other direction, we show that if there exist $v
\in \Col(\pistd \piparaaug)$ and $w
\in \Col(\pistd \piaug)$ for which
condition~\eqref{eq:simplela} holds (wlog we assume that the $w^\top
\sigmapop v < 0$) we can construct a $\thetatrue$
for which the inequality~\eqref{eqn:exactwv} in Theorem~\ref{thm:main} holds as follows:

It is then necessary by our assumption that $\xi_j \zeta_i w_j^\top \sigmapop v_i > 0$ for at least some $i, j$. We can then set $\zeta_i > 0$ such that  $\|\augest-\stdest\|^2 = \|\zeta\|^2 = c_1 >0$, i.e. that the augmented estimator is not equal to the standard estimator (else obviously there can be no difference in error and equality~\eqref{eq:exacttheta} cannot be satisfied for any desired error increase $c>0$).

The choice of $\xi$ minimizing $\|\theta^\star - \augest\|^2 = \sum_j \xi_j^2$ that also satisfies equation~\eqref{eq:exacttheta} is an appropriately scaled vector in the direction of $x = W^\top \sigmapop V \zeta$ where we define $W :=[w_1, \dots, w_{|\rest|}]$ and $V := [v_1, \dots, v_{|\ext|}]$. Defining $c_0 = \zeta^\top V^\top \sigmapop V \zeta$ for convenience and then setting
\begin{equation}
\label{eq:xidef}
    \xi = - \frac{c_0 + c}{2\|x\|^2_2} x
\end{equation}
which is well-defined since $x\neq 0$, yields a $\thetatrue$ such that augmentation increases standard error. It is thus necessary for $\stderr(\augest) - \stderr(\stdest) = c$ that
\begin{align*}
    \sum_j \xi_j^2 &= \frac{(c_0 + c)^2}{4 \| W^\top \sigmapop V\zeta\|^2} = \frac{(\zeta^\top V^\top\sigmapop V \zeta + c)^2}{4\zeta^\top V^\top \sigmapop W W^\top \sigmapop V \zeta}\\
    &\geq \frac{(\zeta^\top V^\top\sigmapop V \zeta)^2}{4 \zeta^\top V^\top \sigmapop W W^\top \sigmapop V \zeta} + \frac{c^2}{4 \zeta^\top V^\top \sigmapop W W^\top \sigmapop V \zeta}\\
    &\geq \frac{c_1}{4} \frac{\lambda_{\min}^2(V^\top \sigmapop V)}{\lambda^2_{\max}( W^\top \sigmapop V)} + \frac{c^2}{4 c_1 \lambda^2_{\max}(W^\top \sigmapop V)}.
\end{align*}
By assuming existence of $i,j$ such that $\xi_j \zeta_i w_j^\top \sigmapop v_i \neq 0$, we are guaranteed that $\lambda^2_{\max}(W^\top \sigmapop V) > 0$.

Note due to construction we have $\|\thetatrue\|_2^2= \|\stdest\|_2^2 + \sum_i \zeta_i^2 + \sum_j\xi_j^2$ and plugging in the choice of $\xi_j$ in equation~\eqref{eq:xidef} we have
\begin{align*}
    \|\thetatrue\|_2^2 - \|\stdest\|_2^2 &\geq c_1 \left[ 1 + \frac{\lambda_{\min}^2(V^\top \sigmapop V)}{4 \lambda^2_{\max}( W^\top \sigmapop V)}\right] + \frac{c^2}{4  \lambda^2_{\max}(W^\top \sigmapop V)} \frac{1}{c_1}.\\
\end{align*}
Setting $\beta_1 = \left[ 1 + \frac{\lambda_{\min}^2(V^\top \sigmapop V)}{4 \lambda^2_{\max}( W^\top \sigmapop V)}\right] $, $\beta_2 = \frac{1}{4 \lambda^2_{\max}(W^\top \sigmapop V)}$ yields the result.

\subsubsection{Proof of Lemma~\ref{lem:simplela}}
Let $\lambda_1,\dots, \lambda_m$ be the $m$ non-zero eigenvalues of $\sigmapop$ and $u_i$ be the corresponding eigenvectors.
Then choose $v$ to be any combination of the eigenvectors $v = U \beta$ where $U = [u_1, \dots, u_m]$ where at least $\beta_i, \beta_j \neq 0$ for $\lambda_i \neq \lambda_j$.
We next construct $w = U \alpha$ by choosing $\alpha$ as follows such that the inequality in~\eqref{eq:simplela} holds:
\begin{align*}
    \alpha_i = \frac{\beta_j}{\beta_i^2 + \beta_j^2}\\
    \alpha_j = \frac{- \beta_i}{\beta_i^2 + \beta_j^2}
\end{align*}
and $\alpha_k=0$ for $k\neq i,j$. Then we have that $\alpha^\top \beta = 0$ and hence $w^\top v =0$. Simultaneously
\begin{align*}
    w^\top \sigmapop v &= \lambda_i \beta_i \alpha_i + \lambda_j \beta_j \alpha_j \\
    &= (\lambda_i -\lambda_j) \frac{\beta_i \beta_j}{\beta_i^2 + \beta_j^2} \neq 0
\end{align*}
which concludes the proof of the first statement.

We now prove the second statement by constructing $\sigmastd=\xstd^\top\xstd, \sigmaext=\xext^\top\xext$ using $w,v$. We can then obtain $\xstd, \xext$ using any standard decomposition method to obtain $\xstd, \xext$.
We construct $\sigmastd, \sigmaext$ using $w,v$.
Without loss of generality, we can make them simultaneously diagonalizable. We construct a set of eigenvectors that is the same for both matrices paired with different eigenvalues. Let the shared eigenvectors include $w,v$. Then if we set the corresponding eigenvalues $\lambda_w(\sigmaext) = 0, \lambda_v(\sigmaext) > 0$ and $\lambda_w(\sigmastd) = 0, \lambda_v(\sigmastd) = 0$, then $\lambda_w(\sigmaaug)= 0$ such that $w\in \Col(\pistd \piaug)$ and $v\in \Col(\pistd \piparaaug)$. This shows the second statement. With this, we can design a $\thetatrue$ for which augmentation increases standard error as in Lemma~\ref{lem:constructexist}.


\subsection{Characterization Corollary~\ref{cor:characterization}}
\label{sec:characterization}

A simpler case to analyze is when we only augment with one extra data point. The following corollary characterizes which single augmentation directions lead to higher prediction error for the augmented estimator.
\begin{restatable}[]{corollary}{characterizationsingle}
\label{cor:characterization}
The following characterizations hold for augmentation directions that do not cause the standard error of the augmented estimator to be higher than the original estimator.
\begin{enumerate}[(a)]
    \item \emph{(in terms of ratios of inner products)} For a given $\thetatrue$, data augmentation does not increase the standard error of the augmented estimator for a single augmentation direction $\singleaug$ if
    \begin{equation}
        \label{eq:singleaug}
                \frac{\singleaug^\top \pistd \sigmapop \pistd \singleaug}{\singleaug^\top \pistd \singleaug} -  2 \frac{(\pistd \singleaug)^\top \sigmapop \pistd \theta^\star}{\singleaug^\top \pistd \theta^\star} \leq 0
            \end{equation}
    \item \emph{(in terms of eigenvectors)} Data augmentation does not increase standard error for any $\thetatrue$ if $\pistd \singleaug$ is an eigenvector of $\sigmapop$. However if one augments in the direction of a mixture of eigenvectors of $\sigmapop$ with different eigenvalues, there exists $\thetatrue$ such that augmentation increases standard error.

    \item \emph{(depending on well-conditioning of $\sigmapop$)} If $\frac{\lambda_{\max}(\sigmapop)}{\lambda_{\min}(\sigmapop)} \leq 2$ and $\pistd \thetatrue$ is an eigenvector of $\sigmapop$, then no augmentations $\singleaug$ increase standard error.
\end{enumerate}
\end{restatable}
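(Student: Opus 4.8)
The plan is to specialize the general difference formula of Theorem~\ref{thm:main} to the single-point case and read off all three claims from the resulting rank-one expression. First I would compute the two projections when $\xext$ is a single row $\singleaug^\top$. Writing $\bar x := \pistd\singleaug$ for the part of the augmentation direction lying in $\Null(\xstd)$, I would observe that for $z\in\Null(\xstd)$ the extra interpolation constraint reads $\singleaug^\top z = \bar x^\top z$, so that $\Null([\xstd;\singleaug]) = \Null(\xstd)\cap\bar x^\perp$; i.e.\ passing from $\pistd$ to $\piaug$ removes exactly the one direction $\bar x$ (assuming $\bar x\neq 0$, since otherwise the augmentation lies in $\Col(\xstd)$ and changes nothing). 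Consequently the decomposition $\pistd\thetatrue = v + w$ of Theorem~\ref{thm:main} becomes explicit: $w = \piaug\thetatrue$ is the component orthogonal to $\bar x$ inside $\Null(\xstd)$, and $v = \pistd\piparaaug\thetatrue$ is the rank-one projection $v = \bar x\,(\bar x^\top\thetatrue)/\|\bar x\|^2$. I would record the scalars $a := \singleaug^\top\pistd\thetatrue = \bar x^\top\thetatrue$, $s := \singleaug^\top\pistd\singleaug = \|\bar x\|^2$, $b := (\pistd\singleaug)^\top\sigmapop\pistd\thetatrue$, and $q := \singleaug^\top\pistd\sigmapop\pistd\singleaug = \bar x^\top\sigmapop\bar x$.

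For part (a), I substitute $w = \pistd\thetatrue - v$ into the criterion $v^\top\sigmapop v + 2w^\top\sigmapop v\geq 0$, which by Theorem~\ref{thm:main} is exactly $\stderr(\stdest)\geq\stderr(\augest)$; this collapses to $2(\pistd\thetatrue)^\top\sigmapop v - v^\top\sigmapop v\geq 0$. Plugging in the rank-one $v$ gives $2ab/s - a^2 q/s^2\geq 0$, and factoring out the positive scalar $a^2/s$ (the degenerate case $a=0$ forces $v=0$ and makes the inequality trivial) yields $q/s - 2b/a\leq 0$, which is precisely the stated condition~\eqref{eq:singleaug}.

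Parts (b) and (c) then follow from the same rank-one identity under the two different eigenvector hypotheses. If $\bar x=\pistd\singleaug$ is an eigenvector of $\sigmapop$ with eigenvalue $\mu\geq 0$, then $\sigmapop v\parallel\bar x$, so $w^\top\sigmapop v=0$ (as $w\perp\bar x$) and the difference equals $\mu\|v\|^2\geq 0$ for every $\thetatrue$, giving the first assertion of (b); for its converse I would invoke Lemma~\ref{lem:constructexist}, noting that the single-augmentation space $\Col(\pistd\piparaaug)$ is the line $\linspan(\bar x)$, so it suffices to exhibit $w\in\Col(\piaug)$ with $w\perp\bar x$ and $w^\top\sigmapop\bar x\neq 0$, which exists exactly when $\bar x$ is a mixture of eigenvectors with distinct eigenvalues (so $\sigmapop\bar x$ has a nonzero component off $\bar x$), as in Lemma~\ref{lem:simplela}. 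For part (c) I would instead use the eigenvector hypothesis on $p:=\pistd\thetatrue$: writing $\sigmapop p = \mu p$ with $\mu\in[\lambda_{\min},\lambda_{\max}]$ and taking $\bar x$ to be a unit vector, the same substitution together with $p^\top\sigmapop\bar x=\mu\,p^\top\bar x$ yields the clean identity $v^\top\sigmapop v + 2w^\top\sigmapop v = (\bar x^\top p)^2\,(2\mu - \bar x^\top\sigmapop\bar x)$. Since $\bar x^\top\sigmapop\bar x\leq\lambda_{\max}\leq 2\lambda_{\min}\leq 2\mu$ whenever $\lambda_{\max}/\lambda_{\min}\leq 2$, the bracket is nonnegative for every augmentation direction $\singleaug$, which is the claim.

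The main obstacle I anticipate is the bookkeeping in the first step: correctly identifying that a single augmentation removes exactly the direction $\bar x=\pistd\singleaug$ and that $v$ is the associated rank-one projection, since every later claim hinges on this reduction. Once that is in hand, parts (a) and (c) are direct algebra, and the converse in (b) is the only place requiring the existence machinery of Lemmas~\ref{lem:simplela}--\ref{lem:constructexist} rather than a direct computation.
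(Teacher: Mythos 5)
Your proposal is correct and follows essentially the same route as the paper: identify the rank-one decomposition $v = \pistd\singleaug(\singleaug^\top\pistd\thetatrue)/\|\pistd\singleaug\|^2$, $w = \pistd\thetatrue - v$, substitute into the error-difference formula of Theorem~\ref{thm:main} to obtain condition~\eqref{eq:singleaug}, handle the eigenvector cases by noting $w^\top\sigmapop v$ vanishes (resp.\ bounding the Rayleigh quotient by $\lambda_{\max} \leq 2\lambda_{\min} \leq 2\mu$), and invoke Lemmas~\ref{lem:simplela} and~\ref{lem:constructexist} for the converse in (b). Your direct computations for (b) and (c) via Theorem~\ref{thm:main} rather than through the ratio form of part (a) are algebraically identical to the paper's, and your explicit handling of the sign when factoring out $a^2/s$ is, if anything, slightly more careful than the paper's division by $\singleaug^\top\pistd\thetatrue$.
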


The form in Equation~\eqref{eq:singleaug} compares ratios of inner products of $\pistd \singleaug$ and $\pistd \theta^\star$ in two spaces: the one in the  numerator is weighted by $\sigmapop$ whereas the denominator is the standard inner product.
Thus, if $\sigmapop$ scales and rotates rather inhomogeneously, then augmenting with $\singleaug$ may hurt standard error.
Here again, if $\sigmapop=\gamma I$ for $\gamma>0$, then the condition must hold.

\subsubsection{Proof of Corollary~\ref{cor:characterization} (a)}
Note that for a single augmentation point $\xext = \singleaug^\top$, the orthogonal decomposition of $\pistd \thetatrue$ into $\Col(\piaug)$ and $\Col(\pistd\piparaaug)$ is defined by $v = \frac{{\pistd\singleaug}^\top \thetatrue}{\|{\pistd\singleaug}\|^2} {\pistd\singleaug}$ and $w= \pistd \thetatrue - v$ respectively. Plugging back into into identity~\eqref{eq:exactre} then
yields the following condition for safe augmentations:
\begin{align}
    &2(v-\pistd \thetatrue)^\top \sigmapop v- v^\top\sigmapop v \leq 0 \label{eq:pred_diff}\\
    &v^\top \sigmapop v - 2 (\pistd \thetatrue)^\top \sigmapop v \leq 0 \nonumber\\
    \iff &{\pistd\singleaug}^\top \sigmapop {\pistd\singleaug} \leq 2 (\pistd \thetatrue)^\top \sigmapop {\pistd\singleaug} \cdot \frac{\|{\pistd\singleaug}\|^2}{{\pistd\singleaug}^\top \thetatrue} \nonumber
\end{align}
Rearranging the terms yields inequality \eqref{eq:singleaug}.

Safe augmentation directions for specific choices of $\thetatrue$ and $\sigmapop$ are illustrated in Figure~\ref{fig:simpleexistence}.

\subsubsection{Proof of Corollary~\ref{cor:characterization} (b)}
\label{app:minimax}

Assume that $\pistd \singleaug$ is an eigevector of $\sigmapop$ with eigenvalue $\lambda > 0$.
We have
\begin{equation*}
    \frac{\singleaug^\top \pistd \sigmapop \pistd \singleaug}{\singleaug^\top \pistd \singleaug} -  2 \frac{(\pistd \singleaug)^\top \sigmapop \pistd \theta^\star}{\singleaug^\top \pistd \theta^\star} = -\lambda < 0
\end{equation*}
for any $\thetatrue$.
Hence by Corollary~\ref{cor:characterization} (a), the standard error doesn't increase by augmenting with eigenvectors of $\sigmapop$ for any $\thetatrue$.

When the single augmentation direction $v$ is not an eigenvector of $\sigmapop$, by Lemma~\ref{lem:simplela} one can find $w$ such that $w^\top \sigmapop v \neq 0$. The proof in Lemma~\ref{lem:simplela} gives an explicit construction  for $w$ such that condition~\eqref{eq:simplela} holds and the result then follows directly by Lemma~\ref{lem:constructexist}.

\subsubsection{Proof of Corollary~\ref{cor:characterization} (c)}
\label{app:well-conditioning}
Suppose $\sigmapop \pistd \theta^\star = \lambda \pistd \theta^\star $ for some $\lambda_{\min}(\sigmapop) \leq \lambda \leq \lambda_{\max}(\sigmapop)$.
Then starting with the expression \eqref{eq:singleaug},
\begin{align*}
    \frac{\singleaug^\top \pistd \sigmapop \pistd \singleaug}{\singleaug^\top \pistd \singleaug} -  2 \frac{(\pistd \singleaug)^\top \sigmapop \pistd \theta^\star}{\singleaug^\top \pistd \theta^\star} &=
    \frac{\singleaug^\top \pistd \sigmapop \pistd \singleaug}{\singleaug^\top \pistd \singleaug} -  2\lambda\\
    &\leq \lambda_{\max}(\sigmapop) -  2\lambda < 0
\end{align*}
by applying $\frac{\lambda_{\max}(\sigmapop)}{\lambda_{\min}(\sigmapop)}\leq 2$. Thus when $\pistd \theta^\star$ is an eigenvector of $\sigmapop$, there are no augmentations $\singleaug$ that increase the standard error.

\section{Details for spline staircase}
\label{app:splines}

We describe the data distribution, augmentations, and model details for the spline experiment in Figure~\ref{fig:sample-size} and toy scenario in Figure~\ref{fig:spline}. Finally, we show that we can construct a simplified family of spline problems where the ratio between standard errors of the augmented and standard estimators increases unboundedly as the number of stairs.

\subsection{True model}
\label{sec:splinedist}

We consider a finite input domain  
\begin{equation}
\label{eq:defT}
    \sT = \{0, \epsilon, 1, 1+\epsilon, \hdots, s-1, s-1+\epsilon\}
\end{equation}
for some integer $s$ corresponding to the total number of ``stairs'' in the staircase problem.
Let $\tline \subset \sT = \{0,1,\dots,s-1\}$.
We define the underlying function $f^\star: \R \mapsto \R$ as $f^\star(t) = \lfloor t \rfloor$. This function takes a staircase shape, and is linear when restricted to $\tline$.

\paragraph{Sampling training data $\xstd$} We describe the data distribution in terms of the one-dimensional input $t$, and by the one-to-one correspondence with spline basis features $x=X(t)$, this also defines the distribution of spline features $x\in\sX$. Let $w\in \Delta_s$ define a distribution over $\tline$ where $\Delta_s$ is the probability simplex of dimension $s$.
We define the data distribution with the following generative process for one sample $t$.
First, sample a point $i$ from $\tline$ according to the categorical distribution described by $w$, such that $i\sim \text{Categorical}(w)$.
Second, sample $t$ by perturbing $i$ with probability $\delta$ such that
\[
t =
\begin{cases}
    i & \text{w.p. } 1-\delta\\
    i+\epsilon & \text{w.p. } \delta.
\end{cases}
\]
The sampled $t$ is in $\tline$ with probability $1-\delta$ and $\tline^c$ with probability $\delta$, where we choose $\delta$ to be small.

\paragraph{Sampling augmented points $\xext$}
For each element $t_i$ in the training set, we augment with $\tilde{T}_i=[u \uarsim ~B(t_i)]$, an input chosen uniformly at random from $B(t_i)=\{\lfloor t_i \rfloor, \lfloor t_i \rfloor + \epsilon\}$. 
Recall that in our work, we consider data augmentation where the targets associated with the augmented points are from the ground truth oracle.
Notice that by definition, $f^\star(\tilde{t}_i) = f^\star(t_i)$ for all $\tilde{t}\in B(t_i)$, and thus we can set the augmented targets to be $\tilde{y}_i = y_i$.
This is similar to random data augmentation in images~\citep{yaeger1996effective, krizhevsky2012imagenet}, where inputs are perturbed in a way that preserves the label.

\subsection{Spline model}
We parameterize the spline predictors as $f_{\theta}(t) = \theta^\top X(t)$ where $X: \R \rightarrow \R^d$ is the cubic B-spline feature mapping~\citep{friedman2001elements} and the norm of $f_\theta(t)$ can be expressed as $\theta^\top \minmat \theta$ for a matrix $\minmat$ that penalizes a large second derivative norm where $[\minmat]_{ij} = \int X_i^{''} (u) X_j^{''} (u)du$.
Notice that the splines problem is a linear regression problem from $\R^d$ to $\R$ in the feature domain $X(t)$, allowing direct application of Theorem~\ref{thm:main}.
As a linear regression problem, we define the finite domain as $\sX=\{X(t): t\in\sT\}$ containing $2s$ elements in $\R^d$.
There is a one-to-one correspondence between $t$ and $X(t)$, such that $X^{-1}$ is well-defined.
We define the features that correspond to inputs in $\tline$ as $\xline = \{x: X^{-1}(x) \in \tline\}$.
Using this feature mapping, there exists a $\theta^\star$ such that $f_{\theta^\star}(t) = f^\star(t)$ for $t\in \sT$.

Our hypothesis class is the family of cubic B-splines as defined in~\citep{friedman2001elements}.
Cubic B-splines are piecewise cubic functions, where the endpoints of each cubic function are called the knots.
In our example, we fix the knots to be
$[0, \epsilon, 1, \dots, s-1, s-1+\epsilon]$,
which places a knot on every point in $\sT$. This ensures that the function class contains an interpolating function on all $t \in \sT$, i.e. for some $\theta^\star$,
\begin{align*}
    f_{\theta^\star}(t) = {\theta^\star}^\top  X(t) = f^\star(t)=\lfloor t \rfloor.
\end{align*}

We solve the minimum norm problem
\begin{align}
    \stdest = \argmin_{\theta} \{\theta^\top \minmat \theta: \xstd\theta = \ystd\}
\end{align}
for the standard estimator and the corresponding augmented problem to obtain the augmented estimator.

\begin{figure}[t]
  \centering
  \begin{subfigure}{0.3\textwidth}
    \centering
    \includegraphics[scale=0.37]{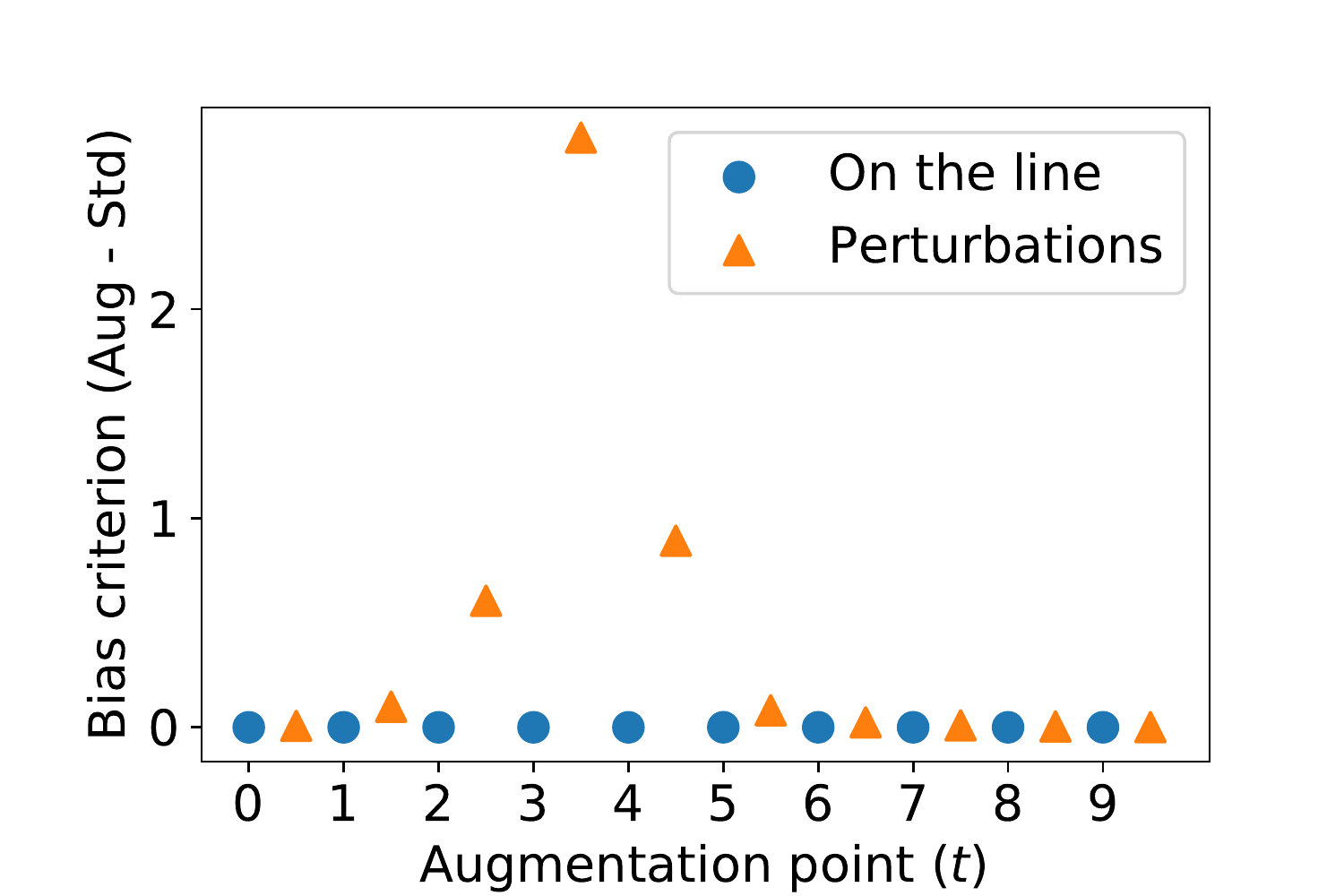}
  \end{subfigure}
  \begin{subfigure}{0.3\textwidth}
      \centering
      \includegraphics[scale=0.3]{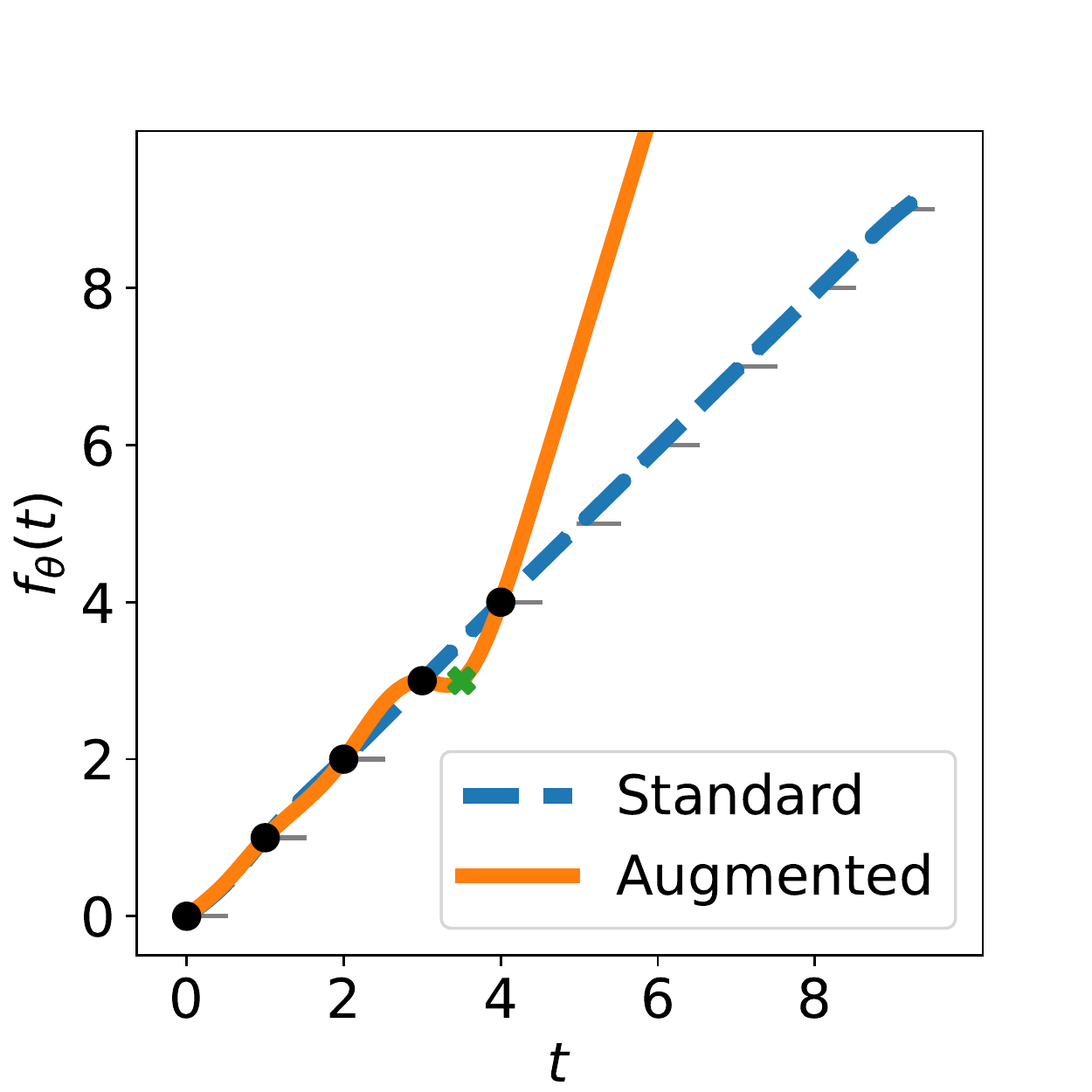}
      \caption{Augment with $x=\Phi(3.5)$}
  \end{subfigure}
  \begin{subfigure}{0.3\textwidth}
      \centering
      \includegraphics[scale=0.3]{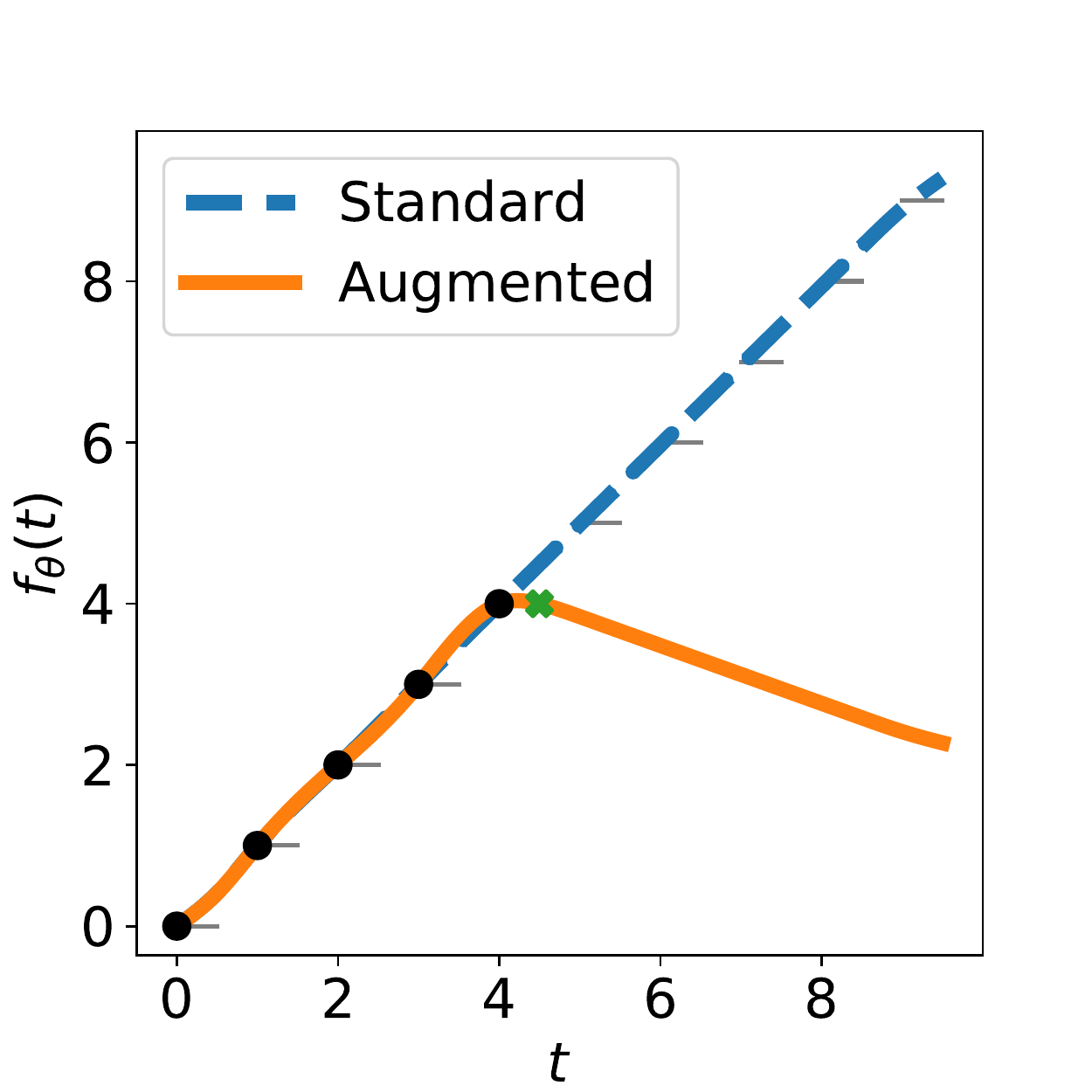}
      \caption{Augment with $x=\Phi(4.5)$}
  \end{subfigure}
    \caption{
    Visualization of the effect of single augmentation points in the noiseless spline problem given an initial dataset $\xstd = \{\Phi(t): t\in\{0, 1, 2, 3, 4\}\}$. The standard estimator defined by $\xstd$ is linear.
      \textbf{(a)} Plot of the difference term in Corollary~\ref{cor:characterization} (a), is positive when augmenting a single point causes higher test error. Augmenting with points on $\xline$ does not affect the bias, but augmenting with any element of $\{X(t): t\in \{2.5, 3.5, 4.5\}\}$ hurts the bias of the augmented estimator dramatically.
      \textbf{(b), (c)} Augmenting with $X(3.5)$ or $X(4.5)$ hurts the bias by changing the direction of extrapolation. 
      }
    \label{fig:spline_condition}
\end{figure}

\subsection{Evaluating Corollary~\ref{cor:characterization} (a) for splines}
We now illustrate the characterization for the effect of augmentation with different single points in 
Theorem~\ref{cor:characterization} (a) on the splines problem.
We assume the domain to $\sT$ as defined in equation~\ref{eq:defT} with $s=10$ and our training data
to be $\xstd=\{ X(t): t\in\{0,1,2,3,4\}\}$. Let $\emph{local}$ perturbations be spline features for $\tilde{t}\notin \tline$ where $\tilde{t}=t + \epsilon$ is $\epsilon$ away from some $t\in\{0,1,2,3,4\}$ from the training set.
We examine all possible single augmentation points in Figure~\ref{fig:spline_condition} (a) and plot the calculated standard error difference as defined in equation~\eqref{eq:pred_diff}.
Figure~\ref{fig:spline_condition} shows that augmenting with an additional point from $\{ X(t): t\in \tline\}$ does not affect the bias, but adding any perturbation point in $\{ X(\tilde{t}): \tilde{t}\in \{2.5, 3.5, 4.5\}\}$ where $\tilde{t}\notin \tline$ increases the error significantly by changing the direction in which the estimator extrapolates.
Particularly, \emph{local} augmentations near the boundary of the original dataset hurt the most while other augmentations do not significantly affect the bias of the augmented estimator.

\subsubsection{Local and global structure in the spline staircase}
\label{sec:splines-local-global}
\begin{figure}[t]
    \centering
      \includegraphics[scale=0.37]{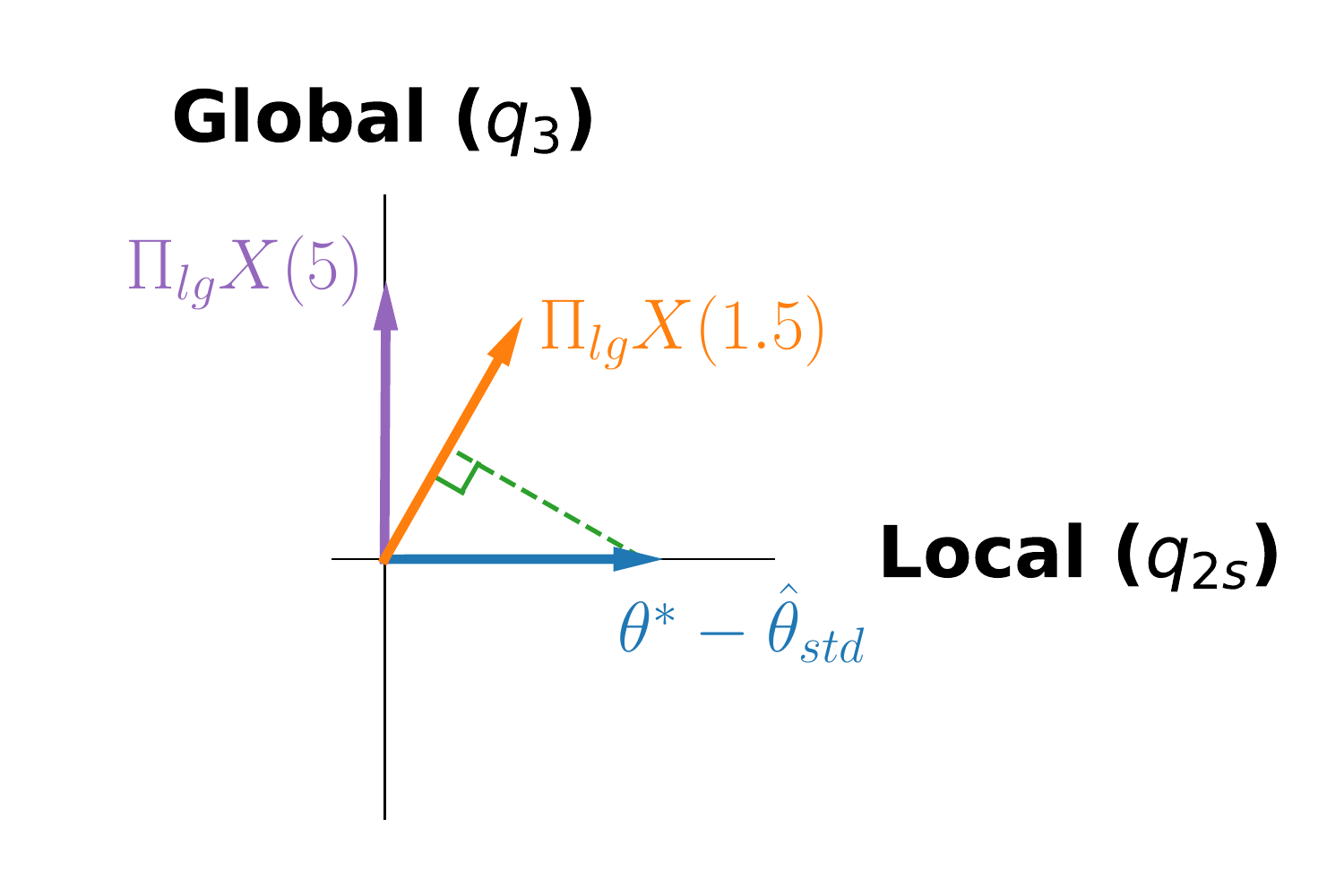}
    \caption{
      Nullspace projections onto global direction $q_3$ and local direction $q_{2s}$ in $\Null(\sigmapop)$ via $\pilocalglobal$, representing global and local eigenvectors respectively. The local perturbation $\pilocalglobal\hat{\Phi}(1.5)$ has both local and global components, creating a high-error component in the global direction.
      }
    \label{fig:K_eigenvectors_2}
\end{figure}

In the spline staircase, the local perturbations can be thought of as fitting high frequency noise in the function space, where fitting them causes a global change in the function.

To see this, we transform the problem to minimum $\ell_2$ norm linear interpolation using features $X_M(t) = X(t) M^{-1/2}$
so that the results from Section~\ref{sec:general} apply directly. Let $\sigmapop$ be the population covariance of $X_M$ for a uniform distribution over the discrete domain consisting of $s$ stairs and their perturbations (Figure~\ref{fig:spline}). Let $Q=[q_i]_{i=1}^{2s}$
be the eigenvectors of $\sigmapop$ in decreasing order of their corresponding eigenvalues. The visualization in Figure~\ref{fig:K_eigenvectors} shows that $q_i$ are wave functions in the original input space; the ``frequency'' of the wave increases as $i$ increases.

Suppose the original training set consists of two points, $\xstd = [X_M(0), X_M(1)]^\top$. We study the effect of augmenting point $\singleaug$ in terms of $q_i$ above. First, we find that the first two eigenvectors corresponding to linear functions satisfy $\pistd q_1 = \pistd q_2 = 0$. Intuitively, this is because the standard estimator is linear. For ease of visualization, we consider the 2D space in $\Null(\sigmapop)$ spanned by $\pistd q_3$ (global direction, low frequency) and $\pistd q_{2s}$ (local direction, high frequency). The matrix $\pilocalglobal=[\pistd q_3,~\pistd q_{2s}]^\top$ projects onto this space. Note that the same results hold when projecting onto all $\pistd q_i$ in $\Null(\sigmapop)$.

In terms of the simple 3-D example in Section~\ref{sec:simple-3D}, the global direction corresponds to the costly direction with large eigenvalue, as changes in global structure heavily affect the standard error. Figure~\ref{fig:K_eigenvectors_2} plots the projections $\pilocalglobal \thetatrue$ and $\pilocalglobal \xext$ for different $\xext$. 
When $\thetatrue$ has high frequency variations and is complex, $\pilocalglobal \thetatrue=(\thetatrue-\stdest)$ is aligned with the local dimension. For $\singleaug$ immediately local to training points, the projection $\pilocalglobal \singleaug$ (orange vector in Figure~\ref{fig:K_eigenvectors_2}) has both local and global components. 
Augmenting these local perturbations introduces error in the global component. For other $\singleaug$ farther from training points, $\pilocalglobal \singleaug$ (blue vector in Figure~\ref{fig:K_eigenvectors_2}) is almost entirely global and perpendicular to $\thetatrue-\stdest$, leaving bias unchanged. 
Thus, augmenting data close to original data cause estimators to fit local components at the cost of the costly global component which changes overall structure of the predictor like in Figure~\ref{fig:spline}(middle).
The choice of inductive bias in the $\minmat$--norm being minimized results in eigenvectors of $\sigmapop$ that correspond to local and global components, dictating this tradeoff.

\subsection{Data augmentation can be quite painful for splines}
\label{app:spline-err}
We construct a family of spline problems such that as the number the augmented estimator has much higher error than the standard estimator. We assume that our predictors are from the full family of cubic splines. 

\paragraph{Sampling distribution.}
We define a modified domain with continuous intervals $\sT = \cup_{t=0}^{s-1}[t, t+\epsilon]$.
Considering only $s$ which is a multiple of 2, we sample the original data set as described in
Section~\ref{sec:splinedist} with the following probability mass $w$:
\begin{align}
w(t) = \begin{cases}
\frac{1-\gamma}{s/2}  & t < s/2, \: t\in \tline\\
\frac{\gamma}{s/2}  & t \geq s/2, \: t\in \tline.
\end{cases}
\end{align}
for $\gamma\in [0,1)$. We define a probability distribution $P_{\sT}$ on $\sT$ for a random variable $T$ by setting $T = Z + S(Z)$ where  $Z\sim \text{Categorical}(w)$ and the $Z$-dependent perturbation $S(z)$ is defined as
\begin{align}
    S(z) \sim  \begin{cases}
  \text{Uniform}([z, z +\epsilon]) & \: \text{ w.p. } \delta\\
  z, & \: \text{ w.p. } 1-\delta.
\end{cases}
\end{align}
We obtain the training dataset $\xstd = \{ X(t_1), \dots,  X(t_n)\}$ by sampling $t_i \sim P_\sT$.

\paragraph{Augmenting with an interval.} 
Consider a modified augmented estimator for the splines problem, where for each point $t_i$ we augment with the entire interval $[\lfloor t_i \rfloor, \lfloor t_i \rfloor + \epsilon]$ with $\epsilon\in[0, 1/2)$ and the estimator is enforced to output $f_{\hat{\theta}}(x) = y_i = \lfloor t_i \rfloor$ for all $x$ in the  interval $[\lfloor t_i \rfloor, \lfloor t_i \rfloor + \epsilon]$.
Additionally, suppose that the ratio $s/n=O(1)$ between the number of stairs $s$ and the number of samples $n$ is constant. 

In this simplified setting, we can show that the standard error of the augmented estimator grows while the standard error of the standard estimator decays to 0.
\begin{theorem}
    \label{thm:spline-err}

Let the setting be defined as above.
Then with the choice of $\delta=\frac{\log(s^7) - \log(s^7-1)}{s}$ and $\gamma=c/s$ for a constant $c\in[0, 1)$, the ratio between standard errors is lower bounded as
\begin{align}
    \frac{R(\augest)}{R(\stdest)} = \Omega(s^2)
\end{align}
which goes to infinity as $s\rightarrow \infty$. Furthermore, $R(\stdest)\rightarrow 0$ as $s\rightarrow \infty$.
\end{theorem}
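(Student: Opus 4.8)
The plan is to analyze the two estimators separately: I will show that the standard estimator recovers the \emph{global} linear trend of the staircase and has vanishing error, while the augmented estimator is forced to fit the \emph{local} flat steps and therefore extrapolates catastrophically on the sparsely sampled second half of the domain. The reduction to minimum-$\ell_2$-norm regression via the features $X_M$ (Section~\ref{sec:general}) lets me reason directly with the minimum-second-derivative interpolant.

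First I would pin down $\stdest$. Restricted to the integer grid $\tline=\{0,\dots,s-1\}$, the staircase $\fstar(t)=\lfloor t\rfloor$ coincides with the line $t\mapsto t$, which lies in the cubic B-spline class and has zero second-derivative penalty. The calibrated choice $\delta=\frac{\log(s^7)-\log(s^7-1)}{s}=\Theta(s^{-8})$ makes the expected number of perturbed training inputs $n\delta=\Theta(s^{-7})$ vanish, so with probability $1-O(s^{-7})$ every training input lies on $\xline$, and the minimum-second-derivative interpolant of collinear data is then exactly this line. I would then bound $R(\stdest)$ directly: the line agrees with $\fstar$ at every integer, so error arises only from the within-stair test perturbations (each of mass $\delta$ and squared deviation at most $\epsilon^2$) and from the rare training-perturbation event that forces a local bump; both contributions are $O(\delta\epsilon^2)=O(s^{-8})$. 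This yields $R(\stdest)\to 0$ and, in particular, the bound $R(\stdest)=O(s^{-2})$ that I will need.

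Second, and this is the heart of the argument, I would lower-bound $R(\augest)$. The augmentation enforces $f_{\augest}\equiv\lfloor t_i\rfloor$ on the entire interval $[\lfloor t_i\rfloor,\lfloor t_i\rfloor+\epsilon]$ of every occupied stair, pinning the estimator to the correct height wherever data appears. A first-moment computation using $w$ reveals a sharp asymmetry: the first half ($t<s/2$) sees $\Theta(1)$ expected samples per stair and is densely occupied up to height $\approx s/2$, whereas with $\gamma=c/s$ the whole second half receives only $n\gamma=\Theta(1)$ samples, so with probability $\Omega(1)$ \emph{no} second-half stair is occupied. On this event the second half carries no constraints, and I would invoke the variational property of the minimum-second-derivative interpolant: on a constraint-free region the energy minimizer is linear, and since it exits the last (flat) first-half step with value $\approx s/2$ and zero slope, $C^2$-continuity forces $f_{\augest}\approx s/2$ throughout the second half. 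Hence on a second-half stair $i$ the squared error is $(i-s/2)^2$, averaging to $\Theta(s^2)$ over $i\in[s/2,s-1]$.

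Finally I would assemble the ratio. Integrating against the test distribution, the second half carries mass $\gamma=\Theta(1/s)$ and contributes $\Omega(\gamma\cdot s^2)=\Omega(s)$ to $R(\augest)$, while the first half (where the spline interpolates between nearby occupied stairs) contributes only $O(1)$ and is subdominant; thus $R(\augest)=\Omega(s)$ against $R(\stdest)=O(s^{-2})$, giving $R(\augest)/R(\stdest)=\Omega(s^2)$, and the same estimate shows $R(\stdest)\to 0$. The main obstacle is the spline lower bound: making rigorous, in the B-spline feature coordinates, the claim that the minimum-norm interpolant \emph{flattens} out rather than continuing to climb across the empty second half, and controlling how this bound degrades when a few second-half stairs do get occupied. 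I expect this to follow from the natural-spline boundary conditions together with an energy/monotonicity argument, combined with a standard concentration bound certifying that the dense-first-half, sparse-second-half occupancy pattern holds with constant probability.
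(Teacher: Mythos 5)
Your proposal is essentially the paper's own argument. The paper lower-bounds $R(\augest)$ by conditioning on exactly your event (no sample lands in the upper half of the staircase, which has probability $(1-\gamma)^n = \Omega(1)$ for $\gamma = c/s$ and $n = \Theta(s)$), observes that the zero-derivative constraint on the interval $[t^\star, t^\star+\epsilon]$ of the last occupied stair forces the minimum-second-derivative interpolant to continue flat beyond $t^\star$, and computes $\sum_{t \geq s/2} \bigl(t-(s/2-1)\bigr)^2 \cdot \frac{\gamma}{s/2} = \Omega(s)$; it upper-bounds $R(\stdest)$ by conditioning on your event that every training input lies on the grid $\tline$ (probability $(1-\delta)^n$), on which the estimator is exactly the line and the error is $O(\delta\epsilon^2)$. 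Your extra worries are unneeded: because errors are non-negative, $R(\augest) \geq P(E_1)\,R(\augest \mid E_1)$ already gives the lower bound, so no argument about "how the bound degrades when a few second-half stairs do get occupied" (nor any concentration bound about dense first-half occupancy) is required.

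The one step where your bookkeeping is off is the complementary event for the standard estimator, i.e.\ when some training input is perturbed off the grid. You assert its contribution is $O(\delta\epsilon^2)=O(s^{-8})$, but that event has probability $\approx \delta n = \Theta(s^{-7})$ and you provide no bound on the \emph{conditional} error, which a priori can be as large as $O(s^6)$ since a cubic spline can grow as $O(t^3)$ over a domain of length $s$. The paper handles this with exactly that crude worst-case bound, $O(s^6)\cdot O(s^{-7}) = O(1/s)$, so its final bound is $R(\stdest)=O(1/s)$, not $O(s^{-2})$ or $O(s^{-8})$. This gap costs you nothing for the theorem, because your stated requirement $R(\stdest)=O(s^{-2})$ is itself an arithmetic slip: with $R(\augest)=\Omega(s)$ you only need $R(\stdest)=O(1/s)$ to conclude $R(\augest)/R(\stdest)=\Omega(s^2)$ and $R(\stdest)\to 0$. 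With that one correction (bound the off-grid event crudely rather than dismissing it), your plan is a complete proof along the paper's lines.
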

\begin{proof}

We first lower bound the standard error of the augmented estimator.
Define $E_1$ as the event that only the lower half of the stairs is sampled, i.e. $\{t:t<s/2\}$, which occurs with probability $(1-\gamma)^n$.
Let $t^\star = \max_i \lfloor t_i \rfloor$ be the largest ``stair'' value seen in the training set.
Note that the min-norm augmented estimator will extrapolate with zero derivative for $t \geq \max_i \lfloor t_i\rfloor$.
This is because on the interval $[t^\star, t^\star+\epsilon]$, the augmented estimator is forced to have zero derivative, and the solution minimizing the second derivative of the prediction continues with zero derivative for all $t \geq t^\star$.
In the event $E_1$, $t^\star \leq s/2-1$, where $t^*=s/2-1$ achieves the lowest error in this event.
As a result, on the points in the second half of the staircase, i.e. $t = \{t\in\sT: t> \frac{s}{2} - 1 \}$, the augmented estimator incurs large error:
\begin{align*}
    R(\augest \mid E_1) &\geq \sum_{t=s/2}^{s} (t-(s/2-1))^2 \cdot \frac{\gamma}{s/2}\\
    &= \sum_{t=1}^{s/2} t^2 \cdot \frac{\gamma}{s/2}= \frac{\gamma}{6}(s^2+2s+1).
\end{align*}
Therefore the standard error of the augmented estimator is bounded by
\begin{align*}
    R(\augest) \geq R(\augest \mid E_1) P(E_1) &=  \frac{\gamma}{6}(s^2+2s+1) (1-\gamma)^n\\
    &\geq \frac{1}{6} \gamma (1-\gamma n)(s^2 + 2s + 1)\\
    &= \Omega(\frac{c-c^2}{s}(s^2 + 2s + 1))= \Omega(s)
\end{align*}
where in the first line, we note that the error on each interval is the same and the probability of each interval is $(1-\delta)\frac{\gamma}{s/2} + \epsilon \frac{\delta}{\epsilon}\cdot \frac{\gamma}{s/2} = \frac{\gamma}{s/2}$.

Next we upper bound the standard error of the standard estimator.
Define $E_2$ to be the event where all points are sampled from $\tline$, which occurs with probability $(1-\delta)^n$.
In this case, the standard estimator is linear and fits the points on $\tline$ with zero error, while incurring error for all points not in $\tline$.
Note that the probability density of sampling a point not in $\tline$ is either $\frac{\delta}{\epsilon}\cdot \frac{1-\gamma}{s/2}$ or $\frac{\delta}{\epsilon}\cdot \frac{\gamma}{s/2}$, which we upper bound as $\frac{\delta}{\epsilon}\cdot \frac{1}{s/2}$.
\begin{align*}
    R(\stdest \mid E_2) = \sum_{t=1}^{s-1} \frac{\delta}{\epsilon}\cdot \frac{1}{s/2} \int_0^\epsilon u^2 du 
    &= \frac{\delta}{\epsilon}\cdot \frac{1}{s/2}O(s \epsilon^3) \\
    &= O(\delta)
\end{align*}
Therefore for event $E_2$, the standard error is bounded as
\begin{align*}
    R(\stdest \mid E_2)P(E_2) &= O(\delta) (1-\delta)^n \\
    &= O(\delta)e^{-\delta n}\\
    &= O(\delta\cdot \frac{s^7-1}{s^7})\\
    &= O(\delta) = O(\frac{\log(s^7)-\log(s^7-1)}{s})=O(1/s)
\end{align*}
since $\log(s^7) - \log(s^7-1) \leq 1$ for $s\geq 2$.
For the complementary event $E_2^c$, note that cubic spline predictors can grow only as $O(t^3)$, with error at most $O(t^6)$. Therefore the standard error for case $E_2^c$ is bounded as
\begin{align*}
        R(\stdest \mid E_2^c) P(E_2^c) &\leq O(t^6)(1-e^{-\delta n})\\
        &=O(t^6)O(\frac{1}{s^7}) = O(1/s)
\end{align*}

Putting the parts together yields 
\begin{align*}
    R(\stdest)=R(\stdest \mid E_2)P(E_2)+ R(\stdest \mid E_2^c)P(E_2^c)\\ \leq O(1/s) + O(1/s) = O(1/s).
\end{align*}
Thus overall, $R(\stdest)= O(1/s)$ and combining the bounds yields the result.
\end{proof}

\section{Robust Self-Training}
\label{sec:app-rst}

We define the linear robust self-training estimator from Equation~\eqref{eqn:linear-x} and expand all the terms.
\begin{align}
    \label{eqn:linear-x-2}
    \thetarst &\in \argmin_{\theta} \Big \{
    \E_{\distribx} [(x^\top \thetainterp - x^\top \theta)^2]
    : \nonumber\\
    & \xstd \theta = \ystd, \max_{x_\text{adv} \in T(x)} (x_\text{adv}^\top\theta - y)^2 = 0 ~\forall x, y \in \xstd, \ystd,  \nonumber \\
    & \E_{\distribx}[\max_{x_\text{adv} \in T(x)} (x_\text{adv}^\top\theta - x^\top\theta)^2] = 0 \Big\}.
\end{align}
Notice that for unlabeled components of the estimator, we assume access to the data distribution $\distribx$ and thus optimize the population quantities.

As we show in the next subsection, we can rewrite the robust self-training estimator into the following reduced form, more directly connecting to the general analysis of adding extra data $\xext$ in min-norm linear regression.
\begin{align}
    \label{eqn:linear-x-3}
    \thetarst &\in \argmin_{\theta} \Big \{
        (\theta - \thetainterp)^\top \sigmapop (\theta - \thetainterp)
    : \xstd \theta = \ystd, \xext \theta = 0 \Big\}
\end{align}
for the appropriate choice of $\xext$, as shown in Section~\ref{sec:app-construct-xext-rst}.
Here, we can interpret $\xext$ as the difference between the perturbed inputs and original inputs. These are perturbations which we want the model to be invariant to, and hence output zero.

\subsection{Robust self-training algorithm in linear regression}
\label{sec:app-construct-xext-rst}

We give an algorithm for constructing $\xext$ which enforces the population robustness constraints.
Suppose we are given $\sigmapop$, the population covariance of $\distribx$.
In robust self-training, we enforce that the model is consistent over perturbations of the labeled data $\xstd$ and (infinite) unlabeled data.
To do this, we add linear constraints of the form $x_{\text{adv}}^\top \theta - x^\top \theta = 0$,
where $x_{\text{adv}}\in T(x)$ for all $x$.
We can view these linear constraints as augmenting the dataset with input-target pairs $(\singleaug, 0)$ where
$\singleaug = x_{\text{adv}} - x$. By assumption, $\singleaug^\top \theta^\star=0$ so these augmentations fit into our data augmentation framework.

However, when we enforce these constraints over the entire population $\distribx$ or when there are an infinite number of transformations in $T(x)$, a naive implementation requires augmenting with infinitely many points.
Noting that the space of augmentations $\singleaug$ satisfying $\singleaug^\top \theta^\star =0$ is a linear subspace, we can instead summarize the augmentations with a basis that spans the transformations.
Let the space of perturbations be $\sT = \cup_{x \in \supp(\distribx), x_{\text{adv}}\in T(x)} x_{\text{adv}} - x$. Note that this space of perturbations also contains perturbations of the original data $\xstd$ if $\xstd$ is in the support of $\distribx$. If $\xstd$ is not in the support of $\distribx$, the behavior of the estimator on these points do not affect standard or robust error.
Assuming that we can efficiently optimize over $\sT$, we construct the basis by an iterative procedure reminiscent of adversarial training.
\begin{enumerate}
    \item Set $t=0$. Initialize $\theta^t=\thetainterp$ and $(\xext)_0$ as an empty matrix.
    \item At iteration $t$, solve for $\singleaug^t = \argmax_{\singleaug \in \sT} (\singleaug^\top \theta^t)^2$. If the objective is unbounded, choose any $\singleaug^t$ such that $\singleaug^\top \theta^t \neq 0$. \label{alg:step-2}
    \item If ${\theta^t}^\top \singleaug^t = 0$, stop and return $(\xext)_t$.
    \item Otherwise, add $\singleaug^t$ as a row in $(\xext)_t$. Increment $t$ and let $\theta^t$ solve \eqref{eqn:linear-x-3} with $\xext=(\xext)_t$.
    \item Return to step~\ref{alg:step-2}.
\end{enumerate}
In each iteration, we search for a perturbation that the current $\theta^t$ is not invariant to. If we can find such a perturbation, we add it to the constraint set in $(\xext)_t$.
We stop when we cannot find such a perturbation, implying that the rows of $(\xext)_t$ and $\xstd$ span $\sT$. The final RST estimator solves \eqref{eqn:linear-x-3} using $\xext$ returned from this procedure.

This procedure terminates within $O(d)$ iterations. To see this, note that $\theta^t$ is orthogonal to all rows of $(\xext)_t$. Any vector in the span of $(\xext)_t$ is orthogonal to $\theta^t$. Thus, if ${\theta^t}^\top\singleaug^t \neq 0$, then $\singleaug^t$ must not be in the span of $(\xext)_t$. At most $d-\text{rank}(\xstd)$ such new directions can be added until $(\xext)_t$ is full rank. When $(\xext)_t$ is full rank, ${\theta^t}^\top \singleaug^t =0$ must hold and the algorithm terminates.

\subsection{Proof of Theorem~\ref{thm:linear-x}}
In this section, we prove Theorem~\ref{thm:linear-x}, which
we reproduce here.
\linearx*
\begin{proof}
    We work with the RST estimator in the form from Equation~\eqref{eqn:linear-x-3}.
    We note that our result applies generally to any extra data $\xext,\yext$.
    We define $\sigmastd = \xstd^\top \xstd$. 
  Let $\{u_i\}$ be an orthonormal basis of the kernel $\Null(\sigmastd +
  \xext^\top \xext)$ and $\{v_i\}$ be an orthonormal basis for
  $\Null(\sigmastd) \setminus \linspan(\{u_i\})$.  Let $U$ and $V$ be the
  linear operators defined by $U w = \sum_i u_i w_i$ and $V w = \sum_i v_i
  w_i$, respectively, noting that $U^\top V = 0$.
  Defining $\pistd \defeq (I - \sigmastd^\dagger \sigmastd)$
  to be the projection onto the null space of $\xstd$, we
  see that there are unique vectors $\rho, \alpha$
  such that
  \begin{subequations}
    \label{eqn:representations-of-three-vecs}
    \begin{equation}
      \label{eqn:burrito-bracket}
      \theta\opt = (I - \pistd) \theta\opt + U \rho + V \alpha.
    \end{equation}
    As $\thetainterp$ interpolates the standard data,
    we also have
    \begin{equation}
      \thetainterp = (I - \pistd) \theta\opt + U w + V z,
    \end{equation}
    as $\xstd U w = \xstd V z = 0$, and finally,
    \begin{equation}
      \label{eqn:represent-X-reg-theta}
      \thetarst = (I - \pistd) \theta\opt + U \rho + V \lambda
    \end{equation}
    where we note the common $\rho$ between
    Eqs.~\eqref{eqn:burrito-bracket} and~\eqref{eqn:represent-X-reg-theta}.
  \end{subequations}

  Using the representations~\eqref{eqn:representations-of-three-vecs}
  we may provide an alternative formulation for the augmented
  estimator~\eqref{eqn:linear-x-2}, using this to prove the theorem.
  Indeed, writing
  $\thetainterp - \thetarst
  = U(w - \rho) + V(z - \lambda)$,
  we immediately have that the estimator has
  the form~\eqref{eqn:represent-X-reg-theta}, with the choice
  \begin{equation*}
    \lambda = \argmin_\lambda
    \left\{ (U(w - \rho) + V(z - \lambda))^\top \sigmapop
    (U(w - \rho) + V(z - \lambda)) \right\}.
  \end{equation*}
  The optimality conditions for this quadratic imply that
  \begin{equation}
    \label{eqn:augmented-opt-conditions}
    V^\top \Sigma V (\lambda - z) = V^\top \sigmapop U(w - \rho).
  \end{equation}
  Now, recall that the standard error of a vector $\theta$
  is $R(\theta) = (\theta - \theta\opt)^\top \sigmapop (\theta - \theta\opt)
  = \norm{\theta - \theta\opt}^2_\sigmapop$, using Mahalanobis norm notation.
  In particular,
  a few quadratic expansions yield
  \begin{align}
    \lefteqn{R(\thetainterp) - R(\thetarst)} \nonumber \\
    & = \norm{U(w - \rho) + V(z - \alpha)}_\sigmapop^2
    - \norm{V(\lambda - \alpha)}_\sigmapop^2 \nonumber \\
    & =
    \norm{U(w - \rho) + V z}_\sigmapop^2
    + \norm{V \alpha}_\sigmapop^2
    - 2 (U(w - \rho) + V z)^\top \sigmapop V \alpha
    - \norm{V \lambda}_\sigmapop^2 - \norm{V \alpha}_\sigmapop^2
    + 2 (V \lambda)^\top \sigmapop V \alpha \nonumber \\
    & \stackrel{(i)}{=}
    \norm{U(w - \rho) + V z}_\sigmapop^2
    - 2 (V \lambda)^\top \sigmapop V \alpha
    - \norm{V\lambda}_\sigmapop^2 + 2 (V\lambda)^\top V \alpha \nonumber \\
    & = \norm{U(w - \rho) + Vz}_\sigmapop^2 - \norm{V \lambda}_\sigmapop^2,
    \label{eqn:simplified-risk-gap-interpolation}
  \end{align}
  where step~$(i)$ used that $(U(w - \rho))^\top \sigmapop V
  = (V(\lambda - z))^\top \sigmapop V$ from the optimality
  conditions~\eqref{eqn:augmented-opt-conditions}.

  Finally, we consider the rightmost term in
  equality~\eqref{eqn:simplified-risk-gap-interpolation}.  Again using the
  optimality conditions~\eqref{eqn:augmented-opt-conditions},
  we have
  \begin{align*}
    \norm{V\lambda}_\sigmapop^2
    & = \lambda^\top V^\top
    \sigmapop^{1/2} \sigmapop^{1/2}
    (U (w - \rho) + V z)
    \le \norm{V\lambda}_\sigmapop
    \norm{U(w - \rho) + V z}_\sigmapop
  \end{align*}
  by Cauchy-Schwarz.
  Revisiting equality~\eqref{eqn:simplified-risk-gap-interpolation},
  we obtain
  \begin{align*}
    R(\thetainterp) - R(\thetarst)
    & = \norm{U(w - \rho) + V z}_\sigmapop^2
    - \frac{\norm{V\lambda}_\sigmapop^4}{
      \norm{V \lambda}_\sigmapop^2} \\
    & \ge \norm{U(w - \rho) + V z}_\sigmapop^2
    - \frac{\norm{V \lambda}_\sigmapop^2 \norm{U(w - \rho) + V z}_\sigmapop^2}{
      \norm{V \lambda}_\sigmapop^2} = 0,
  \end{align*}
  as desired.

  Finally, we show that $\stderr(\thetarst) = \roberr(\thetarst)$.
  Here, choose $\xext$ to contain at most $d$ basis vectors which span $\{x_{\text{adv}}:~x_{\text{adv}}\in T(x),\forall x \in \supp(\distribx)\}$.
  Thus, the robustness constraint $\E_{\distribx}[\max_{x_{\text{adv}}\in T(x)} (x_{\text{adv}}^\top \thetarst - x^\top \thetarst)]=0$ is satisfied by fitting $\xext$.   
  By fitting $\xext$, we thus have
  $x_{\text{adv}}^\top\thetarst - x^\top\thetarst = 0$ for all $x_{\text{adv}}\in T(x), x\in\supp(\distribx)$ up to a measure zero set of $x$.
  Thus, the robust error is
  \begin{align*}
      \roberr(\thetarst) &= \E_{\distribx}[\max_{x_{\text{adv}} \in T(x)} (x_{\text{adv}}^\top \thetarst - x_{\text{adv}}^\top \theta^\star)^2]
                         = \E_{\distribx}[(x^\top \thetarst - x^\top \theta)] = \stderr(\thetarst)
  \end{align*}
  where we used that $x_{\text{adv}}^\top\theta^\star = x^\top \theta^\star$ by assumption. Since $\roberr(\thetarst) \geq \stderr(\thetarst)$, $\thetarst$ has perfect consistency, achieving the lowest possible robust error (matching the standard error).
\end{proof}

\subsection{Different instantiations of the general RST procedure}
\label{sec:app-xreg-robustness}
The general RST estimator (Equation~\ref{eqn:general-x}) is simply a weighted combination of some standard loss and some robust loss on the labeled and unlabeled data.
Throughout, we assume the same notation as that used in the definition of the general estimator. $\xstd, \ystd$ denote the standard training set and we have access to $m$ unlabeled points $\tilde{x}_i, i = 1, \hdots m$.

\subsubsection{Projected Gradient Adversarial Training}
In the first variant, RST + PG-AT, we use multiclass logistic loss (cross-entropy) as the standard loss. The robust loss is the maximum cross-entropy loss between any perturbed input (within the set of tranformations $T(\cdot)$) and the label (pseudo-label in the case of unlabeled data). We set the weights such that the estimator can be written as follows. 
\begin{align}
  \label{eqn:pg-at}
  \hat{\theta}_\text{rst+pg-at}
  \defeq \argmin_{\theta} \bigg\{
  &\frac{1-\lambda}{n} \!\!\!\!\!\!\!\!\! ~~~~~~~~~\sum_{(x, y) \in [\xstd, \ystd]}(1 - \beta) \ell(f_\theta(x), y) + \beta~\ell(f_\theta(\xadv), y) \nonumber \\
  &+ \frac{\lambda}{m} \sum_{i=1}^m
 (1 - \beta) \ell(f_\theta(\tilde{x}_i), f_{\stdest}(\tilde{x}_i)) + \beta~\ell(f_\theta({{}\tilde{x}_{\text{adv}}}_i), f_{\stdest}(\tilde{x}_i))
  \bigg\}, 
\end{align}
In practice, ${\xadv}$ is found by performing a few steps of projected gradient method on $\ell(f_\theta(x), y)$, and similarly ${\tilde{x}}_{{\text{adv}}}$ by performing a few steps of projected gradient method on $\ell(f_\theta(\tilde{x}), f_{\stdest}(\tilde{x}))$. 

\subsubsection{TRADES}
TRADES~\citep{zhang2019theoretically} was proposed as a modification of the projected gradient adversarial training algorithm of~\citep{madry2018towards}. The robust loss is defined slightly differently--it -operates on the normalized logits, which can be thought of as probabilities of different labels. The TRADES loss minimizes the maximum KL divergence between the probability over labels for input $x$ and a perturbaed input $\tilde{x} \in T(x)$. Setting the weights of the different loss of the general RST estimator~\refeqn{general-x} similar to RST+PG-AT above gives the following estimator. 
\begin{align}
  \label{eqn:trades}
  \hat{\theta}_\text{rst+trades}
  \defeq \argmin_{\theta} \bigg\{
  &\frac{(1 - \lambda)}{n} \!\!\!\!\!\!\!\!\!~~~~~~~~~~\sum_{(x, y) \in [\xstd, \ystd]} \ell(f_\theta(x), y) + \beta~KL(p_\theta(\xadv) || p_\theta(x)) \nonumber \\
  &+ \frac{\lambda}{m} \sum_{i=1}^m
\ell(f_\theta(\tilde{x}_i), f_{\stdest}(\tilde{x}_i)) + \beta~KL(p_\theta({{}\tilde{x}_{\text{adv}}}_i) || p_{\stdest}(\tilde{x}_i))
  \bigg\}.
\end{align}
In practice, ${\xadv}$ and ${\tilde{x}}_{{\text{adv}}}$ are obtained by performing a few steps of projected gradient method on the respective KL divergence terms. 

\section{Experimental Details}
\label{app:experiments}

\subsection{Spline simulations}
For spline simulations in Figure~\ref{fig:spline} and Figure~\ref{fig:sample-size}, we implement the optimization of the standard and robust objectives using the basis described in~\citep{friedman2001elements}. The penalty matrix $\minmat$ computes second-order finite differences of the parameters $\theta$. We solve the min-norm objective directly using CVXPY~\citep{diamond2016cvxpy}. 
Each point in Figure~\ref{fig:sample-size}(a) represents the average standard error over 25 trials of randomly sampled training datasets between $22$ and $1000$ samples. Shaded regions represent 1 standard deviation.

\subsection{RST experiments}

We evaluate the performance of RST applied to $\ell_\infty$ adversarial perturbations, adversarial rotations, and random rotations.

\subsubsection{Subsampling \cifar}
\label{app:subsample-cifar}

We augment with $\ell_\infty$ adversarial perturbations of various sizes. In each epoch, we find the augmented examples via Projected Gradient Ascent on the multiclass logistic loss (cross-entropy loss) of the incorrect class. Training the augmented estimator in this setup uses essentially the adversarial training procedure of~\citep{madry2018towards}, with equal weight on both the "clean" and adversarial examples during training.

We compare the standard error of the augmented estimator with an estimator trained using RST. We apply RST to adversarial training algorithms in \cifar~ using 500k unlabeled examples sourced from Tiny Images, as in~\citep{carmon2019unlabeled}.

We use Wide ResNet 40-2 models~\citep{zagoruyko2016wide} while varying the number of samples in \cifar. We sub-sample CIFAR-10 by factors of $\{1, 2, 5, 8, 10, 20, 40\}$ in Figure~\ref{fig:sample-size}(a) and $\{1, 2, 5, 8, 10\}$ in Figure~\ref{fig:sample-size}(b).
We report results averaged from 2 trials for each sub-sample factor.
All models are trained for 200 epochs with respect to the size of the labeled training dataset and all achieve almost 100\% standard and robust training accuracy.

We evaluate the robustness of models to the strong PGD-attack with $40$ steps and $5$ restarts. In Figure~\ref{fig:sample-size}(b), we used a simple heuristic to set the regularization strength on unlabeled data $\lambda$ in Equation~\eqref{eqn:pg-at} to be $\lambda=\min(0.9, p)$ where $p\in[0, 1]$ is the fraction of the original \cifar~dataset sampled. We set $\beta=0.5$.
Intuitively, we give more weight to the unlabeled data when the original dataset is larger, meaning that the standard estimator produces more accurate pseudo-labels.

\begin{figure}[t]
    \centering
    \begin{subfigure}{0.49\textwidth}
      \includegraphics[scale=0.37]{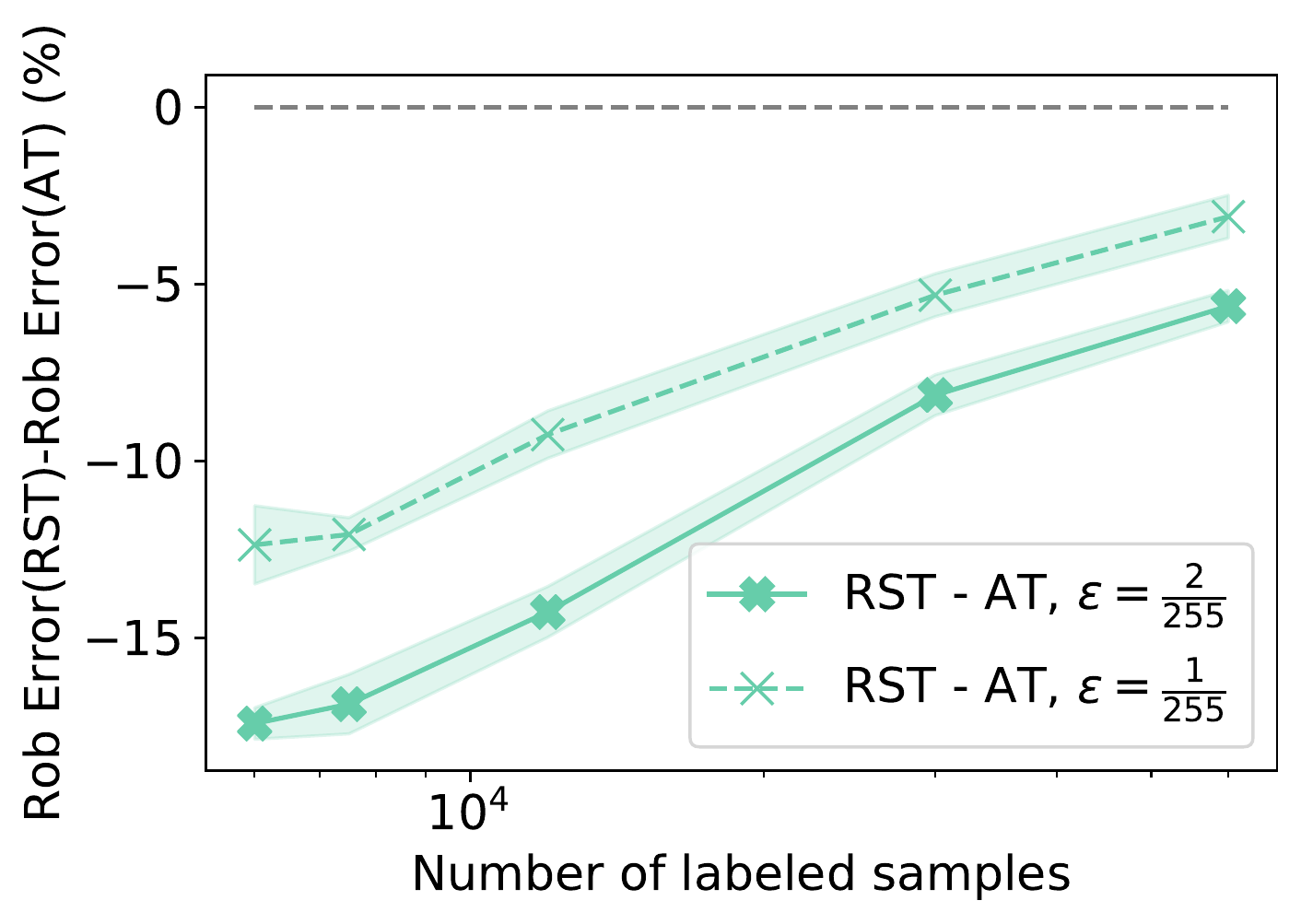}
      \caption{Robust error, \cifar}
    \end{subfigure}
    \begin{subfigure}{0.49\textwidth}
      \includegraphics[scale=0.33]{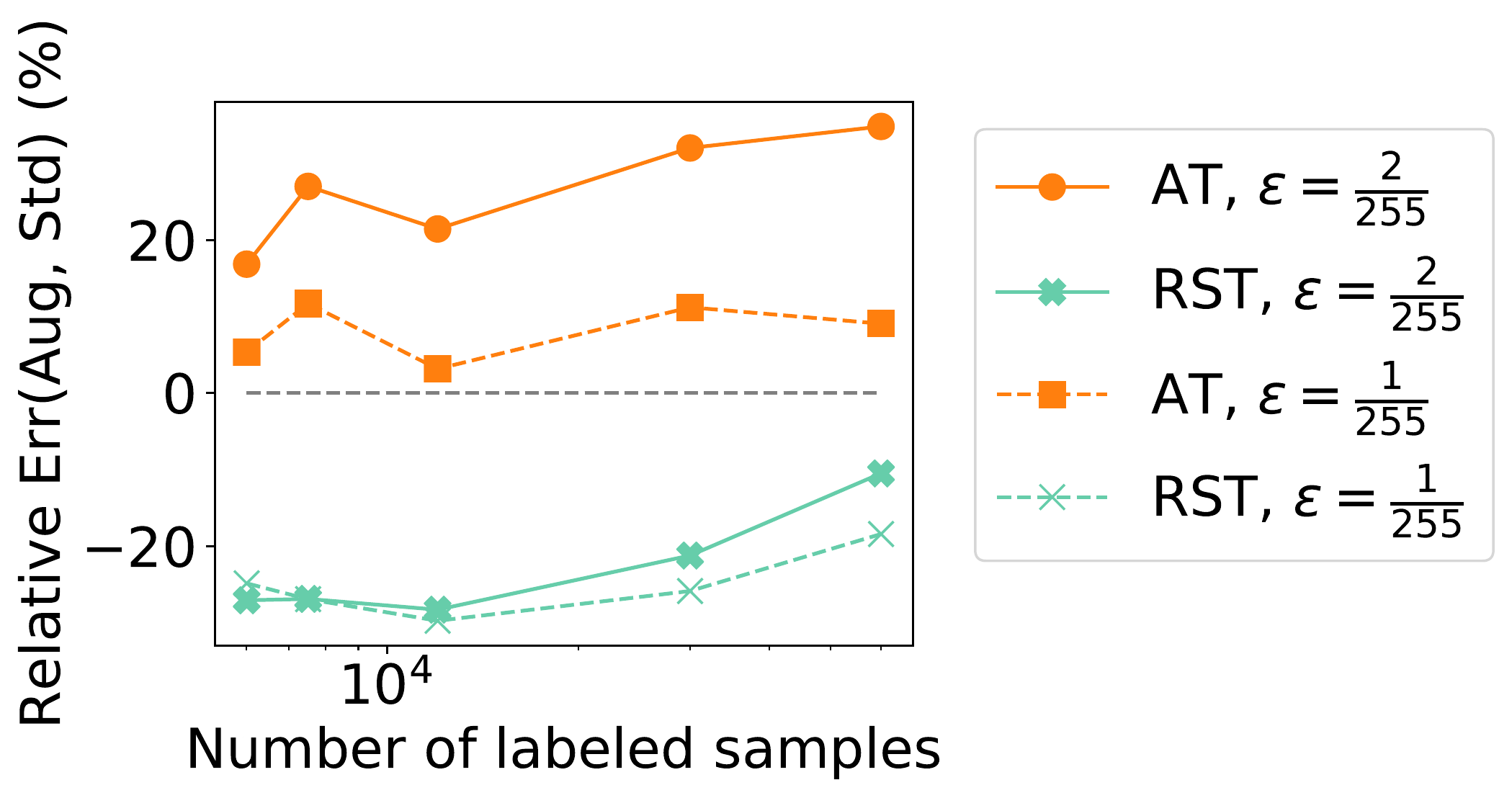}
      \caption{Relative standard error, \cifar}
    \end{subfigure}
    \caption{
      \textbf{(a)}
      Difference in robust error between the RST adversarial training model and the vanilla adversarial training (AT) model for \cifar. RST improves upon the robust error of the AT model by approximately a 15\% percentage point increase for small subsamples and 5\% percentage point increase for larger subsamples of \cifar.
      \textbf{(b)} Relative difference in standard error between augmented estimators (the RST model and the AT model) and the standard estimator on \cifar. We achieve up to 20\% better standard error than the standard model for small subsamples.
    }
    \label{fig:cifar-rob-acc}
\end{figure}

\begin{table*}[t]

\centering
\begin{tabular}{c c c c}
  & Standard & AT  & RST+AT\\
  \midrule
  Standard Acc & 94.63\% & 94.15\% & \textbf{95.58\%}\\
  Robust Acc ($\epsilon=1/255$)  & - & 85.59\% & \textbf{88.74\%}
\end{tabular}

\caption{Test accuracies for the standard, vanilla adversarial training (AT), and AT with RST for $\epsilon=1/255$ on the full \cifar~dataset. Accuracies are averaged over two trials. The robust accuracy of the standard model is near 0\%.}
\label{tab:xreg-numbers}
\end{table*}

\begin{table*}[t]
  \begin{center}
\begin{tabular}{c c c c}
  & Standard & AT & RST+AT\\
  \midrule
  Standard Acc & 94.63\% & 92.69\% & \textbf{95.15\%}\\
  Robust Acc ($\epsilon = 2/255$) & - & 77.87\% & \textbf{83.50\%}
\end{tabular}
\end{center}
\caption{Test accuracies for the standard, vanilla adversarial training (AT), and AT with RST for $\epsilon=2/255$ on the full \cifar~dataset. Accuracies are averaged over two trials. The robust test accuracy of the standard model is near 0\%. \label{tab:xreg-numbers-2}}
\end{table*}

Figure~\ref{fig:cifar-rob-acc} shows that the robust accuracy of the RST model improves about 5-15\% percentage points above the robust model (trained using PGD adversarial training) for all subsamples, including the full dataset (Tables~\ref{tab:xreg-numbers},\ref{tab:xreg-numbers-2}).

We use a smaller model due to computational constraints enforced by adversarial training. Since the model is small, we could only fit adversarially augmented examples with small $\epsilon = 2/255$, while existing baselines use $\epsilon=8/255$. Note that even for $\epsilon = 2/255$, adversarial data augmentation leads to an increase in standard error. We show that RST can fix this. While ensuring models are robust is an important goal in itself, in this work, we view adversarial training through the lens of covariate-shifted data augmentation and study how to use augmented data without increasing standard error. We show that RST preserves the other benefits of some kinds of data augmentation like increased robustness to adversarial examples.

\subsubsection{$\ell_\infty$ adversarial perturbations}
\label{sec:app-linf_adv}
In Table~\ref{table:adv-results}, we evaluate RST applied to PGD and TRADES adversarial training.
The models are trained on the full \cifar~ dataset, and models which use unlabeled data (self-training and RST) also use 500k unlabeled examples from Tiny Images.
All models except the Interpolated AT and Neural Architecture Search model use the same base model WideResNet 28-10.
To evaluate robust accuracy, we use a strong PGD-attack with $40$ steps and $5$ restarts against $\ell_\infty$ perturbations of size $8/255$.
For RST models, we set $\beta = 0.5$ in Equation~\eqref{eqn:pg-at} and Equation~\eqref{eqn:trades}, following the heuristic $\lambda=\min(0.9, p)$ with $p=1$ since we use the entire labeled trainign set. 
We train for 200 epochs such that 100\% training standard accuracy is attained.

\subsubsection{Adversarial and random rotation/translations}

In Table~\ref{table:adv-results} (right), we use RST for adversarial and random rotation/translations, denoting these transformations as $x_\text{adv}$ in Equation~\eqref{eqn:pg-at}.
The attack model is a grid of rotations of up to 30 degrees and translations of up to $\sim10\%$ of the image size.
The grid consists of 31 linearly spaced rotations and 5 linearly spaced translations in both dimensions.
The Worst-of-10 model samples 10 uniformly random transformations of each input and augment with the one where the model performs the worst (causes an incorrect prediction, if it exists).
The Random model samples 1 random transformation as the augmented input.
All models (besides cited models) use the WRN-40-2 architecture and are trained for 200 epochs. We use the same hyperparameters $\lambda,\beta$ as in \ref{sec:app-linf_adv} for Equation~\eqref{eqn:pg-at}.

\section{Comparison to standard self-training algorithms}
\label{sec:app-related}
The main objective of RST is to allow to perform robust training without sacrificing standard accuracy. This is done by regularizing an augmented estimator to provide labels close to a standard estimator on the unlabeled data. This is closely related to but different two broad kinds of semi-supervised learning.
\begin{enumerate}
\item Self-training (pseudo-labeling): Classical self-training does not deal with data augmentation or robustness. We view RST as a a generalization of self-training in the context of data augmentations. Here the pseudolabels are generated by a standard non-augmented estimator that is \emph{not} trained on the labeled augmented points. In contrast, standard self-training would just use all labeled data to generate pseudo-labels. However, since some augmentations cause a drop in standard accuracy, and hence this would generate worse pseudo-labels than RST.

\item Robust consistency training: Another popular semi-supervised learning strategy is based on enforcing consistency in a model's predictions across various perturbations of the unlabeled data~\citep{miyato2018virtual, xie2019unsupervised, sajjadi2016regularization, laine2017temporal}). RST is similar in spirit, but has an additional crucial component. We generate pseudo-labels first by performing standard training, and rather than enforcing simply consistency across perturbations, RST enforces that the unlabeled data and perturbations are matched with the pseudo-labels generated.
  
\end{enumerate}

\section{Minimum $\ell_1$-norm problem where data augmentation hurts standard error}
\label{app:l1_problem}

We present a problem where data augmentation increases standard error for minimum $\ell_1$-norm estimators, showing that the phenomenon is not special to minimum Mahalanobis norm estimators.

\subsection{Setup in 3 dimensions}
Define the minimum $\ell_1$-norm estimators
\begin{align}
  \stdest &= \arg \min \limits_{\theta} \Big \{ \| \theta \|_1 : \xstd \theta = \ystd \Big \} \nonumber\\
  \augest &= \arg \min \limits_{\theta} \Big \{ \| \theta \|_1 : \xstd \theta = \ystd, \xext \theta = \yext \Big \}.  \nonumber
\end{align}

We begin with a 3-dimensional construction and then increase the number of dimensions. Let the domain of possible values be $\sX=\{\xone, \xtwo ,\xthree\}$ where
\begin{align*}
    \xone = [1+\delta, 1, 0], ~~~ \xtwo  = [0, 1, 1+\delta],~~~ \xthree = [1+\delta, 0, 1].
\end{align*}
Define the data distribution through the generative process for the random feature vector $\bx$
\begin{align*}
\bx = 
\begin{cases}
    \xone & \text{w.p. } 1-p\\
    \xtwo & \text{w.p. } \epsilon\\
    \xthree & \text{w.p. } p-\epsilon
\end{cases}
\end{align*}
where $0<\delta<1$ and $\epsilon > 0$.
Define the optimal linear predictor $\theta^\star = \mathbf{1}$ to be the all-ones vector, such that in all cases, $\bx^\top \theta^\star = 2+\delta$.
We define the consistent perturbations as
\begin{align*}
    T(x) = 
\begin{cases}
    \{\xone, \xtwo \} & x \in \{\xone,\xtwo \}\\
    \{\xthree\} & o.w.
\end{cases}
\end{align*}
The augmented estimator will add all possible consistent perturbations of the training set as extra data $\xext$. For example, if $\xone$ is in the training set, then the augmented estimator will add $\xtwo$ as extra data since $\xtwo \in T(\xone)$.
The standard error is measured by mean squared error.

We give some intuition for how augmentation can hurt standard error in this 3-dimensional example.
Define $E_1$ to be the event that we draw $n$ samples with value $\xone$. Given $E_1$, the standard and augmented estimators are
\begin{align}
    \stdest = \left[\frac{2+\delta}{1+\delta},  0, 0\right], ~~~ \augest =[0, 2+\delta, 0].
\end{align}
Note that the $\augest$ has slightly higher norm ($\|\augest\|_1 = 2+\delta > \frac{2+\delta}{1+\delta} = \|\stdest\|_1$).
Since $\xthree^\top \augest=0$ in this case, the squared error of $\augest$ wrt to $\xthree$ is $(\xthree^\top \augest - 2+\delta)^2 = (2+\delta)^2$.
The standard estimator fits $\xthree$ perfectly, but has high error on $\xtwo$.
If the probability of $E_1$ occurring is high and the probability of $\xthree$ is higher relative to $\xtwo$, then the $\augest$ will have high standard error relative to $\stdest$.
Here, due to the inductive bias that minimizes the $\ell_1$ norm, certain augmentations can cause large changes in the sparsity pattern of the solution, drastically affecting the error.
Furthermore, the optimal solution $\theta^\star$ is quite large with respect to the $\ell_1$ norm, satisfying the conditions of Proposition~\ref{prop:simple-complex} in spirit and suggesting that the $\ell_1$ inductive bias (promoting sparsity) is mismatched with the problem.

\subsection{Construction for general $d$}

We construct the example by sampling $\bx$ in 3 dimensions and then repeating the vector $d$ times. In particular, the samples are realizations of the random vector $[\bx; \bx; \bx; \dots; \bx]$
which have dimension $3d$ and every block of 3 coordinates have the same values. Under this setup, we can show that there is a family of problems such that the difference between standard errors of the augmented and standard estimators grows to infinity as $d,n\rightarrow\infty$.
\begin{theorem}
Let the setting be defined as above, where the dimension $d$ and number of samples $n$ are such that $n/d\rightarrow \gamma$ approaches a constant.
Let $p=1/d^2$, $\epsilon=1/d^3$, and $\delta$ be a constant.
Then the ratio between standard errors of the augmented and standard estimators grows as
\begin{align}
    \frac{\stderr(\augest)}{\stderr(\stdest)} = \Omega(d)
\end{align}
as $d,n\rightarrow \infty$.
\end{theorem}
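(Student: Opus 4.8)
The plan is to reduce the $3d$-dimensional problem to the $3$-dimensional computation from the previous subsection, and then to control the expected standard errors of the two estimators over the random training set by a conditioning argument.

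First I would exploit the block structure. Since every feature vector has the form $[\bx;\bx;\dots;\bx]$, writing $\theta=(\theta^{(1)},\dots,\theta^{(d)})$ in blocks of size three and $\bar\theta:=\sum_{j=1}^d\theta^{(j)}$, every prediction equals $[\bx;\dots;\bx]^\top\theta=\bx^\top\bar\theta$, so all interpolation constraints and the standard error depend on $\theta$ only through $\bar\theta$. Moreover $\|\theta\|_1=\sum_j\|\theta^{(j)}\|_1\ge\|\bar\theta\|_1$ with equality attainable by placing all mass in one block, so minimizing $\|\theta\|_1$ subject to the constraints is equivalent to minimizing $\|\bar\theta\|_1$ subject to the corresponding $3$-dimensional constraints. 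Thus the block-sums of $\stdest$ and $\augest$ equal the $3$-dimensional minimum-$\ell_1$ solutions, and their errors coincide with the $3$-dimensional estimators already analyzed. I would also record that $\theta^\star=\tfrac1d\mathbf{1}$ has $\bar\theta=[1,1,1]$, which interpolates every datapoint, so $\|\bar\theta\|_1\le 3$ for any training set --- a uniform bound used to control rare events.

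Next I would condition on the draw of the training set. Let $E_1$ be the event that all $n$ samples equal $\xone$, so $P(E_1)=(1-p)^n=(1-1/d^2)^{\gamma d}\to 1$. On $E_1$, the $3$-dimensional computation gives $\bar\theta_{\mathrm{std}}=[\tfrac{2+\delta}{1+\delta},0,0]$ (error only on $\xtwo$) and, after adding the consistent perturbation $\xtwo\in T(\xone)$, $\bar\theta_{\mathrm{aug}}=[0,2+\delta,0]$ (error only on $\xthree$). Restricting to $E_1$ by nonnegativity of the error then yields $\stderr(\augest)\ge P(E_1)(p-\epsilon)(2+\delta)^2=\Omega(1/d^2)$, since $p-\epsilon=(d-1)/d^3=\Theta(1/d^2)$ and $P(E_1)\to1$.

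For the upper bound on $\stderr(\stdest)$ I would first discard the event that $\xone$ is absent: its probability $p^n$ is super-polynomially small and the uniform bound $\|\bar\theta\|_1\le3$ caps the error at $O(1)$, so it contributes $o(1/d^3)$. Conditioned on $\xone$ being present I split on whether $\xtwo$ appears. If $\xtwo$ is absent, the constraints from $\xone$ and possibly $\xthree$ make $\bar\theta_{\mathrm{std}}=[\tfrac{2+\delta}{1+\delta},0,0]$ the minimum-$\ell_1$ solution (which I would verify by a one-parameter vertex comparison using $\delta<1$), whose only error is on $\xtwo$, giving conditional error $\epsilon(2+\delta)^2=O(1/d^3)$. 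If $\xtwo$ appears---an event of probability at most $n\epsilon=O(1/d^2)$ by a union bound---then fitting $\xone$ and $\xtwo$ gives $[0,2+\delta,0]$ with error at most $(p-\epsilon)(2+\delta)^2=O(1/d^2)$ (and zero once $\xthree$ is also seen), contributing $O(1/d^2)\cdot O(1/d^2)=O(1/d^4)$. Hence $\stderr(\stdest)=O(1/d^3)$, and combining with the lower bound gives $\stderr(\augest)/\stderr(\stdest)=\Omega(1/d^2)/O(1/d^3)=\Omega(d)$. The $3$-dimensional linear-program computations are routine and inherited from the previous subsection; the main obstacle is the upper bound on $\stderr(\stdest)$, where one must pair the small probability of each rare event (in which $\xtwo$ or $\xthree$ appears) with a correspondingly small conditional error instead of the crude $O(1)$ bound, using in particular that $\xone$ is present with overwhelming probability so that the standard estimator still fits almost all the mass.
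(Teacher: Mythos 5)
Your proposal is correct, and while it uses the same central construction as the paper, it handles the randomness over the training set in a genuinely different way. Both arguments condition on the event $E_1$ that all $n$ samples equal $\xone$ and rely on the same three-dimensional minimum-$\ell_1$ computations (standard solution $[\tfrac{2+\delta}{1+\delta},0,0]$ versus augmented solution $[0,2+\delta,0]$). The paper, however, never upper-bounds the standard estimator's error: it decomposes the ratio as $\sum_{E\in\{E_1,E_1^c\}}P(E)\,\stderr(\augest\mid E)/\stderr(\stdest\mid E)$, drops the complement term by nonnegativity, and concludes from the conditional ratio $(p-\epsilon)/\epsilon=d-1$ on $E_1$ alone; this is really a bound on the \emph{expectation of the ratio}. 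You instead bound the \emph{ratio of expectations}: a lower bound $\stderr(\augest)\geq P(E_1)(p-\epsilon)(2+\delta)^2=\Omega(1/d^2)$, and an upper bound $\stderr(\stdest)=O(1/d^3)$ obtained by pairing each rare event ($\xone$ absent, or $\xtwo$ present) with a correspondingly small conditional error rather than a crude $O(1)$ bound. This costs you the extra case analysis, but it buys two things. First, the ratio-of-expectations reading is the one the paper itself proves for the analogous spline result (Theorem~\ref{thm:spline-err}), where numerator and denominator are bounded separately, so your version is arguably the intended statement; note that the paper's displayed decomposition is not a valid identity for a ratio of expectations, and read as an expected ratio it is delicate on the positive-probability event that all three points appear, where both errors vanish and the ratio is $0/0$ --- issues your route avoids entirely. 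Second, your explicit block-sum reduction (predictions depend only on $\bar\theta=\sum_j\theta^{(j)}$ and $\|\theta\|_1\geq\|\bar\theta\|_1$ with equality attainable) justifies the step the paper asserts without proof, namely that the $3d$-dimensional estimators behave like the repeated three-dimensional ones; in fact the repeated vector is not the unique minimizer, but every minimizer has the same block sum, which is all that the error computation needs. The verifications you defer (the one-parameter vertex comparisons using $\delta<1$, and uniqueness of the block sums in each case) are routine and check out, and the normalization difference (your $\theta^\star=\tfrac{1}{d}\mathbf{1}$ versus the paper's implicit all-ones vector, which scales both errors by $d^2$) does not affect the ratio.
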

\begin{proof}
We define an event where the augmented estimator has high error relative to the standard estimator and bound the ratio between the standard errors of the standard and augmented estimators given this event.
Define $E_1$ as the event that we have $n$ samples where all samples are $[\xone; \xone; \dots; \xone]$.
The standard and augmented estimators are the corresponding repeated versions
\begin{align}
    \stdest = \left[\frac{2+\delta}{1+\delta},  0, 0, \dots , \frac{2+\delta}{1+\delta},  0, 0\right], ~~~
    \augest = [0, 2+\delta, 0, \dots, 0, 2+\delta, 0].
\end{align}
The event $E_1$ occurs with probability $(1-p)^n + (p-\epsilon)^n$.
It is straightforward to verify that the respective standard errors are
\begin{align*}
    \stderr(\stdest \mid E_1) = \epsilon d^2(2+\delta)^2,~~~\stderr(\augest \mid E_1) = (p-\epsilon)d^2(2+\delta)^2
\end{align*}
and that the ratio between standard errors is
\begin{align*}
    \frac{ \stderr(\augest \mid E_1) }{\stderr(\stdest \mid E_1)} = \frac{p-\epsilon}{\epsilon}.
\end{align*}
The ratio between standard errors is bounded by
\begin{align*}
    \frac{ \stderr(\augest)}{\stderr(\stdest)} &= \sum_{E\in \{E_1, E_1^c\}} P(E) \frac{\stderr(\augest \mid E)}{\stderr(\stdest \mid E)}\\
                                              &>  P(E_1) \frac{\stderr(\augest \mid E_1)}{\stderr(\stdest \mid E_1)}\\
                                              &= ((1-p)^n + (p-\epsilon)^n)(\frac{p-\epsilon}{\epsilon})\\
                                              &> (1-p)^n(d-1)\\
                                              &\geq (1-\frac{n}{d^3})(d-1)\\
                                              &= d-\frac{n}{d^2}-1+\frac{n}{d^3} = \Omega(d)
\end{align*}
as $n,d\rightarrow \infty$, where we used Bernoulli's inequality in the second to last step.
\end{proof}

\end{document}